\documentclass[12pt]{article}

\usepackage{amssymb, amsmath, amsthm, amsfonts}
\usepackage{multirow}
\usepackage{booktabs}
\usepackage{subfigure}
\usepackage[authoryear]{natbib}
\usepackage[colorlinks,citecolor=blue,urlcolor=blue]{hyperref}
\usepackage{graphicx}
\usepackage{newtxtext}
\usepackage{epsfig,latexsym,verbatim}
\usepackage{bm}
\usepackage{mathrsfs}
\usepackage{multicol}
\usepackage{enumitem}
\usepackage{makecell}
\usepackage[figuresright]{rotating}
\usepackage{xr}
\graphicspath{{figures/}}
\usepackage{enumitem}
\usepackage{stackengine}
\usepackage{booktabs}
\usepackage{subfigure}
\usepackage{graphics}
\usepackage{lmodern}
\usepackage{bm}
\usepackage{tikz-cd}
\usepackage{comment}
\usepackage{algorithm,algpseudocode}
\usepackage{algorithmicx}
\usepackage{appendix}
\usepackage{float}
\usepackage{tcolorbox}
\newfloat{algorithm}{t}{lop}
\newcommand{\blind}{0}

\theoremstyle{plain}

\newtheorem{theorem}{Theorem}

\newtheorem{assumption}{Assumption}
\newtheorem{proposition}{Proposition}
\newtheorem{lemma}{Lemma}[section]
\newtheorem{remark}{Remark}

\theoremstyle{remark}
\newtheorem{definition}[theorem]{Definition}

\newcommand{\m}{{\mathcal M}}

\newcommand{\CO}{{\mathcal O}}

\newcommand{\R}{{\mathbb R}}
\newcommand{\Ric}{\operatorname{Ric}}
\newcommand{\diver}{\operatorname{div}}
\newcommand{\inj}{\operatorname{inj}}
\newcommand{\Exp}{\operatorname{Exp}}
\newcommand{\Log}{\operatorname{Log}}
\newcommand{\Sec}{\operatorname{sec}}

\newcommand{\Cut}{\operatorname{Cut}}

\begin{document}

\def\spacingset#1{\renewcommand{\baselinestretch}%
{#1}\small\normalsize} \spacingset{1}

\date{}
\if0\blind
{
  \title{\bf Geometric R\'enyi Differential Privacy: Ricci Curvature Characterized by Heat Diffusion Mechanisms}
  \author{Xiaotian Chang\footnotemark[1] \footnotemark[3],$\,$ Yangdi Jiang\footnotemark[1] \footnotemark[3], $\,$ Cyrus Mostajeran \footnotemark[1] and$\,$ Qirui Hu\footnotemark[2] \footnotemark[4] \hspace{.2cm}}
  \maketitle
  \renewcommand{\thefootnote}{\fnsymbol{footnote}}

  \footnotetext[1]{Nanyang Technology University}
  \footnotetext[2]{Shanghai University of Finance and Economics}
  \footnotetext[3]{Co-first authors. These authors contributed equally to this work.}
  \footnotetext[4]{Corresponding author: huqirui@mail.shufe.edu.cn}
}

\if1\blind
{
  \bigskip
  \bigskip
  \bigskip
  \begin{center}
    { \bf Geometric R\'enyi Differential Privacy: Ricci Curvature Characterized by Heat Diffusion Mechanisms}
  \end{center}
  \medskip
} \fi

\bigskip

\abstract{In this paper, we develop a novel privacy mechanism for Riemannian manifold-valued data. Our key contribution lies in uncovering unexpected connections among geometric analysis, heat diffusion models, and differential privacy (DP). We characterize the R\'{e}nyi divergence via dimension-free Harnack inequalities on Riemannian manifolds and establish R\'{e}nyi differential privacy guarantees governed by Ricci curvature. For manifolds with nonnegative Ricci curvature, we propose a mechanism based on heat diffusion. In contrast, for general manifolds we introduce a Langevin-process-based approach that yields intrinsic mechanisms supporting normalization-free sampling and continuous privacy–utility trade-offs. We derive detailed utility analyses for both mechanisms. As a statistical application, we develop privacy-preserving estimation of the generalized Fr\'{e}chet mean, including nontrivial sensitivity analysis and phase transition characterizations. Numerical experiments further demonstrate the advantages of the proposed DP mechanisms over existing approaches.}

\vspace{0.5cm}
\noindent\textbf{Keywords:Differential privacy, Geometric analysis, Generalized Fr\'{e}chet mean, Harnack's inequality, Heat diffusion model}

\newpage
\spacingset{1.5}

\section{Introduction}\label{sec:intro}

Modern datasets increasingly contain observations that are not naturally represented as vectors in a Euclidean space.
Examples include directions represented as points on spheres, shapes and landmarks, rotations and other Lie-group-valued objects, covariance and diffusion tensors modeled as symmetric positive definite matrices, and more broadly ``random objects'' that live in general metric spaces.
This has motivated a rapidly growing statistical literature on inference and learning beyond linear spaces, where fundamental notions such as means, regression functions, and variability must be defined through geometry and distances; see, among many others,
Fr\'echet regression and its extensions \cite{petersen2019frechet,lin2021tvfrechet,chen2022uniform,iao2025deep},
recent developments in metric-space inference \cite{dubey2024metric},
and geometric statistics in applications such as medical imaging \cite{pennec2019riemannian}.
In these settings, curvature is not just a technical detail: it can affect stability and asymptotics of estimators, including central limit behavior of Fr\'echet means \cite{eltzner2019smeary}.

At the same time, non-Euclidean data are often derived from sensitive individual-level information (e.g., medical images, trajectories, and user behavior).
When releasing statistical summaries computed from such data, it is crucial to protect privacy.
Differential privacy (DP) provides a rigorous, worst-case notion of privacy that has become a standard in statistical disclosure limitation and privacy-preserving data analysis \cite{dwork2006differential,dwork2014algorithmic}.
However, most DP methodology and mechanisms are designed for Euclidean outputs.
For non-Euclidean domains, naive strategies that embed the data in an ambient Euclidean space, add noise, and project back can distort geometry, introduce coordinate artifacts, and even push outputs outside the admissible domain.
This is especially problematic for constrained spaces (e.g., SPD matrices or shape spaces), where respecting intrinsic geometry is often essential for downstream analysis.

\subsection{Motivation}\label{subsec:motivation}

A natural way to formulate privacy mechanisms on a manifold (or, more generally, on a metric space) is through probability distributions that directly live on the output space.
Recent work has begun to develop intrinsic privacy mechanisms for manifold-valued summaries, including Riemannian Laplace-type mechanisms \cite{reimherr2021dpmanifold},
shape/structure-preserving constructions \cite{soto2022shape},
and mechanisms specialized to non-Euclidean domains such as directional data \cite{weggenmann2021directional} and SPD manifolds \cite{utpala2023dpfrechetspd}.
Beyond pure $\varepsilon$-DP, manifold-aware privacy has also been extended to alternative privacy notions:
for instance, Gaussian differential privacy on Riemannian manifolds \cite{jiang2023gaussian},
data-density-aware mechanisms via conformal transformations \cite{he2025conformal},
and very recently exponential-wrapped Laplace/Gaussian mechanisms that achieve $\varepsilon$-DP, $(\varepsilon,\delta)$-DP, GDP and RDP on Hadamard manifolds without MCMC sampling \cite{jiang2026expwrap}.
Relatedly, objective perturbation approaches such as the K-norm gradient (KNG) mechanism \cite{reimherr2019kng} and its manifold adaptations have enabled privacy-preserving inference for structured summaries and learning problems, including shape/structure-preserving releases \cite{soto2022shape} and differentially private geodesic regression \cite{kulkarni2025dpgeodesic}.
There is also a broader viewpoint of metric differential privacy, where privacy guarantees are expressed in terms of distances on the output space \cite{chatzikokolakis2013metrics,imola2022metric}.
In parallel, optimization-based approaches inject noise intrinsically on tangent spaces to obtain privacy for manifold-constrained learning problems \cite{han2024dpriemannian}.

In this paper, we advocate a diffusion-based route.
Instead of postulating a closed-form output density, we define the privatization mechanism through a Markov semigroup:
given an input (non-private) summary $x\in \m$, we release
\[
Y \sim P_t(x,\cdot),
\]
where $(P_t)_{t\ge 0}$ is the transition semigroup of an intrinsic diffusion on the manifold $\m$.
Two canonical choices are the heat semigroup (Brownian motion) and a drifted diffusion (Langevin dynamics).
This perspective yields three practical advantages:
\begin{enumerate}[label=(\roman*)]
\item \emph{Intrinsic nature}: the mechanism is coordinate-free and equivariant under isometries;
\item \emph{Normalization-free sampling}: one can generate privacy noise by simulating an SDE \cite{JMLR:v26:24-0829} on $\m$, without evaluating a partition function;
\item \emph{A continuous privacy-utility dial}: the diffusion time $t$ interpolates between small-noise, high-utility outputs ($t\downarrow 0$) and more mixed, higher-privacy outputs (larger $t$), with the semigroup property enabling modular calibration via time.
\end{enumerate}

\par From the privacy-analysis standpoint, diffusion kernels are particularly compatible with R\'enyi differential privacy (RDP) \cite{mironov2017renyi}, which controls privacy loss via R\'enyi divergences \cite{vanerven2014renyi}.
RDP enjoys convenient composition properties and, crucially for our goals, interfaces naturally with semigroup techniques from geometric analysis.
In particular, dimension-free Harnack inequalities provide pointwise-to-integrated controls for diffusion semigroups, and are known to be equivalent to curvature lower bounds in the sense of Bakry-\'Emery \cite{wang2004equivalence}.
This suggests that curvature should play a structural role in privacy guarantees for diffusion mechanisms on manifolds. 

\subsection{Our contribution}\label{subsec:contrib}

We develop a theory of diffusion-based RDP on Riemannian manifolds and uncover a direct link between privacy budgets and curvature. Starting from a manifold-valued statistic $f(D)$, our mechanism releases a privatized output through the heat semigroup, equivalently through Brownian motion, via $B_t(f(D))\sim P_t(f(D),\cdot)$. The geometric input enters through the lower Ricci-curvature bound $\Ric\ge -K$, which governs the asymptotic behavior of the heat diffusion. The dimension-free Harnack inequality then converts this geometric control, together with the sensitivity $\Delta$ and the diffusion time $t$, into an explicit R\'enyi privacy budget. More concretely, the privacy-geometry dictionary takes the following form. Under positive curvature in the paper's sign convention ($K<0$), Brownian motion gradually forgets its starting point and the privacy budget decays to $0$ as $t\to\infty$. When $K=0$, one recovers the Euclidean Gaussian rate $\varepsilon=\alpha\Delta^2/(4t)$. Under negative curvature ($K>0$), the privacy budget cannot be driven to $0$ by increasing $t$ alone and instead approaches a strictly positive privacy floor, which explains why Brownian motion suffices in curvature-favorable settings. The left and right panels of Figure~\ref{fig:intro-logic-K-notation} summarize the mechanism-level roadmap of the paper and visualize the three qualitative privacy regimes induced by the curvature bound.

\begin{figure}[h!]
\centering
\includegraphics[width=\textwidth]{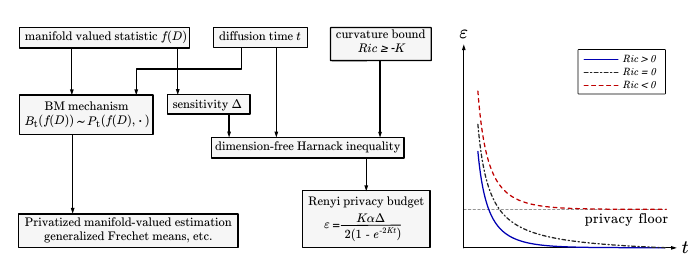}
\caption{Left: mechanism-level roadmap of the paper. Right: schematic behavior of the privacy budget $\varepsilon$ as a function of the diffusion time $t$ under the sign convention of $\Ric$.}
\label{fig:intro-logic-K-notation}
\end{figure}

On certain non-compact manifolds ($K>0$), Brownian motion may retain ``memory at infinity,'' preventing the privacy budget from improving beyond a positive floor even as $t\to\infty$.
To address this, we introduce a \emph{Langevin-process-based mechanism} that adds a confining drift.
On Hadamard manifolds, natural strongly geodesically convex potentials such as $V(x)=\lambda d(o,x)^2/2$ yield intrinsic mechanisms with normalization-free sampling and a continuous privacy-utility trade-off governed jointly by $(t,\lambda)$.

Furthermore, we derive explicit, geometry-aware utility bounds for both mechanisms.
In particular, for the noncompact setting we show that arbitrarily strong privacy can be achieved while maintaining finite (and explicitly controlled) utility under confining Langevin dynamics. Finally, as an application, we develop privacy-preserving estimation of generalized Fr\'echet means (including the classical Fr\'echet mean as a special case) \cite{afsari2011lp}, which is a fundamental notion of location for random objects underpinning broad areas of geometric statistics and learning \cite{bhattacharya2003intrinsic,petersen2019frechet,agueh2011barycenters,pennec2019riemannian}.
We provide sensitivity analysis beyond the squared-loss case and characterize regimes exhibiting phase-transition-type behavior in sensitivity/robustness.

\subsection{Organization}\label{subsec:organization}

Section~\ref{sec:preliminaries} introduces geometric and probabilistic preliminaries, including diffusion processes and the dimension-free Harnack inequality.
Section~\ref{sec:mainresults} presents the diffusion mechanisms and establishes their RDP and utility guarantees.
Section~\ref{sec:generalized} develops the application to privatized generalized Fr\'echet mean estimation.
Additional technical proofs are deferred to the supplementary material.

\section{Preliminaries}\label{sec:preliminaries}
\subsection{Background and Notations}
Let $\mathcal M$ be a smooth manifold of dimension $m$, 
equipped with a Riemannian metric $g$, that is, a smoothly varying inner product $g_p$ on each tangent space $T_p\mathcal M$. The distance between two points $p,q \in \mathcal M$ 
is defined as the infimum of the lengths of piecewise smooth curves 
connecting $p$ and $q$. 
This construction defines a metric $d$ on $\mathcal M$, called the Riemannian distance induced by $g$. 
The metric space $(\mathcal M,d)$ is complete by the Hopf-Rinow theorem if and only if $(\mathcal M,g)$ is geodesically complete.

A vector field $V$ is a smooth section of the tangent bundle 
$T\mathcal M$, assigning to each point $p \in \mathcal M$ a tangent 
vector $V(p) \in T_p\mathcal M$. Since tangent spaces at different 
points are distinct, comparing $V(p)$ and $V(q)$ requires additional 
structure. The Levi-Civita connection provides a canonical notion 
of differentiation along curves: for a smooth curve $\gamma(t)$, 
the covariant derivative of $V$ along $\gamma$ is defined by $D_t V = \nabla_{\dot\gamma(t)} V$.
A curve $\gamma(t)$ is called a geodesic if it satisfies
$\nabla_{\dot\gamma(t)} \dot\gamma(t) = 0$, which generalizes the notion of a straight line in Euclidean space.

Fixing a point $p \in \mathcal M$, the exponential map 
$\Exp_p : T_p\mathcal M \to \mathcal M$ is defined by
$\Exp_p(V) = \gamma(1)$,
where $\gamma(t)$ is the geodesic starting at $p$ with initial velocity $V$. 
Locally, $\Exp_p$ is a diffeomorphism near the origin since its 
differential at $0 \in T_p\mathcal M$ is the identity. However, in 
general it need not be globally injective. The largest radius $r>0$ 
such that $\Exp_p$ is a diffeomorphism on the ball 
$B_r(0) \subset T_p\mathcal M$ is called the injectivity radius at $p$, denoted by $\inj_p$. The injectivity radius of the manifold is denoted and defined as $\inj{\mathcal M} = \inf_{p \in \mathcal M} \inj_p$.
The cut locus of $p$, denoted $\Cut_p$, is the set of points where 
the exponential map ceases to be minimizing or fails to be a local 
diffeomorphism; equivalently, it consists of endpoints of maximal 
minimizing geodesics issuing from $p$.

There is a canonical volume form induced by the metric $g$, which we denote by
$d\mathrm{vol}$. With $g$ fixed, the Riemannian gradient of a smooth function $f$ on $\mathcal M$,
denoted by $\nabla f$, is characterized by the identity
$\langle \nabla f, X \rangle = X(f)$,
for any smooth vector field $X$. The divergence of a vector field $V$ is denoted by
$\diver(V)$.
Curvature is a key notion for describing the local geometry. The Riemann curvature
tensor is the $(1,3)$-tensor $
R: T\mathcal M \times T\mathcal M \times T\mathcal M \to T\mathcal M
$ defined by
\[
R(X,Y)Z
=
\nabla_X \nabla_Y Z
-
\nabla_Y \nabla_X Z
-
\nabla_{[X,Y]} Z,
\]
where $\nabla$ is the Levi--Civita connection associated with $g$. The Ricci curvature tensor is obtained by contracting the first and third indices:
\[
\mathrm{Ric}(Y,Z)
=
\operatorname{tr}\!\bigl(X \mapsto R(X,Y)Z\bigr),
\]
where $\operatorname{tr}$ denotes the trace. We write $\Ric_p(\cdot,\cdot)$ for the
Ricci tensor at $p\in\mathcal M$. When $Y=Z$, we use the shorthand
$\Ric_p(Y) =\Ric_p(Y,Y)$.
Heuristically, $\Ric_p$ can be viewed as an average of sectional curvatures through
a given direction. Geometrically, lower bounds on $\Ric$ control volume growth and
metric distortion, and they also constrain the global topology. For instance, the
following classical result shows that strictly positive Ricci curvature forces
compactness.

\begin{proposition}[Bonnet-Myers]\label{Thm:BonnetMyers}
Let $\mathcal M$ be a complete, connected Riemannian manifold of dimension $m$.
Assume there exists $r>0$ such that, for every $p\in\mathcal M$ and every unit
vector $V\in T_p\mathcal M$, we have $\Ric_p(V) \ge (m-1)/r^2$. Then any two points in $\mathcal M$ can be joined by a minimizing geodesic segment
of length at most $\pi r$. In particular,
$\mathrm{diam}(\mathcal M) \le \pi r$, and hence $\mathcal M$ is compact.
\end{proposition}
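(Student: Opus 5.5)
The plan is the classical second-variation argument. Fix $p,q\in\mathcal M$. By completeness and the Hopf--Rinow theorem there is a minimizing unit-speed geodesic $\gamma:[0,L]\to\mathcal M$ with $\gamma(0)=p$, $\gamma(L)=q$ and $L=d(p,q)$. The core claim is that $L\le \pi r$. Since $p,q$ are arbitrary, this gives $\mathrm{diam}(\mathcal M)\le\pi r$. Compactness then follows because $\mathcal M=\Exp_p(\overline{B_{\pi r}(0)})$ is the continuous image of a compact ball (again by Hopf--Rinow); equivalently, a closed and bounded subset of a complete manifold is compact.

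To prove $L\le\pi r$, suppose for contradiction that $L>\pi r$. Choose an orthonormal frame $E_1,\dots,E_{m-1}$ of parallel fields along $\gamma$ that are orthogonal to $\dot\gamma$. For each $i$ define the variation field
\[
V_i(s)=\sin\!\left(\tfrac{\pi s}{L}\right)E_i(s),
\]
which vanishes at both endpoints. The second variation formula for the energy (index form) along a geodesic is
\[
I(V_i,V_i)=\int_0^L\Big(|D_sV_i|^2-\langle R(V_i,\dot\gamma)\dot\gamma,V_i\rangle\Big)\,ds .
\]
Evaluating it on $V_i$ gives
\[
I(V_i,V_i)=\int_0^L \Big(\tfrac{\pi^2}{L^2}\cos^2\!\tfrac{\pi s}{L}-\sin^2\!\tfrac{\pi s}{L}\,K(E_i,\dot\gamma)\Big)\,ds,
\]
where $K(E_i,\dot\gamma)$ is the sectional curvature.

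Summing over $i$ and using $\sum_i K(E_i,\dot\gamma)=\Ric(\dot\gamma)\ge (m-1)/r^2$ yields
\[
\sum_{i=1}^{m-1} I(V_i,V_i)\le (m-1)\int_0^L\Big(\tfrac{\pi^2}{L^2}\cos^2\!\tfrac{\pi s}{L}-\tfrac{1}{r^2}\sin^2\!\tfrac{\pi s}{L}\Big)ds=(m-1)\tfrac{L}{2}\Big(\tfrac{\pi^2}{L^2}-\tfrac1{r^2}\Big).
\]
The right-hand side is strictly negative when $L>\pi r$, so some $I(V_i,V_i)<0$. A proper variation of $\gamma$ in the direction $V_i$ therefore produces nearby curves from $p$ to $q$ with strictly smaller energy, hence strictly smaller length (by Cauchy--Schwarz, since $\gamma$ has constant speed). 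This contradicts minimality of $\gamma$.

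The main obstacle is the sign convention. The paper's curvature tensor convention, with Ricci defined by contracting the first and third slots, must agree with the sign used in the second variation formula, so that $\sum_i\langle R(E_i,\dot\gamma)\dot\gamma,E_i\rangle=\Ric(\dot\gamma)$ with the correct sign. I would check this carefully against the definition of $R(X,Y)Z$ given above before running the computation. Everything else is the routine integral $\int_0^L\cos^2=\int_0^L\sin^2=L/2$.
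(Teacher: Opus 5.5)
The paper gives no proof of this proposition; it quotes Bonnet--Myers as a classical fact, and your argument is exactly the standard textbook proof (second variation along a minimizing geodesic with test fields $\sin(\pi s/L)E_i$, then Hopf--Rinow for compactness). Your computation and sign check are correct under the paper's convention, where $\Ric(\dot\gamma)=\sum_i\langle R(E_i,\dot\gamma)\dot\gamma,E_i\rangle=\sum_i K(E_i,\dot\gamma)$; the one caveat is that your strict-negativity step needs $m\ge 2$, which the statement also tacitly requires, since for $m=1$ the hypothesis $\Ric\ge 0$ is vacuous and $\mathbb{R}$ is a counterexample.
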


We will also see that Ricci curvature plays a central role in the behavior of
diffusion processes on Riemannian manifolds.

\subsection{Brownian Motion and the Laplace--Beltrami Operator}

Let $(\m,g)$ be a Riemannian manifold. For any smooth function $f:\m\to\R$, the Laplace--Beltrami operator is defined by $\Delta f = \diver(\nabla f)$. Under this sign convention, $\Delta$ is a negative and formally self-adjoint operator on $C^\infty(\m)$. More precisely, for any $f,g\in C^\infty(\m)$ with compact support, one has
\begin{align*}
    \int_{\m} f \,\Delta g \, d\mathrm{vol}(z)
    =
    \int_{\m} g \,\Delta f \, d\mathrm{vol}(z).
\end{align*}

The heat kernel is the fundamental solution to the heat equation associated with $\Delta$. We recall the following standard definition.

\begin{definition}
A smooth function $p:\m\times\m\times(0,\infty)\longrightarrow \mathbb{R}
$ is called a \emph{heat kernel} on $(\m,g)$ if it satisfies the following properties:
\begin{enumerate}
    \item[(i)] For each fixed $z\in\m$, the function $p(\cdot,z,t)$ solves the heat equation in the $x$-variable:
    \[
    \left(\frac{\partial}{\partial t}-\Delta_x\right)p(x,z,t)=0.
    \]
    \item[(ii)] Symmetry:
$p(x,z,t)=p(z,x,t)$, for any $x,z\in\m,\ t>0.$
 
    \item[(iii)] Initial condition: as $t\downarrow 0$, $p(x,z,t)\to \delta_z(x)$, in the sense of distributions, where $\delta_z$ denotes the Dirac mass at $z$.
\end{enumerate}
\end{definition}

It is well known that $\m$ is \emph{stochastically complete} if
\[
\int_{\m} p(x,z,t)\,d\mathrm{vol}(z)=1
\qquad\text{for all }x\in\m,\ t>0.
\]
Throughout this paper, we assume stochastic completeness. Under this assumption, the heat kernel is uniquely determined and defines the transition density of Brownian motion $(B_t)_{t\ge 0}$ on $\m$, started from $x$. In other words, for each $t>0$, the law of $B_t$ given $B_0=x$ admits density $p(x,z,t)$ with respect to the Riemannian volume measure. We refer to Section~11.4 of \cite{grigoryan2009heat} for background.

The Laplace--Beltrami operator and the heat kernel also encode important spectral information about the underlying manifold. In the compact case, the spectral theory is particularly transparent. Since $\Delta$ is self-adjoint and has compact resolvent on $L^2(\m)$, its spectrum is discrete and admits the following classical decomposition.

\begin{proposition}[Spectral decomposition]\label{prop:Sturm-Liouville}
Suppose that $\m$ is compact. Then there exists a complete orthonormal basis $\{\varphi_0,\varphi_1,\varphi_2,\dots\}$ of $L^2(\m)$ consisting of smooth eigenfunctions of $\Delta$, where
\[
\Delta \varphi_j = -\lambda_j \varphi_j,
\qquad
0=\lambda_0 \le \lambda_1 \le \lambda_2 \le \cdots,
\qquad
\lambda_j\to\infty.
\]
Moreover, the heat kernel admits the expansion
\begin{align}
    p(x,z,t)
    =
    \frac{1}{\mu(\m)}
    +
    \sum_{j=1}^{\infty} e^{-\lambda_j t}\varphi_j(x)\varphi_j(z), \qquad \text{where} \qquad \mu(\m)=\mathrm{vol}(\m).
\end{align}
\end{proposition}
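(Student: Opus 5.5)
The plan is to treat $-\Delta$ as a nonnegative self-adjoint operator with compact resolvent and apply the spectral theorem for compact self-adjoint operators. First, $\Delta$ is essentially self-adjoint on $C^\infty(\m)$, since a compact manifold has no boundary and is complete; the symmetry identity stated above is the starting point. Integrating by parts gives $\int f(-\Delta f)\,d\mathrm{vol}=\int|\nabla f|^2\,d\mathrm{vol}\ge0$, so $-\Delta\ge0$. Next, consider the resolvent $R=(I-\Delta)^{-1}$. By Lax--Milgram on $H^1(\m)$ with the form $\int(fg+\langle\nabla f,\nabla g\rangle)$, the operator $R$ maps $L^2$ boundedly into $H^1$. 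Since $\m$ is compact, Rellich's theorem makes $H^1\hookrightarrow L^2$ compact, so $R$ is a compact, self-adjoint, positive operator. The spectral theorem then gives an orthonormal basis of eigenfunctions of $R$ whose eigenvalues $\mu_j\downarrow0$. Setting $\lambda_j=1/\mu_j-1$ yields $\Delta\varphi_j=-\lambda_j\varphi_j$ with $\lambda_j\ge0$ and $\lambda_j\to\infty$.

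To fix the bottom of the spectrum, observe that $\Delta\varphi=0$ forces $\int|\nabla\varphi|^2=0$, so $\varphi$ is locally constant. Taking $\m$ connected, which is implicit here, $\lambda_0=0$ is simple with $\varphi_0=\mu(\m)^{-1/2}$. Smoothness of each $\varphi_j$ follows from elliptic regularity: iterating $H^{k}\to H^{k+2}$ for $(\Delta+\lambda_j)\varphi_j=0$ and then applying Sobolev embedding.

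For the heat kernel, define $q(x,z,t)=\sum_j e^{-\lambda_jt}\varphi_j(x)\varphi_j(z)$. The convergence is the main obstacle: we need it in $C^\infty$ for $t>0$. I would use the Weyl-type polynomial growth $\lambda_j\gtrsim j^{2/m}$, together with sup-norm bounds $\|\varphi_j\|_{C^k}\lesssim(1+\lambda_j)^{N_k}$. These bounds come from Sobolev embedding applied to $\|(I-\Delta)^s\varphi_j\|_{L^2}=(1+\lambda_j)^s$. A crude polynomial lower bound on $\lambda_j$ already suffices, and it follows from compactness of $R$ in Schatten class: $R^s$ is Hilbert--Schmidt for $s>m/4$. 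The factor $e^{-\lambda_jt}$ then dominates every polynomial, so the series converges uniformly with all derivatives on $\m\times\m\times[t_0,\infty)$.

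With convergence in hand, the defining properties follow term by term. Property (i) holds because each term $e^{-\lambda_jt}\varphi_j(x)$ solves the heat equation. Property (ii) holds because each term is symmetric in $x$ and $z$. For (iii), for $f\in C^\infty$ we have $\int q(\cdot,z,t)f(z)\,dz=\sum e^{-\lambda_jt}\langle f,\varphi_j\rangle\varphi_j\to f$ as $t\downarrow0$, since the coefficients of a smooth $f$ decay rapidly. Finally, $q$ agrees with the paper's heat kernel by uniqueness: a compact manifold is stochastically complete, and indeed $\int q\,dz=e^{0}\varphi_0\int\varphi_0=1$. Writing out the $j=0$ term gives the constant $1/\mu(\m)$, which is the claimed expansion.
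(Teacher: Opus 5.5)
Your argument is correct and follows the route the paper indicates: the paper gives no proof beyond citing that $\Delta$ is self-adjoint with compact resolvent on $L^2(\m)$. Your Lax--Milgram/Rellich construction of $(I-\Delta)^{-1}$, elliptic regularity, Hilbert--Schmidt growth bound on $\lambda_j$, and termwise check of the heat-kernel properties is the standard way to fill in that citation. The one step resting on an outside fact is the final identification by uniqueness, and you could make it self-contained: for any heat kernel $p$, Green's formula shows the coefficients $c_j(z,t)=\int_{\m} p(x,z,t)\varphi_j(x)\,d\mathrm{vol}(x)$ satisfy $\partial_t c_j=-\lambda_j c_j$, and property (iii) gives $c_j(z,t)\to\varphi_j(z)$ as $t\downarrow 0$, which forces $c_j(z,t)=e^{-\lambda_j t}\varphi_j(z)$.
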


Associated with the heat kernel is the heat propagation $(P_t)_{t\ge 0}$ on $L^2(\m)$, defined by
\begin{equation}
    P_t f(x)
    =
    \int_{\m} p(x,z,t)f(z)\,d\mathrm{vol}(z).
\end{equation}
The family $(P_t)_{t\ge 0}$ satisfies the semigroup property $P_{t+s}=P_t\circ P_s,$ for any $
 s,t\ge 0.$

We shall be particularly interested in both the short-time and long-time behavior of the heat kernel and the associated Brownian motion. At small times, the heat kernel exhibits a Gaussian-type asymptotic behavior governed by the Riemannian distance. More precisely, Varadhan's asymptotic formula asserts that
\begin{align*}
    \lim_{t\to 0_+} 4t\log p(x,z,t) = -d^2(x,z).
\end{align*}
Thus, in the short-time regime, the heat kernel behaves analogously to a Riemannian Gaussian density centered at the starting point.
\subsection{Langevin Process and Bakry-\'{E}mery Ricci Curvature}\label{subsec:2.3}
Let us consider the operator $L=\Delta-\langle\nabla V, \nabla \rangle$, where $V(x)$ is a potential function defined on $\m$. Let $X_t$ be the diffusion process generated by $L$. $X_t$ is a so-called Langevin process with drift and satisfies the following stochastic differential equation:
$$d X_t=\sqrt{2}d B_t-\nabla V(X_t)dt.$$ 
Similarly, there is also a Markov kernel corresponding to $L$ and we can define the drifted heat propagation $P_t$ in the same manner.
The Bochner formula \cite{wei2009comparison} of $L$ yields an additional term $\nabla^2V$ compared with the corresponding formula for the Laplacian-Beltrami operator. Thus we define the Bakry-\'{E}mery Ricci curvature tensor of $L$ to be $\Ric_V=\Ric+ \nabla^2 V$. This curvature tensor plays an important role in the diffusion process. In particular, the Langevin process $X_t$ asymptotically converges to a stationary distribution \cite{bakry2006diffusions,sturm2005convex} if $\Ric_V\ge K$ for some $K >0$ . The curvature condition also implies a Log-Sobolev inequality, which gives the exponential decay of the entropy function and Talagrand's transport inequality \cite{von2005transport}. Combining these results, we have the following exponential decay in $2$-Wasserstein distance:
\begin{align}\label{eq:expdecayinWass}
    W_2( P_t\nu, P_t\eta) \le e^{-Kt}W_2(\nu,\eta),
\end{align}
    where $P_t$ is the (drifted) heat propagator generated by $L$ and $\nu$ and $\eta$ are probability measures within Wasserstein space $\mathcal{P}_2$.
\begin{remark}
From a geometric viewpoint, the Langevin process can also be generated as Brownian motion with a weighted Laplacian-Beltrami operator. Specifically, we consider the weighted manifold $(\m,g,e^{-V}d\mathrm{vol}_g)$, which has the same topological space and metric as the original one but with a weighted volume form. Then the Langevin process $X_t$ is just the natural Brownian motion on the weighted manifold. Meanwhile, the Bakry-\'{E}mery Ricci curvature tensor plays the same role as the standard Ricci curvature. 
    \end{remark}

\subsection{Dimension-Free Harnack Inequality}
The classical Harnack inequality was initially established for harmonic functions supported on a closed ball in Euclidean space. Later, Li and Yau extended this result to Riemannian manifolds, providing a Harnack inequality for the heat kernel \cite{li1986parabolic}. This celebrated work has found widespread applications in geometric analysis, including heat kernel estimates and logarithmic Sobolev inequalities. Moreover, it has inspired analogous inequalities in the context of Ricci flow, which play a crucial role in the proof of the Poincar\'{e} conjecture \cite{perelman2002entropy,perelman2003finite,perelman2003ricci}. In this section, we introduce the dimension-free Harnack inequality \cite[Theorem~1.2]{wang2004equivalence}. 
\begin{proposition}
 Let $\m$ be an $m$-dimensional complete Riemannian manifold and $L=\Delta-\langle \nabla V, \nabla\rangle$. Then the following two conditions are equivalent:
 \begin{enumerate}
        \item[(i)] The Bakry-\'{E}mery Ricci curvature satisfies $\Ric_V(X)\ge -K\|X\|^2$.
     \item[(ii)] For every $\alpha >1$ the following Harnack inequality is satisfied:
     \begin{equation}\label{eq:dimensionfreeHarnack}
    (P_t|f(x)|)^\alpha \le P_t|f|^\alpha(y)\exp\bigg[ \frac{K\alpha d^2(x,y)}{2(\alpha-1)(1-e^{-2Kt})}\bigg],     
     \end{equation}
     where $f$ is any measurable function on $\m$ and $P_t$ is the (drifted) heat propagation corresponding to $L$.
    
 \end{enumerate}
\end{proposition}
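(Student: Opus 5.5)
The plan is to prove the two implications separately, following Wang's coupling and semigroup-interpolation strategy.

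\emph{Direction (i)$\Rightarrow$(ii).} We use coupling by parallel displacement with a drift. Fix $x\neq y$, $t>0$, $\alpha>1$. Let $X_s$ be the diffusion generated by $L$ started at $x$. We construct a second process $Y_s$ started at $y$, driven by the parallel (mirror-free) coupling of the Brownian parts along the minimal geodesic, plus an added drift
\[
\xi_s\,\frac{\nabla d(\cdot,X_s)}{|\nabla d|}(Y_s),
\qquad
\xi_s=\frac{2K\,d(X_s,Y_s)\,e^{-2Ks}}{1-e^{-2Kt}}\cdot(\text{const}).
\]
The drift is chosen so that, using the index lemma together with $\Ric_V\ge -K$ (via the second variation of $d$ along the coupling), one gets
\[
d\,d(X_s,Y_s)\le \bigl(K d(X_s,Y_s)-\xi_s\bigr)\,ds .
\]
The stochastic part vanishes under parallel coupling, up to the cut-locus issue handled below. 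Solving this ODE inequality shows $d(X_t,Y_t)=0$, so $X_t=Y_t$ at time $t$.

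By Girsanov, $Y$ is an $L$-diffusion from $y$ under $R\,d\mathbb P$, where
\[
R=\exp\!\Bigl(-\int_0^t\langle \xi_s u_s,dB_s\rangle/\sqrt2-\tfrac14\int_0^t\xi_s^2\,ds\Bigr).
\]
Then
\[
P_t f(y)=\mathbb E[R\,f(Y_t)]=\mathbb E[R\,f(X_t)].
\]
Applying H\"older gives
\[
(P_t|f|(y))^\alpha\le P_t|f|^\alpha(x)\,\bigl(\mathbb E R^{\alpha/(\alpha-1)}\bigr)^{\alpha-1}.
\]
Since $\int_0^t\xi_s^2\,ds$ is deterministically bounded by a multiple of $d^2(x,y)$, the moment of $R$ is bounded by $\exp\bigl[\frac{K\alpha d^2}{2(\alpha-1)(1-e^{-2Kt})}\bigr]$. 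Getting the exact constant requires choosing the $\xi_s$ profile that minimizes $\int\xi^2$ subject to coalescence. This is a one-dimensional optimal-control calculation that produces the factor $2K/(1-e^{-2Kt})$.

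\emph{Cut locus and explosion (main obstacle).} The distance is not smooth on the cut locus, and the coupling must remain well defined there. I would follow Wang's approach: use Kendall's Itô formula for $d(X_s,Y_s)$, in which the cut-locus contribution is a nonincreasing local-time term, so the inequality above still holds. A fallback is to first prove the result for $f$ with compact support and a localized coupling (stopping times), then pass to the limit by monotone convergence. Completeness together with the curvature bound ensures non-explosion and justifies the martingale property of $R$ via Novikov, since $\xi$ is bounded.

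\emph{Direction (ii)$\Rightarrow$(i).} Take $y=\Exp_x(\epsilon v)$ with $|v|=1$ and $f=e^{\delta\phi}$ for a test function $\phi$ with $\nabla\phi(x)=v$. Expand both sides of the Harnack inequality for small $t$, using $P_t g=g+tLg+O(t^2)$ and $1-e^{-2Kt}\approx 2Kt$. Choose $\epsilon$ and $\delta$ scaling suitably with $t$, e.g.\ $\epsilon\sim t$ and $\alpha$ optimized. Comparing the second-order terms recovers the Bakry--\'Emery inequality
\[
\Gamma(P_t\phi)\le e^{2Kt}P_t\Gamma(\phi).
\]
Differentiating this at $t=0$ and applying Bochner's formula for $L$ gives $\Ric_V(v,v)\ge -K$.
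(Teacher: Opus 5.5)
The paper gives no proof of this proposition; it imports it from \cite{wang2004equivalence}, so your argument has to stand on its own. For (i)$\Rightarrow$(ii) you use coupling by change of measure instead of the semigroup interpolation (driven by $|\nabla P_t f|\le e^{Kt}P_t|\nabla f|$) behind Wang's original proof. That is a legitimate route, and your Girsanov/H\"older bookkeeping is right: for deterministic $\xi$ one gets $(\mathbb E R^{\alpha/(\alpha-1)})^{\alpha-1}\le\exp\bigl(\frac{\alpha}{4(\alpha-1)}\int_0^t\xi_s^2\,ds\bigr)$.

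However, the drift you actually write down does not couple. If $\xi_s=c\,e^{-2Ks}d(X_s,Y_s)$, then $d\rho_s\le(K-c\,e^{-2Ks})\rho_s\,ds$ only gives $\rho_t\le\rho_0\exp\bigl(Kt-c\,\tfrac{1-e^{-2Kt}}{2K}\bigr)>0$. In $\R^m$ this holds with equality, so $X_t\neq Y_t$: a drift proportional to the distance with a bounded coefficient never closes the gap in finite time. The fix is the deterministic profile $\xi_s=\frac{2K\,d(x,y)\,e^{-Ks}}{1-e^{-2Kt}}$, directed toward $X_s$ (your sign points away) and switched off at the coupling time $\tau$. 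Then $e^{-Ks}\rho_s\le d(x,y)-\int_0^s e^{-Kr}\xi_r\,dr$ forces $\tau\le t$. Moreover $\int_0^t\xi_s^2\,ds=\frac{2K\,d(x,y)^2}{1-e^{-2Kt}}$, which is the Cauchy--Schwarz minimum under $\int_0^t e^{-Ks}\xi_s\,ds=d(x,y)$ and gives exactly the stated exponent. With this choice Novikov is trivial, and your handling of the cut locus (Kendall's formula with a nonincreasing local-time term) is fine.

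The converse is where the real gap lies. A small-$t$ expansion can never ``recover'' $\Gamma(P_t\phi)\le e^{2Kt}P_t\Gamma(\phi)$ for all $t$. At best it yields the first-order content, $\Gamma_2(\phi)\ge-K\Gamma(\phi)$ at $x$. Nor do your scalings isolate the curvature correction in $\frac{K}{1-e^{-2Kt}}=\frac1{2t}+\frac K2+O(t)$. With $d(x,y)\sim t$, $\alpha$ fixed and $\delta$ not tending to $0$, the nonlinear terms of $\log P_te^{\delta\phi}$ enter at the same order $t^2$ as the $K$-term. For example, the third cumulant of $\phi$ under $P_t(x,\cdot)$ is $\approx 12t^2\,\nabla^2\phi(\nabla\phi,\nabla\phi)$.

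The missing idea is to linearize at \emph{fixed} $t$:
\begin{itemize}
\item Apply (ii) to $f=1+\varepsilon g$ with $g\in C_0^\infty(\m)$, taking $y=\Exp_x(-\varepsilon s\,w)$ and $w=\nabla P_tg(x)/|\nabla P_tg(x)|$.
\item Expand to order $\varepsilon^2$ and let $\varepsilon\to0$. This gives $s|\nabla P_tg|(x)\le\frac{\alpha-1}{2}\bigl(P_tg^2-(P_tg)^2\bigr)(x)+\frac{Ks^2}{2(\alpha-1)(1-e^{-2Kt})}$.
\item Optimize in $s$. This yields the reverse Poincar\'e inequality $\frac{1-e^{-2Kt}}{K}|\nabla P_tg|^2\le P_tg^2-(P_tg)^2$ for every $t>0$; a single $\alpha>1$ suffices.
\item Only now expand in $t$. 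Both sides equal $2t|\nabla g|^2+O(t^2)$, and comparing the $t^2$ coefficients gives $\frac12L|\nabla g|^2-\langle\nabla g,\nabla Lg\rangle\ge-K|\nabla g|^2$ at $x$.
\item Choose $\nabla g(x)=v$ and $\nabla^2g(x)=0$, and use the Bochner identity $\frac12L|\nabla g|^2-\langle\nabla g,\nabla Lg\rangle=\|\nabla^2g\|_{HS}^2+\Ric_V(\nabla g,\nabla g)$. This gives $\Ric_V(v,v)\ge-K$.
\end{itemize}
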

\begin{remark}
A notable distinction from Li–Yau’s version is that the inequality does not depend on the dimension of the underlying manifold. This property makes it applicable to the study of functional operators in infinite-dimensional spaces and is therefore referred to as the dimension-free Harnack inequality. Moreover, the inequality is optimal for all $t$, since it is equivalent to the Bakry-\'{E}mery Ricci curvature condition. 
\end{remark}

% =======================
\subsection{Stochastic Processes, Martingales, and It\^{o} Integrals}
\label{subsec:stoch_prelim}
% =======================

This subsection collects the key results from stochastic calculus that we will use repeatedly in later
proofs. For a detailed and rigorous development of stochastic analysis on Riemannian
manifolds, we refer to \cite[Chapters~1--4]{hsustochastic}; see also the discussion of Brownian
motion and heat kernels therein.

Throughout the paper we use the Laplace--Beltrami operator $\Delta=\mathrm{div}(\nabla)$
as in Section~2.2, and the heat kernel $p(x,z,t)$ is the fundamental solution
to the heat equation with this choice of $\Delta$.
Under this convention, the diffusion term corresponding to $\Delta$ carries a factor $\sqrt{2}$
when written in terms of the standard Brownian motion $W$ (this is the same convention used
in the Langevin SDE displayed in Section~2.3).

A real-valued adapted process $(M_t)_{t\ge 0}$ is a \emph{martingale} if
$\mathbb{E}|M_t|<\infty$ for all $t$ and
$\mathbb{E}[M_t\mid \mathcal{F}_s]=M_s$ almost surely for all $0\le s\le t$.
It is a \emph{local martingale} if there exists an increasing sequence of stopping times
$\tau_n\uparrow\infty$ such that each stopped process $M^{\tau_n}_t:=M_{t\wedge\tau_n}$
is a martingale.

\begin{proposition}[Mean of a martingale]
\label{prop:martingale_mean}
If $(M_t)_{t\ge 0}$ is a martingale, then $\mathbb{E}[M_t]=\mathbb{E}[M_0]$ for every $t\ge 0$.
In particular, if $M_0=0$, then $\mathbb{E}[M_t]=0$ for all $t$.
\end{proposition}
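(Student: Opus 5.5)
The claim is that a martingale has constant mean, $\mathbb{E}[M_t]=\mathbb{E}[M_0]$, and in particular mean zero when $M_0=0$. The plan is to apply the defining martingale identity with $s=0$ and then take expectations, using the tower property of conditional expectation.

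First, fix $t\ge 0$. By definition $\mathbb{E}|M_t|<\infty$, so $M_t$ is integrable and its conditional expectation given $\mathcal{F}_0$ is well defined. The martingale property with $s=0\le t$ gives
\[
\mathbb{E}[M_t\mid \mathcal{F}_0]=M_0
\]
almost surely.

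Second, take expectations of both sides. The tower property $\mathbb{E}\bigl[\mathbb{E}[M_t\mid\mathcal{F}_0]\bigr]=\mathbb{E}[M_t]$ holds for any integrable random variable. Combining it with the display yields $\mathbb{E}[M_t]=\mathbb{E}[M_0]$. Since the two sides agree almost surely, the null set on which they might differ does not affect the expectation. The special case follows at once: if $M_0=0$, then $\mathbb{E}[M_0]=0$, so $\mathbb{E}[M_t]=0$ for all $t$.

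No step is a real obstacle. The only point to check is integrability, which the definition of a martingale supplies directly. It is worth noting, for later use with It\^{o} integrals, that the statement can fail for merely \emph{local} martingales. Applying it to the stochastic integrals appearing in later proofs will therefore first require verifying a true martingale property, for example via square-integrability of the integrand or a localization argument.
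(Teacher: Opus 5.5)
Your argument is correct: conditioning on $\mathcal{F}_0$ via the martingale identity with $s=0$ and then applying the tower property is the standard proof. The paper states this proposition without proof as a background fact from stochastic calculus (citing Hsu), so your approach matches the intended one, and your caveat about local martingales is exactly why the paper adds the next proposition on upgrading local martingales via $L^2$.
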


\begin{proposition}[From local martingale to martingale via $L^2$]
\label{prop:local_to_martingale}
Let $(M_t)_{t\ge 0}$ be a continuous local martingale with $M_0=0$.
If $\mathbb{E}[M_t^2]<\infty$ for each $t$ (equivalently, $\mathbb{E}[ [M]_t ]<\infty$),
then $(M_t)_{t\ge 0}$ is in fact a (square-integrable) martingale and hence $\mathbb{E}[M_t]=0$.
\end{proposition}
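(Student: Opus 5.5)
The plan is the standard localization argument. Since $M$ is a continuous local martingale with $M_0=0$, I fix a localizing sequence of stopping times $\tau_n\uparrow\infty$, so that each stopped process $M^{\tau_n}$ is a martingale. It is convenient to choose $\tau_n=\inf\{t:|M_t|\ge n\}$, possibly intersected with an existing localizing sequence. This makes $M^{\tau_n}$ a bounded continuous martingale. The goal is to pass to the limit $n\to\infty$ in the identity
\[
\mathbb{E}[M_{t\wedge\tau_n}\mid\mathcal F_s]=M_{s\wedge\tau_n}, \qquad 0\le s\le t,
\]
and conclude that $\mathbb{E}[M_t\mid\mathcal F_s]=M_s$. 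Pathwise convergence $M_{t\wedge\tau_n}\to M_t$ almost surely is immediate from $\tau_n\uparrow\infty$, so the only thing needed is uniform integrability of the family $\{M_{t\wedge\tau_n}\}_n$ for each fixed $t$. This yields $L^1$ convergence, which passes through conditional expectations.

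The key step, and the only real obstacle, is getting a bound uniform in $n$ from the hypothesis. I will work with the quadratic variation form $\mathbb{E}[[M]_t]<\infty$, which the statement lists as equivalent. For the bounded martingale $M^{\tau_n}$, the process $(M^{\tau_n})^2-[M]^{\tau_n}$ is a martingale, by It\^o's formula or the standard Doob–Meyer fact. Hence
\[
\mathbb{E}[M_{t\wedge\tau_n}^2]=\mathbb{E}[[M]_{t\wedge\tau_n}]\le \mathbb{E}[[M]_t]<\infty,
\]
using monotonicity of $[M]$. This gives $\sup_n\mathbb{E}[M_{t\wedge\tau_n}^2]<\infty$, and an $L^2$-bounded family is uniformly integrable.

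If instead one starts from $\mathbb{E}[M_t^2]<\infty$ for each $t$, I would note that this alone does not control the stopped values. The intended reading is that the square-integrability bound holds for the stopped processes, or equivalently that $\mathbb{E}[[M]_t]<\infty$. In that case I would use Fatou's lemma with $[M]_{t\wedge\tau_n}\uparrow[M]_t$ to identify $\mathbb{E}[M_t^2]=\mathbb{E}[[M]_t]$. So I would base the argument on the $[M]_t$ condition and derive the $L^2$ statement as a consequence.

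Finally, with uniform integrability, Vitali's theorem gives $M_{t\wedge\tau_n}\to M_t$ in $L^1$, and likewise $M_{s\wedge\tau_n}\to M_s$. Conditional expectation is an $L^1$ contraction, so the martingale identity passes to the limit and $M$ is a martingale. Integrability $\mathbb{E}|M_t|<\infty$ follows from Fatou, and square-integrability follows from the bound above. Then Proposition~\ref{prop:martingale_mean} with $M_0=0$ gives $\mathbb{E}[M_t]=0$.
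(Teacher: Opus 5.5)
Your argument is correct. It is the standard localization plus uniform-integrability proof. The paper states this proposition as background (citing Hsu) without any proof, so there is no competing route to compare.

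Two small points of rigor:
\begin{itemize}
\item ``By It\^o's formula'' only shows that $(M^{\tau_n})^2-[M]^{\tau_n}$ is a \emph{local} martingale. To upgrade it without circularly invoking the proposition itself, use either the bounded-martingale construction of the bracket, or the domination $|M_{s\wedge\tau_n}^2-[M]_{s\wedge\tau_n}|\le n^2+[M]_t\in L^1$ for $s\le t$.
\item Fatou together with monotone convergence gives only $\mathbb{E}[M_t^2]\le\mathbb{E}[[M]_t]$, which is all you need. Equality would require $L^2$ convergence of $M_{t\wedge\tau_n}$, for instance via Doob's inequality $\mathbb{E}[\sup_{s\le t}M_s^2]\le 4\,\mathbb{E}[[M]_t]$ and dominated convergence.
\end{itemize}

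More importantly, basing everything on $\mathbb{E}[[M]_t]<\infty$ is not merely convenient but necessary. The parenthetical ``equivalently'' in the statement is false, and under the hypothesis $\mathbb{E}[M_t^2]<\infty$ alone the conclusion fails. Let $W$ be a three-dimensional Brownian motion started at $x\neq 0$, and set $N_t=1/|W_t|-1/|x|$.
\begin{itemize}
\item $N$ is a continuous local martingale with $N_0=0$.
\item $\mathbb{E}[N_t^2]<\infty$ for every $t$, because the Gaussian density is bounded and $|y|^{-2}$ is integrable near the origin in $\mathbb{R}^3$.
\item However, $\mathbb{E}[1/|W_t|]\to 0$ as $t\to\infty$, so $\mathbb{E}[N_t]\to -1/|x|\neq 0$, contradicting the claimed conclusion.
\end{itemize}

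The correct hypothesis is $\mathbb{E}[[M]_t]<\infty$, equivalently $\mathbb{E}[\sup_{s\le t}M_s^2]<\infty$ or $\sup_n\mathbb{E}[M_{t\wedge\tau_n}^2]<\infty$. This is exactly what you prove. It is also exactly the form the paper actually relies on, through $[M^f]_t=2\int_0^t\|\nabla f(X_s)\|^2\,ds$ in Proposition~\ref{prop:ito_generator_form} and Lemma~\ref{lemma:squreintegrability}.
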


Let $L$ be a second-order differential operator on $\m$ of the form $L \;=\; \Delta - \langle \nabla V, \nabla \rangle$,
as in Section~2.3.
A continuous $\m$-valued process $(X_t)_{t\ge 0}$ is called an $L$-\emph{diffusion} if for every
$f\in C^\infty(\m)$ the real-valued process
\[
M_t^f \;:=\; f(X_t)-f(X_0)-\int_0^t (Lf)(X_s)\,ds
\]
is a local martingale. This is the standard ``martingale problem'' formulation; see
\cite[Chapter~3]{hsustochastic}.

\begin{proposition}[It\^{o} formula for an $L$-diffusion {\cite[Chapters~1 and~3]{hsustochastic}}]
\label{prop:ito_generator_form}
Let $(X_t)_{t\ge 0}$ be an $L$-diffusion on $\m$, and let $f\in C^\infty(\m)$.
Then
\[
f(X_t)
=
f(X_0)
+\int_0^t (Lf)(X_s)\,ds
+M_t^f,
\]
where $M^f$ is a continuous local martingale.
Moreover, the quadratic covariation satisfies
\[
[M^f,M^g]_t
=
2\int_0^t \langle \nabla f(X_s), \nabla g(X_s)\rangle\,ds,
\qquad f,g\in C^\infty(M),
\]
and in particular $[M^f]_t
=
2\int_0^t \|\nabla f(X_s)\|^2\,ds.$
Consequently, if $\mathbb{E}\int_0^T \|\nabla f(X_s)\|^2 ds<\infty$ for a given $T>0$
(e.g.\ when $\m$ is compact and $f$ is smooth, so $\|\nabla f\|_\infty<\infty$),
then $M^f$ is a true martingale on $[0,T]$ and
\[
\mathbb{E}[f(X_t)]
=
f(X_0)+\mathbb{E}\int_0^t (Lf)(X_s)\,ds,
\qquad 0\le t\le T.
\]
\end{proposition}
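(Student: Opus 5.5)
The statement to prove is the Itô formula for an $L$-diffusion (Proposition~\ref{prop:ito_generator_form}). Most of it is a rephrasing of the martingale-problem definition together with standard stochastic calculus, and I would argue in three steps.

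\textbf{Step 1: decomposition.} By the definition of an $L$-diffusion, the process $M_t^f=f(X_t)-f(X_0)-\int_0^t (Lf)(X_s)\,ds$ is a continuous local martingale. Rearranging gives the decomposition in the statement immediately. Continuity of $M^f$ follows from continuity of $X$ and of $s\mapsto\int_0^s Lf(X_r)\,dr$.

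\textbf{Step 2: quadratic covariation.} I would apply the martingale problem to the product $fg\in C^\infty(\m)$ and compute
\[
L(fg)=fLg+gLf+2\langle\nabla f,\nabla g\rangle .
\]
This carré du champ identity holds because the drift part of $L$ is a derivation, and $\Delta(fg)=f\Delta g+g\Delta f+2\langle\nabla f,\nabla g\rangle$.

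Next, apply the semimartingale integration by parts formula to $f(X_t)g(X_t)$, using the decompositions of $f(X)$ and $g(X)$ from Step 1. This writes $f(X_t)g(X_t)$ as a local martingale plus
\[
\int_0^t \bigl(fLg+gLf\bigr)(X_s)\,ds+[M^f,M^g]_t .
\]
The martingale problem for $fg$ writes the same process as a local martingale plus $\int_0^t L(fg)(X_s)\,ds$.

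Subtracting the two expressions shows that
\[
[M^f,M^g]_t-2\int_0^t\langle\nabla f,\nabla g\rangle(X_s)\,ds
\]
is a continuous local martingale of finite variation. It vanishes at $t=0$, so it is identically zero. Taking $g=f$ gives $[M^f]_t=2\int_0^t\|\nabla f(X_s)\|^2\,ds$.

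\textbf{Step 3: true martingale property.} Under $\mathbb{E}\int_0^T\|\nabla f(X_s)\|^2\,ds<\infty$, Step 2 gives $\mathbb{E}[M^f]_T<\infty$. Proposition~\ref{prop:local_to_martingale} then shows that $M^f$ is a square-integrable martingale on $[0,T]$, since $\mathbb{E}[M^f]_t\le\mathbb{E}[M^f]_T$ for $t\le T$. Hence $\mathbb{E}M^f_t=0$ by Proposition~\ref{prop:martingale_mean}.

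Taking expectations in the decomposition yields the stated identity for $\mathbb{E}[f(X_t)]$. Here $f(X_0)$ is deterministic if we start from a point, and otherwise it is replaced by its expectation. The integral term needs $\mathbb{E}\int_0^t|Lf(X_s)|\,ds<\infty$ for the expectation to be finite. In the compact case this is immediate, since $Lf$ and $\nabla f$ are bounded; in general I would add it as an implicit integrability assumption.

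\textbf{Main obstacle.} The main obstacle is Step 2. It requires the fact that a continuous local martingale of finite variation is constant, and it requires using the definition of $[M^f,M^g]$ as a covariation rather than reproving it. On a manifold I would either work intrinsically through the martingale problem, as above, or embed $\m$ via Nash and apply the Euclidean Itô formula to smooth extensions of $f$ and $g$, citing \cite{hsustochastic}.
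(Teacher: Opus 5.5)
Your proof is correct, but it takes a different route from the source the paper relies on. The paper gives no argument of its own here and simply defers to Hsu. There the manifold version is obtained through stochastic development: $X$ is lifted horizontally to the orthonormal frame bundle. In the present normalization the martingale part then becomes an explicit It\^{o} integral $M^f_t=\sqrt{2}\int_0^t\langle\nabla f(X_s),U_s\,dW_s\rangle$ against a Euclidean Brownian motion $W$, with $U_s$ an isometry $\R^m\to T_{X_s}\m$, so the covariation formula is read off directly.

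Your route never leaves the martingale-problem definition. You apply it to $fg$ and use the carr\'e du champ identity $L(fg)-fLg-gLf=2\langle\nabla f,\nabla g\rangle$, which holds because the drift is a derivation. Comparing with the semimartingale product rule, $[M^f,M^g]_t-2\int_0^t\langle\nabla f,\nabla g\rangle(X_s)\,ds$ is a continuous finite-variation local martingale started at $0$, hence zero.

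Your approach is more elementary and more general: it works for any solution of the martingale problem, with no frame bundle and no construction of the process. Hsu's representation yields strictly more, namely an explicit driving Brownian motion. That is the form behind the radial decomposition \eqref{eq:radialrepresentation} used later in the paper.

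Your Step~3 is also sound. You correctly invoke the criterion $\mathbb{E}[M^f]_T<\infty$. The condition $\mathbb{E}[M_t^2]<\infty$, listed as ``equivalent'' in Proposition~\ref{prop:local_to_martingale}, would not by itself make a local martingale a true martingale.

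Finally, your caveat about splitting the expectation points to a real looseness in the statement, not in your argument. With $X_0$ deterministic, it suffices to know that one of $f(X_t)$ or $\int_0^t Lf(X_s)\,ds$ is integrable, since their difference $f(X_0)+M^f_t$ already is.
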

In order to analyze the stochastic process on Riemannian manifolds, we consider the radial process and introduce the Laplacian comparison theorem.
\begin{definition}
    Let $(\m,d)$ be a Riemannian manifold and $X_t$ be a diffusion process on $\m$. We define the radial process $r_t:=d(X_0,X_t)$, where $X_0$ is the starting point of $X_t$. Sometimes we also denote it by $r(X_t)$ or write simply as $r$. 
\end{definition}
\begin{remark}[On nonsmooth test functions and the cut locus]
\label{rem:cutlocus_ito}
On a general manifold, functions such as $x\mapsto d(x,o)$ are not smooth on the cut locus,
and one needs a generalized It\^{o} formula involving local time terms; see
\cite[Chapter~3, especially the discussion around the radial process decomposition]{hsustochastic}.
In Section~3.3 we restrict to Hadamard manifolds, in which the cut locus is empty; in that case
the squared distance $x\mapsto d(x,o)^2$ is smooth on $\m$, and the standard It\^{o} formula
(Proposition~\ref{prop:ito_generator_form}) applies directly to such distance-based Lyapunov
functions.
\end{remark}
Now if we restrict the process out of the cut locus of $\m$, we have the well-known estimation of the radial process.
\begin{proposition}[Laplacian Comparison Theorem]
   Let $\m$ be an $m$-dimensional Riemannian manifold. Suppose that on a geodesic ball $B(R)$ away from the cut locus the sectional curvature is bounded from above by $\kappa>0$ and the Ricci curvature is bounded from below by $-(m-1)K$ with $K > 0$. Then
\[
(m-1)\sqrt{\kappa}\cot(\sqrt{\kappa} r)\ \le\ \Delta r\ \le\ (m-1)\sqrt{K}\coth(\sqrt{K} r)
\]
on $B(R)$.
Furthermore, in the limit $K \rightarrow 0$, we obtain the bound $\Delta r \le (m-1)/r$.
\end{proposition}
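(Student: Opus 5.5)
Our plan for the Laplacian comparison theorem is the standard route: control $\Delta r$ along each minimizing geodesic from the base point through the Riccati inequality for the mean curvature of geodesic spheres. Fix a point $x$ in $B(R)$ that is off the cut locus of the base point $o$, and let $\gamma:[0,r]\to\m$ be the unit-speed minimizing geodesic from $o$ to $x$. Away from the cut locus, $r$ is smooth with $\|\nabla r\|=1$, and $\nabla r=\dot\gamma$. Set $h(s):=\Delta r(\gamma(s))$, the mean curvature of the sphere $\{r=s\}$ at $\gamma(s)$. Applying the Bochner formula to $r$ and using $\|\nabla r\|\equiv 1$, we obtain
\[
0=\tfrac12\Delta\|\nabla r\|^2=\|\nabla^2 r\|^2+\langle\nabla r,\nabla\Delta r\rangle+\Ric(\nabla r,\nabla r).
\]
Since $\nabla^2 r$ annihilates $\nabla r$, it acts on an $(m-1)$-dimensional space. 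Cauchy--Schwarz then gives $\|\nabla^2 r\|^2\ge h^2/(m-1)$, and we arrive at the Riccati inequality
\[
h'+\frac{h^2}{m-1}\le -\Ric(\dot\gamma,\dot\gamma)\le (m-1)K .
\]
The upper bound follows by comparing with the model solution $\bar h(s)=(m-1)\sqrt K\coth(\sqrt K s)$, which solves $\bar h'+\bar h^2/(m-1)=(m-1)K$ with equality. Both $h$ and $\bar h$ behave like $(m-1)/s+O(s)$ as $s\downarrow0$; this follows from the Euclidean-like expansion of $\nabla^2 r$ near $o$ in normal coordinates. A standard ODE comparison applies to $\psi=\bar h-h$: it satisfies $\psi'\ge -\psi(h+\bar h)/(m-1)$ and $\psi(0^+)\ge0$, so $\psi\ge 0$ on $(0,r]$ by an integrating factor. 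The delicate part is handling the common singularity at $s=0$. We do this by working with $s^2\psi$, or equivalently by writing the comparison in terms of $\phi=\exp(\int h/(m-1))$, which gives the Sturm-type inequality $\phi''\le K\phi$ against $\sinh(\sqrt K s)$.

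For the lower bound we do not use Ricci curvature. Instead we use the upper sectional curvature bound $\kappa$ through the Hessian comparison (Rauch) theorem. For any unit $e\perp\dot\gamma$ at $x$, the Jacobi field $J$ along $\gamma$ with $J(0)=0$ and $J(r)=e$ minimizes the index form, which gives $\nabla^2 r(e,e)=I(J,J)$. Let $E$ be the parallel transport of $e$, and compare with the transplanted model field $W(s)=\frac{\sin(\sqrt\kappa s)}{\sin(\sqrt\kappa r)}E(s)$. The sectional bound $K(\dot\gamma,E)\le\kappa$ yields, via the Rauch argument, $\nabla^2 r(e,e)\ge \sqrt\kappa\cot(\sqrt\kappa r)$. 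This step implicitly needs $r<\pi/\sqrt\kappa$ so that the model field is defined; we would state this as part of the hypothesis that $B(R)$ avoids conjugate points, noting that otherwise the cotangent bound is vacuous or must be interpreted appropriately. Summing over an orthonormal basis of $\dot\gamma^\perp$ gives $\Delta r\ge(m-1)\sqrt\kappa\cot(\sqrt\kappa r)$.

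Finally, $\sqrt K\coth(\sqrt K r)\to 1/r$ as $K\to0$, which gives $\Delta r\le (m-1)/r$. The same conclusion follows directly from the Riccati comparison with $\bar h=(m-1)/s$.

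We expect the main obstacle to be the rigorous comparison near $s=0$ in the Riccati argument together with the range restriction $r<\pi/\sqrt\kappa$ in the lower bound. For the first, we would use the $\phi$-substitution: it converts both sides into the linear Sturm comparison, where the initial data $\phi(0)=0$ and $\phi'(0)=1$ match cleanly.
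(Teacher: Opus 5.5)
Your upper bound is correct and is the standard proof. The Bochner--Riccati inequality $h'+h^2/(m-1)\le (m-1)K$, followed by the Sturm comparison of $\phi''\le K\phi$ (with $\phi(0)=0$, $\phi'(0)=1$) against $\sinh(\sqrt K s)/\sqrt K$, handles the singularity at $s=0$ properly. The paper simply quotes the result as well known. The lower bound, however, does not go through as written. You use that $J$ minimizes the index form, then compare with $W(s)=\frac{\sin(\sqrt\kappa s)}{\sin(\sqrt\kappa r)}E(s)$, which is a field along $\gamma$ in $M$, and apply $\sec(\dot\gamma,E)\le\kappa$ to $W$. Minimality gives $I(J,J)\le I(W,W)$. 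The upper curvature bound applied to $W$ gives $I(W,W)\ge\sqrt\kappa\cot(\sqrt\kappa r)$. These two inequalities point in opposite directions, so nothing follows about $I(J,J)=\nabla^2 r(e,e)$. What you wrote actually proves the reverse statement: an upper Hessian bound under $\sec\ge\kappa$, where the chain $I(J,J)\le I(W,W)\le\sqrt\kappa\cot(\sqrt\kappa r)$ does close.

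The fix is to apply the curvature bound to $J$, not to $W$. Since $J(0)=0$ and $J(r)=e\perp\dot\gamma$, $J$ is normal, so $I(J,J)=\int_0^r\bigl(|J'|^2-\langle R(J,\dot\gamma)\dot\gamma,J\rangle\bigr)ds\ge\int_0^r\bigl(|J'|^2-\kappa|J|^2\bigr)ds$. Write $J=\sum_i j_iE_i$ in a parallel orthonormal frame; the right-hand side splits as $\sum_i\int_0^r(j_i'^2-\kappa j_i^2)\,ds$. For $r<\pi/\sqrt\kappa$, Wirtinger's inequality makes $\int_0^r(w'^2-\kappa w^2)\,ds>0$ for every nonzero $w$ with $w(0)=w(r)=0$. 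Hence each term with $j_i(0)=0$, $j_i(r)=e_i$ is minimized by $e_i\sin(\sqrt\kappa s)/\sin(\sqrt\kappa r)$, with value $e_i^2\sqrt\kappa\cot(\sqrt\kappa r)$. Summing gives $\nabla^2 r(e,e)\ge\sqrt\kappa\cot(\sqrt\kappa r)$. So $W$ enters as the minimizer of the model functional, not as a competitor for $I$. Equivalent routes are to transplant $J$ into the constant-curvature model and use the index lemma there, or to compare the matrix Riccati equation $S'+S^2+R_{\dot\gamma}=0$ (with $R_{\dot\gamma}\le\kappa$) against $\sqrt\kappa\cot(\sqrt\kappa s)\,\mathrm{Id}$.

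This also fixes your range condition. What is needed is $r<\pi/\sqrt\kappa$, i.e.\ no conjugate points in the model, and $B(R)$ avoiding conjugate points of $M$ does not imply it. Beyond $\pi/\sqrt\kappa$ the stated bound is not vacuous but false: in $\mathbb{R}^m$, $\Delta r=(m-1)/r$, while $(m-1)\sqrt\kappa\cot(\sqrt\kappa r)\to+\infty$ as $r\downarrow\pi/\sqrt\kappa$. So $R\le\pi/\sqrt\kappa$ must be an explicit hypothesis.
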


\section{Main Results} \label{sec:mainresults}
\subsection{R\'{e}nyi Differential Privacy}

Informally, differential privacy controls the sensitivity of the output distribution of a randomized algorithm to small perturbations in its input. The $\varepsilon$-differential privacy gives the multiplicative upper bound on the worst-case change in the output \cite{dwork2006differential}.
\begin{definition}[$\varepsilon$-Differential Privacy]
A randomized algorithm $\mathcal{F} : \mathcal{D} \to \mathcal{Y}$ is said to satisfy
$\varepsilon$-differential privacy if for any two adjacent datasets
$D, D' \in \mathcal{D}$ that differ in at most one individual, and for any
measurable subset $S \subseteq \mathcal{Y}$, it holds that
\[
\mathbb{P}\bigl( \mathcal{F}(D) \in S \bigr)
\;\le\;
e^{\varepsilon}
\,\mathbb{P}\bigl( \mathcal{F}(D') \in S \bigr).
\]
\end{definition}
Many DP mechanisms are available, each offering advantages in different scenarios. In this paper, we focus on R\'{e}nyi differential privacy \cite{mironov2017renyi}, which bounds the perturbation error through R\'{e}nyi divergence.
\begin{definition}[Rényi Differential Privacy (RDP)]
Let $\mathcal{F} : \mathcal{D} \to \mathcal{Y}$ be a randomized mechanism. 
For any two adjacent datasets $D, D' \in \mathcal{D}$ that differ in at most one entry, 
$F$ is said to satisfy \emph{$(\alpha, \varepsilon)$-Rényi Differential Privacy (RDP)} 
if, for fixed $\alpha > 1$, it holds that
\[
D_{\alpha}\bigl(\mathcal{F}(D)\,\|\,\mathcal{F}(D')\bigr) \le \varepsilon,
\]
where $D_{\alpha}(P\|Q)$ denotes the Rényi divergence of order $\alpha$ between two distributions $P$ and $Q$, defined as
\begin{equation}
D_{\alpha}(P\|Q)
= \frac{1}{\alpha - 1}\log 
\mathbb{E}_{x \sim Q}
\!\left[
\left(\frac{p(x)}{q(x)}\right)^{\alpha}
\right],
\end{equation}
where $p(x),\ q(x)$ denote the probability density functions of $P$ and $Q$ with respect to a reference measure. 
\end{definition}
Note that order $1$ R\'{e}nyi divergence is defined by taking the limit of $D_\alpha$ as $\alpha$ approaches $1$ from above. R\'{e}nyi divergence enjoys two advantages in our scenario. First, it is highly compatible with Brownian motion and reflects, in a principled way, how the underlying geometry influences the privacy budget. Moreover, it satisfies the following Data Processing Inequality; see \cite[Theorem~9]{vanerven2014renyi}.

\begin{proposition}[Data Processing Inequality.]\label{Prop:dataprocessing}
    Let $\Phi$ be a measurable map from $M$ to $N$. For two distributions $P$ and $Q$ defined on $M$, we have 
    \begin{equation*}
        D_\alpha(P \Vert Q) \ge D_\alpha (\Phi_\#P\Vert \Phi_{\#}Q),
    \end{equation*}
    where $\Phi_{\#}P$ and $\Phi_{\#}Q$ are push-forward distributions of $P$ and $Q$, respectively.
\end{proposition}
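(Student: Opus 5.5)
The statement to prove is the Data Processing Inequality (Proposition~\ref{Prop:dataprocessing}). The plan is to reduce it to Jensen's inequality applied to the density ratio, after conditioning on the $\sigma$-algebra generated by $\Phi$.

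\emph{Step 1: the finite case.} Assume $D_\alpha(P\Vert Q)<\infty$; otherwise there is nothing to prove. For $\alpha>1$ this forces $P\ll Q$. Set $\rho = dP/dQ$ on $M$, so that
\[
D_\alpha(P\Vert Q)=\frac{1}{\alpha-1}\log \mathbb{E}_Q[\rho^\alpha].
\]

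\emph{Step 2: the density of the pushforward.} Absolute continuity is inherited: $\Phi_\#P\ll\Phi_\#Q$, since $\Phi_\#Q(A)=0$ gives $Q(\Phi^{-1}A)=0$, hence $P(\Phi^{-1}A)=0$. The key identification is
\[
\frac{d\Phi_\#P}{d\Phi_\#Q}\circ\Phi=\mathbb{E}_Q\bigl[\rho\mid\sigma(\Phi)\bigr]\quad Q\text{-a.s.}
\]
To verify it, take any measurable $A\subseteq N$. By the change of variables formula,
\[
\int_{\Phi^{-1}A}\frac{d\Phi_\#P}{d\Phi_\#Q}\circ\Phi\,dQ=\Phi_\#P(A)=P(\Phi^{-1}A)=\int_{\Phi^{-1}A}\rho\,dQ .
\]
The left integrand is $\sigma(\Phi)$-measurable, and the sets $\Phi^{-1}A$ exhaust $\sigma(\Phi)$. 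So the defining property of conditional expectation applies, which gives the identification.

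\emph{Step 3: conditional Jensen.} Write $g$ for the density $d\Phi_\#P/d\Phi_\#Q$. The map $u\mapsto u^\alpha$ is convex on $[0,\infty)$, so conditional Jensen gives
\[
\mathbb{E}_{\Phi_\#Q}[g^\alpha]=\mathbb{E}_Q\bigl[(g\circ\Phi)^\alpha\bigr]=\mathbb{E}_Q\bigl[\bigl(\mathbb{E}_Q[\rho\mid\Phi]\bigr)^\alpha\bigr]\le \mathbb{E}_Q\bigl[\mathbb{E}_Q[\rho^\alpha\mid\Phi]\bigr]=\mathbb{E}_Q[\rho^\alpha].
\]
Since $\log$ is increasing and $1/(\alpha-1)>0$, the inequality of divergences follows.

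\emph{Step 4: the case $\alpha=1$.} Apply the same argument with the convex function $u\log u$, or simply pass to the limit $\alpha\downarrow1$. The inequality holds for each $\alpha>1$, and the defining limit preserves it.

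The only delicate point is Step 2. It needs the Radon--Nikodym and conditional-expectation machinery, so the measurable spaces should be standard (for instance Polish spaces such as manifolds) to ensure the densities exist and the change of variables is valid. Everything else is routine. Note that the Rényi divergence does not depend on the choice of reference measure, so we may take $Q$ and $\Phi_\#Q$ themselves as reference measures.
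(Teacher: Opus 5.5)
The paper gives no proof of this proposition; it imports it from \cite[Theorem~9]{vanerven2014renyi}, so there is no in-paper argument to match. Your self-contained proof is correct. The two ingredients are the identification $\frac{d\Phi_\#P}{d\Phi_\#Q}\circ\Phi=\mathbb{E}_Q[\rho\mid\sigma(\Phi)]$ and conditional Jensen for $u\mapsto u^\alpha$; together they form the standard argument. The passage $\alpha\downarrow1$ is also legitimate, because the paper defines $D_1$ as exactly that limit.

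Compared with the citation, your route makes visible exactly what is used. The cited theorem gives more for free: all orders $\alpha\in[0,\infty]$ and arbitrary sub-$\sigma$-algebras, hence also Markov kernels. You cover only $\alpha\ge1$, but that is all the paper needs: it uses the proposition only for $\alpha>1$, in the harmonic-measure argument for the Brownian-motion pathology.

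Two small remarks.

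First, your Step~1 claim that finiteness of $D_\alpha(P\Vert Q)$ forces $P\ll Q$ rests on the standard convention $D_\alpha(P\Vert Q)=\frac{1}{\alpha-1}\log\int p^\alpha q^{1-\alpha}\,d\mu$, which equals $+\infty$ whenever $P(\{q=0\})>0$. You should state this convention explicitly. The paper's formula, read literally as an expectation under $Q$, ignores the set $\{q=0\}$, and under that reading the proposition is false. Take $M=\{a,b,c\}$, let $\Phi$ merge $b$ and $c$, and set $P=(\tfrac14,\tfrac14,\tfrac12)$, $Q=(\tfrac12,\tfrac12,0)$, $\alpha=2$. The literal formula gives $\log\tfrac14$ for $D_2(P\Vert Q)$ but $\log\tfrac54$ for $D_2(\Phi_\#P\Vert\Phi_\#Q)$.

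Second, your Polish/standard-Borel caveat is unnecessary. Radon--Nikodym needs only finite measures, and conditional expectation given $\sigma(\Phi)$ exists on any probability space. The pushforward change of variables also holds in full generality. Your argument therefore works verbatim on arbitrary measurable spaces, which is the generality of the cited result.
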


\subsection{Achieving RDP by Heat Diffusion}
In this section, we introduce the Brownian Motion (BM) mechanism. The idea is straightforward from a physical viewpoint: we allow the input data to evolve according to Brownian motion on a Riemannian manifold for a prescribed time 
$t$, and release the resulting point as the privatized output. The Brownian motion, which can also be viewed as a geodesic random walk in the infinitesimal-step limit, injects noise whose magnitude is controlled by the diffusion time $t$. We emphasize that this mechanism is intrinsic and naturally respects the geometry. The remaining part of this section is mainly concerned with rigorous mathematical analysis of the privacy budget. Before proceeding, we firstly introduce the following concept of differential privacy.  
We define the summary $f$ to be a map from $\mathcal{D}$ to the manifold $\m$. That is, for any $D\in \mathcal{D}$, $f(D)$ is a point on the Riemannian manifold. One can choose the summary to be some statistic of the dataset $D$ that is manifold-valued. We denote the BM mechanism with starting point $f(D)$ and diffusion time $t$ by $B_t(f(D))$. In the DP context, the summary $f$ is always assumed to have a global sensitivity ($\Delta$) with respect to the Riemannian distance. To be specific, given two adjacent datasets $D$ and $D'$ that differ at most by one individual, we have $d(f(D),f(D'))<\Delta$. Under the above assumption, the following theorem gives the explicit privacy budget.

\begin{theorem}\label{Thm:RDP1}
    Let $\m$ be an $m$-dimensional Riemannian manifold such that $\Ric \ge -K$ for some $K$ and let $f$ be an $\m$-valued summary with global sensitivity $\Delta$. Then the BM mechanism with starting point $f(D)$ and diffusion time $t$ satisfies $(\alpha ,\varepsilon)$-RDP with $\varepsilon=\frac{K \alpha \Delta^2}{2(1-e^{-2Kt})}.$ 
\end{theorem}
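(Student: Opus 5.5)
We will prove the bound by feeding a density ratio into the dimension-free Harnack inequality \eqref{eq:dimensionfreeHarnack} with $V\equiv 0$, so that $L=\Delta$ and $P_t$ is the heat semigroup. Fix adjacent $D,D'$ and set $x=f(D)$, $y=f(D')$, so that $d(x,y)<\Delta$. By stochastic completeness the laws of $B_t(x)$ and $B_t(y)$ have densities $p_x(z)=p(x,z,t)$ and $p_y(z)=p(y,z,t)$ with respect to $d\mathrm{vol}$, and these are strictly positive. The quantity to bound is
\[
e^{(\alpha-1)D_\alpha}=\int_{\m}\frac{p_x(z)^\alpha}{p_y(z)^{\alpha-1}}\,d\mathrm{vol}(z).
\]

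The first step is the choice of test function $f=(p_x/p_y)^{\alpha-1}$, which is nonnegative and measurable. Then
\[
P_t f(x)=\int p_x\,(p_x/p_y)^{\alpha-1}\,d\mathrm{vol}=\int p_x^\alpha p_y^{1-\alpha}\,d\mathrm{vol}=:I,
\]
which is exactly the quantity we want to bound. On the other side,
\[
P_t f^{\alpha^*}(y)=\int p_y\,(p_x/p_y)^{(\alpha-1)\alpha^*}\,d\mathrm{vol}.
\]
To close the loop we take the Harnack exponent $\beta$ with $(\alpha-1)\beta=\alpha$, i.e.\ $\beta=\alpha/(\alpha-1)>1$. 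This gives $P_t f^\beta(y)=\int p_x^\alpha p_y^{1-\alpha}\,d\mathrm{vol}=I$ as well.

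Applying \eqref{eq:dimensionfreeHarnack} with exponent $\beta$ gives
\[
I^\beta\le I\,\exp\Big[\frac{K\beta d^2(x,y)}{2(\beta-1)(1-e^{-2Kt})}\Big].
\]
Since $\beta/(\beta-1)=\alpha$, this yields $I^{\beta-1}\le \exp[K\alpha d^2/(2(1-e^{-2Kt}))]$. With $\beta-1=1/(\alpha-1)$, we get $\log I/(\alpha-1)\le K\alpha d^2/(2(1-e^{-2Kt}))$, which is precisely $D_\alpha\le \varepsilon$ after using $d(x,y)<\Delta$. The case $K=0$ is read as the limit $\alpha\Delta^2/(4t)$, and negative $K$ is handled by the same formula. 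The condition $\Ric\ge -K$ supplies hypothesis (i) of the Harnack proposition with $V=0$.

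The main obstacle is justifying the division by $I$, which requires $0<I<\infty$ a priori. Positivity is clear. For finiteness, I would first truncate, replacing $f$ by $f_n=\min(f,n)$. Harnack then gives $(P_tf_n(x))^\beta\le P_tf_n^\beta(y)\,C$, but $P_t f_n^\beta(y)$ is no longer equal to $P_tf_n(x)$. Instead I would bound it by Hölder/monotonicity:
\[
P_tf_n^\beta(y)=\int p_y\min(r,n)^{\alpha}\le \int p_y\, r^{\alpha-1}\min(r,n)\le\dots,
\]
where $r=p_x/p_y$. An alternative route is to truncate the region of integration to compact sets $\{p_x\le n p_y\}\cap B_n$, apply Harnack to $f\mathbf 1_{A_n}$, and use $P_t(f\mathbf 1_{A_n})^\beta(y)=\int_{A_n}p_x^\alpha p_y^{1-\alpha}=P_t(f\mathbf 1_{A_n})(x)$. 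This identity holds exactly and is finite, so dividing is legitimate, and monotone convergence as $n\to\infty$ finishes the argument. I expect this localized version to work cleanly and would use it.
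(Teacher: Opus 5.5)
Your proposal is correct and follows essentially the paper's route: apply the dimension-free Harnack inequality with $V\equiv 0$ at the conjugate exponent $\beta=\alpha/(\alpha-1)$ to the test function $(p_x/p_y)^{\alpha-1}$. Both sides then reduce to $I=\int p_x^\alpha p_y^{1-\alpha}\,d\mathrm{vol}$, and $\beta/(\beta-1)=\alpha$ gives the stated $\varepsilon$ once you use $d(x,y)<\Delta$ and $K/(1-e^{-2Kt})>0$. Your localization to $\{p_x\le n p_y\}$ makes the test function bounded and $I_n\le n^{\alpha-1}$ before dividing, and monotone convergence then finishes; this is a correct rigor step, and you can drop the unfinished $\min(f,n)$ variant.
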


An immediate observation is that if we differentiate $\varepsilon$ with respect to $t$, we obtain $$\frac{d\varepsilon}{dt} =\frac{-K^2\alpha \Delta^2e^{-2Kt}}{(1-e^{-2Kt})^2} \le 0.$$
This is independent of the value of $K$. Note that with larger diffusion time $t$, the mechanism certainly has a smaller privacy budget and hence stronger privacy. This aligns with our intuition about the Brownian motion mechanism mentioned before. The geometry of the underlying manifold also has a significant influence on the privacy budget. $K$ that parametrizes the Ricci curvature lower bound, determines the asymptotic behavior of the privacy budget with respect to $t$. This matters from a differential-privacy perspective since we expect that our mechanism can achieve an arbitrarily strong privacy level with sufficiently large $t$. We roughly classify the Riemannian manifolds into three categories based on the sign of the Ricci curvature and tackle this problem:

\begin{enumerate}
    \item[(i)] In the case that $\Ric >0$ everywhere, the manifold is compact (e.g., a sphere, Stiefel manifold, Grassmannian) by Proposition~\ref{Thm:BonnetMyers}. Under the convention of Theorem~3.3, we have $K<0$. We may substitute $N=-K$ and we have $\varepsilon=N\alpha \Delta^2/(2e^{2Nt}-2)$ for $N >0$. It is clear that the privacy budget will exponentially decay to $0$ with increasing $t$. This property guarantees that we can achieve an arbitrarily small privacy budget with sufficiently large diffusion time.
    \item[(ii)] In the case where the underlying manifold is Euclidean, $\Ric=0$ everywhere. Now we have $K=0$, which is a simple pole in the sense that the denominator of $\varepsilon$ equals to $0$. If we consider the Taylor expansion of $e^{-2Kt}$ and let $K$ tend to $0$, we obtain 
\begin{align*}
   \lim_{K\rightarrow0} \varepsilon=\lim_{K\rightarrow 0}\frac{K\alpha \Delta^2}{2(1-(1-2Kt+\CO(K^2)))}=\frac{\alpha \Delta^2}{4t}.
\end{align*}
In Euclidean space, the transition density functions of $B_t(x)$ and $B_t(y)$ are normal distributions $\mathcal{N}(x,2tI_m)$ and $\mathcal{N}(y,2tI_m)$, respectively. One can compute the R\'{e}nyi divergence of two normal distributions \cite[Equation~10]{vanerven2014renyi}, which is exactly the above limit. Thus, for Euclidean space, our upper bound is sharp.
\item[(iii)] If the manifold has negative Ricci curvature everywhere, the parameter $K$ is a positive number. In this case, we increase the diffusion time to infinity and find that the limit is given by $K\alpha \Delta^2/2$. Since $\varepsilon$ is decreasing with respect to $t$, this limit actually serves as a lower bound for $\varepsilon$. This demonstrates a "pathology" of the pure BM mechanism when the underlying manifold has negative Ricci curvature: the privacy guarantee cannot be improved beyond $\varepsilon = K\alpha \Delta^2/2$, regardless of how large the diffusion time 
$t$ becomes. 
\end{enumerate}

In summary, the BM mechanism works  well when $\m$ has a non-negative Ricci curvature lower bound. In particular, it recovers the Gaussian mechanism in the sense of the RDP definition when the manifold is Euclidean. The following theorem underscores the ability of the BM mechanism to achieve R\'{e}nyi differential privacy on certain manifolds.  

\begin{theorem}[BM Mechanism]\label{Thm:BMM}
   Let $\m$ be a complete Riemannian manifold such that $\Ric(X) \ge -K \Vert X \Vert^2$ for some $K < 0$ and let $f$ be an $\m$-valued summary with global sensitivity $\Delta$. Then, the BM mechanism with starting point $f(D)$ satisfies $(\alpha, \varepsilon)$-RDP with $$t=\frac{\log(2\varepsilon)-\log(2\varepsilon-K\alpha \Delta^2)}{2K}.$$ In particular, when $\m$ is a Euclidean space, the BM mechanism satisfies $(\alpha, \varepsilon)$-RDP with $t=\frac{\alpha\Delta^2}{4\varepsilon}$, which coincides with the result in \cite{mironov2017renyi}.
\end{theorem}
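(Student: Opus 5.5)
Theorem~\ref{Thm:BMM} follows from Theorem~\ref{Thm:RDP1} by inverting the formula for the privacy budget. Theorem~\ref{Thm:RDP1} itself I would obtain from the dimension-free Harnack inequality (\ref{eq:dimensionfreeHarnack}) with $V\equiv 0$, via the standard duality between Harnack inequalities and R\'enyi divergences of heat kernels.

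\textbf{Step 1: the divergence bound.} Write $x=f(D)$ and $y=f(D')$, so $d(x,y)\le\Delta$, and let $h(z)=p(x,z,t)/p(y,z,t)$. Then
\[
e^{(\alpha-1)D_\alpha}=\int h^{\alpha-1}p(x,z,t)\,d\mathrm{vol}(z)=P_t(h^{\alpha-1})(x).
\]
I would apply (\ref{eq:dimensionfreeHarnack}) to $g=h^{\alpha-1}$ with the roles of $x$ and $y$ arranged so that
\[
(P_t g(x))^{\alpha}\le P_t g^{\alpha/(\alpha-1)}(y)\,e^{C},
\qquad
C=\frac{K\alpha d^2(x,y)}{2(\alpha-1)(1-e^{-2Kt})}.
\]
Since $P_t(h^\alpha)(y)=\int h^{\alpha-1}p(x,\cdot,t)=P_t g(x)$, this rearranges to $P_t g(x)^{\alpha-1}\le e^{C}$.

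The exponents must be matched carefully. The cleanest route is the known equivalence (Wang) in which Harnack with power $\alpha$ yields
\[
\int p(x,\cdot,t)^{\alpha}p(y,\cdot,t)^{1-\alpha}\le \exp\Big[\frac{\alpha K d^2}{2(1-e^{-2Kt})}\Big]^{\alpha-1}.
\]
This gives $D_\alpha\le \frac{K\alpha\Delta^2}{2(1-e^{-2Kt})}$, using $d\le\Delta$ and the fact that the right-hand side is increasing in $d$. The data processing step is trivial here because the output is exactly $B_t$. This exponent bookkeeping is the main place where care is needed. If the direct application produces the wrong power, I would instead apply Harnack with exponent $\alpha' $ chosen so that $\alpha'/(\alpha'-1)$ matches the required conjugate power.

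\textbf{Step 2: inverting the budget.} Given a target $\varepsilon$, solve
\[
\varepsilon=\frac{K\alpha\Delta^2}{2(1-e^{-2Kt})}
\]
for $t$. This gives $1-e^{-2Kt}=K\alpha\Delta^2/(2\varepsilon)$, hence
\[
e^{-2Kt}=\frac{2\varepsilon-K\alpha\Delta^2}{2\varepsilon},
\qquad
t=\frac{\log(2\varepsilon)-\log(2\varepsilon-K\alpha\Delta^2)}{2K}.
\]
For $K<0$, the quantity $2\varepsilon-K\alpha\Delta^2>2\varepsilon>0$, so the logarithm is defined. The numerator is then negative and the denominator is negative, so $t>0$ for every $\varepsilon>0$. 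Since $\varepsilon$ is decreasing in $t$, any larger $t$ also works.

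\textbf{Step 3: the Euclidean case.} Take $K\to0$ in both formulas. On the budget side this is the limit computed in case (ii), giving $\varepsilon=\alpha\Delta^2/(4t)$. On the time side, expanding $\log(1-K\alpha\Delta^2/(2\varepsilon))\approx -K\alpha\Delta^2/(2\varepsilon)$ gives $t=\alpha\Delta^2/(4\varepsilon)$. Alternatively, one can use the direct Gaussian computation from \cite{vanerven2014renyi} for $\mathcal N(x,2tI)$ versus $\mathcal N(y,2tI)$, which yields $\alpha\Delta^2/(4t)$ and matches the Gaussian mechanism of \cite{mironov2017renyi}.
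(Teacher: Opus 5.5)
Your proposal is correct and follows the paper's route: the paper obtains Theorem~\ref{Thm:BMM} directly from Theorem~\ref{Thm:RDP1} by exactly the inversion in your Step~2, and it handles the Euclidean case through the $K\to 0$ limit, checked against the Gaussian R\'enyi divergence $\alpha\Delta^2/(4t)$. Step~1 is unnecessary, since Theorem~\ref{Thm:RDP1} can simply be cited; if you keep it, the correct instance is the Harnack inequality with the conjugate exponent $\alpha/(\alpha-1)$ applied to $h^{\alpha-1}$, which yields $D_\alpha\le K\alpha d^2/\bigl(2(1-e^{-2Kt})\bigr)$ directly, whereas your first display (power $\alpha$ with constant $C$) is not a valid application of the inequality.
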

Theorem~\ref{Thm:BMM} follows directly from Theorem~\ref{Thm:RDP1}. To address the pathology mentioned above, we must analyze the asymptotic behavior of Brownian motion on certain manifolds. The key issue is whether the Brownian motion can ``forget'' its starting point and become ergodic in the long-time limit. This is closely tied to DP: if our BM mechanism injects noise that becomes asymptotically independent of the original data, then the privacy budget can be arbitrarily small. Conversely, if the injected noise remains dependent of the original data regardless of how large the diffusion time is, then the privacy budget must be bounded below. This ``forgetfulness'' phenomenon is determined by the sign of the Ricci curvature of the underlying manifold. Next, we formalize this observation in rigorous mathematical terms.

 Let us start with Brownian motion on a positively curved Riemannian manifold. In this setting, the Laplacian-Beltrami operator $\Delta$ is compact, negative, and self-adjoint with a discrete spectrum of eigenvalues. By Proposition~\ref{prop:Sturm-Liouville}, the heat kernel $p(x,z,t)$ admits the following decomposition as the transition kernel of Brownian motion:
\begin{align*}
    p(x,z,t)=\frac{1}{\mu(\m)}+\sum_{j=1}^{\infty}e^{-\lambda_j t}\varphi_j(x)\varphi_j(z).  
\end{align*}
Clearly, as $t\to\infty$, $p(x,z,t)$ converges to the uniform distribution supported on $\m$. The R\'{e}nyi divergence between two uniform distributions is $0$, which explains why the privacy budget converges to $0$. Practically speaking, the Brownian motion $B_t$ gradually ``forgets'' its starting point and may end up at any location on the manifold with equal probability. This is a favorable phenomenon from a DP perspective. As mentioned in Section~\ref{sec:preliminaries}, for small diffusion time $t$ the transition density admits the classical Gaussian-type short-time expansion (locally uniformly for $z$ in a normal neighborhood of $x$),
\[
p(x,z,t)
=
(4\pi t)^{-m/2}\exp\!\Big(-\frac{d(x,z)^2}{4t}\Big)\big(u_0(x,z)+O(t)\big),
\qquad t\downarrow 0,
\]
where $u_0$ is smooth with $u_0(x,x)=1$; see, e.g., \cite{grigoryan2009heat,rosenberg1997laplacian}.
Thus, in normal coordinates, the law of $B_t(x)$ resembles a Gaussian distribution centered at $x$ for small $t$, while for large $t$ it approaches the uniform distribution on a compact manifold. In this sense, $B_t$ continuously interpolates between these two regimes. As $t$ increases, the injected noise becomes progressively less dependent on the original data, thereby yielding stronger privacy guarantees.

\begin{figure}[h!]
    \centering
    \stackunder[5pt]{\includegraphics[width=0.29\linewidth]{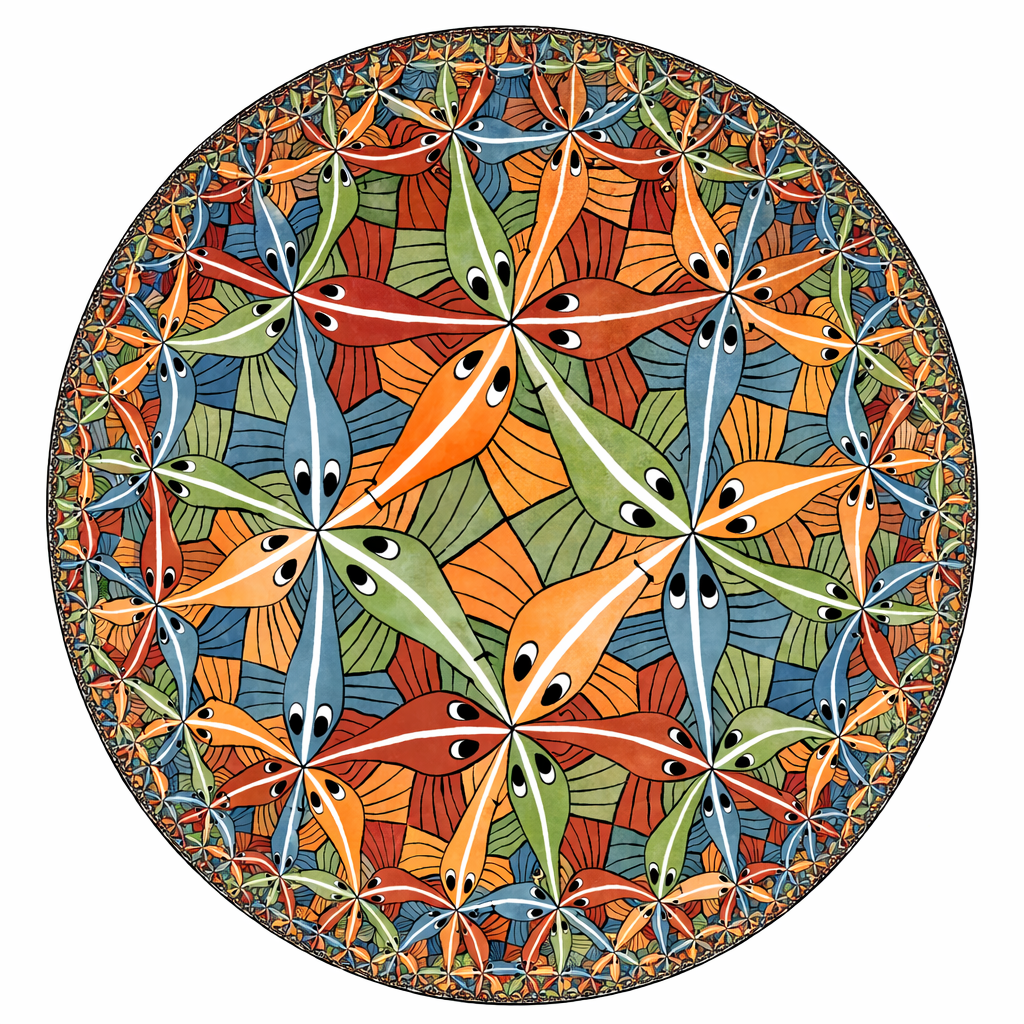}}{}
   \hspace{0.02\linewidth}
    \stackunder[5pt]{\includegraphics[width=0.32\linewidth]{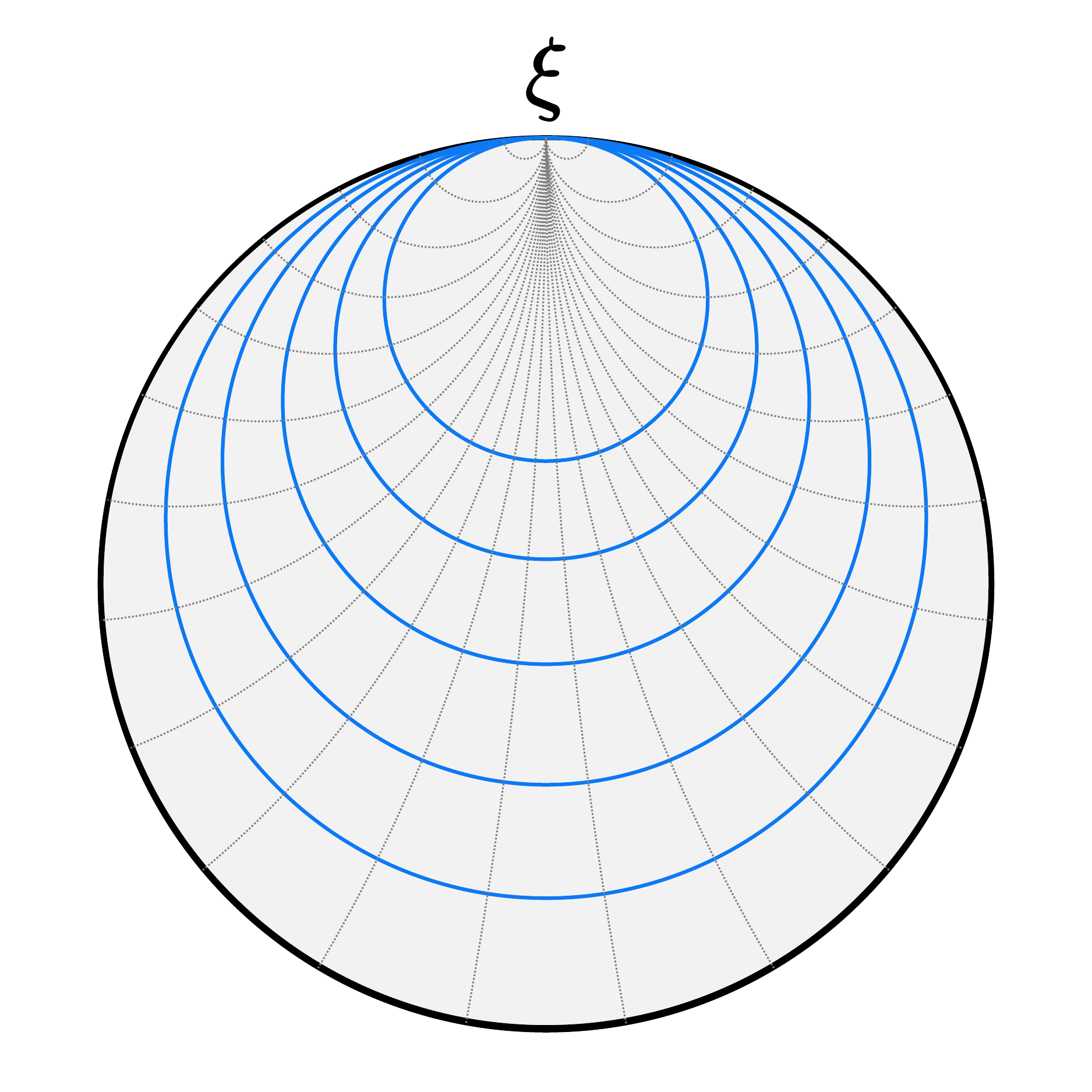}}{}
    \caption{The left picture depicts Circle Limit III by Dutch artist M. C. Escher, which was inspired by the Poincar\'{e} disk model of hyperbolic space. The right picture visualizes the Gromov boundary of hyperbolic space in the Poincar\'{e} disk model. The black circle serves as $\partial \overline{\m}$. Blue circles are horospheres centered at $\xi \in \partial \overline{\m}$.
    \label{fig:placeholder}}
\end{figure}
However, for manifolds with bounded negative sectional curvature, the long-time behavior of
$B_t$ is qualitatively different: intuitively, geodesics diverge and Brownian trajectories
tend to drift toward the boundary at infinity.
In particular, consider a Hadamard manifold, i.e., a complete, simply connected manifold with non-positive sectional
curvature. Such $\m$ admits a visual compactification
$\overline{\m}=\m\cup\partial\overline{\m}$, where $\partial\overline{\m}=\overline{\m}\setminus \m$
is the Gromov boundary (homeomorphic to $\mathbb{S}^{m-1}$ under pinched negative curvature).
By \cite{sullivan1983dirichlet}, Brownian motion $B_t$ starting at any $x\in\m$ converges almost surely
to a random limit in $\partial\overline{\m}$. For $x\in\m$, one may define the probability
measure $\nu_x$ on $\partial\overline{\m}$.

\begin{definition}[Harmonic Measure Class]
Let $A \subset \partial \overline{\m}$ be a Borel set, $\nu_x(A)$ denote the probability that Brownian motion starting at $x$ converges to a limit in $A$.       
\end{definition}
Fixing $A$, Sullivan \cite{sullivan1983dirichlet} proves that $x\rightarrow\nu_x(A)$ is a non-trivial bounded harmonic function, suggesting that $\nu_x \ne \nu_y$ with $x\ne y$. We then consider the visual projection $\Phi$: $\m \rightarrow \partial\overline{\m}$ by sending $x$ to the geodesic ray $\gamma_o(x)$ with a fixed point $o$. Under this construction, we have $\Phi_{\#}(B_t(x))=\nu_x$. By Proposition~\ref{Prop:dataprocessing}, 
\begin{align*}
    D_{\alpha}(B_t(x)\Vert B_t(y)) \ge D_\alpha(\Phi_{\#}B_t(x)\Vert \Phi_{\#}B_t(y))=D_{\alpha}(\nu_x\Vert \nu_y)>0.
\end{align*}
This explains why the pure BM mechanism exhibits a certain "pathology" on negatively curved spaces. It also leads to the following question: 
\begin{tcolorbox}[colback=white]
Can we modify pure Brownian motion so that it asymptotically forgets the initial point on negatively curved manifolds?
\end{tcolorbox}

It turns out that one can introduce a suitable drift to Brownian motion so that the resulting stationary density no longer depends on the initial point; this modified process is known as the Langevin process.

\subsection{Langevin Process Mechanism}
In this section, we restrict the underlying space to be a Hadamard manifold (e.g., hyperbolic spaces). This assumption guarantees an empty cut locus for any point, which simplifies both the analysis and the computations, while still covering many important applications. 

To construct a Langevin process $X_t$ with a stationary density, the key step is to choose a suitable potential function $V$ such that the Bakry--\'{E}mery Ricci curvature tensor is positive (see Section~\ref{subsec:2.3}). A convenient choice is $V(x)=\lambda d^2(o,x)/2$ with parameter $\lambda$ and an arbitrary fixed point $o$. Intuitively, one may view this as placing an anchor at $o$ and pulling a pure Brownian motion toward the anchor. The parameter $\lambda$ controls the strength of this pull: larger $\lambda$ corresponds to stronger attraction. Our Langevin mechanism evolves the data via Langevin diffusion for time $t$. We denote the Langevin mechanism with starting point $a$ and anchor $o$ by $X_t^{o}(a)$. With a suitable choice of $\lambda$, the resulting Langevin mechanism satisfies the desired RDP guarantee.

\begin{theorem}\label{Thm:RDP2}
    Let $\m$ be a $m$-dimensional Hadamard manifold such that $\Ric(X)\ge -K\Vert X \Vert^2$ for some $K >0$. For a summary with bounded sensitivity $\Delta$, and when $\lambda >K$, the Langevin mechanism with anchor $o$ satisfies $(\alpha,\varepsilon)$-RDP with $t$ given by 
    \begin{align*}
    t=\frac{\log(1+(\lambda-K)\alpha\Delta^2/(2\varepsilon))}{2(\lambda-K)}.
\end{align*}
\end{theorem}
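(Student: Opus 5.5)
The plan is to reduce Theorem~\ref{Thm:RDP2} to the same Harnack-based argument that yields Theorem~\ref{Thm:RDP1}, now applied to the drifted operator $L=\Delta-\langle\nabla V,\nabla\rangle$ with $V(x)=\lambda d^2(o,x)/2$. The first step is a Bakry--\'Emery curvature lower bound $\Ric_V\ge (\lambda-K)$, i.e.\ condition (i) of the dimension-free Harnack proposition with $K$ replaced by $K'=-(\lambda-K)<0$. On a Hadamard manifold the cut locus is empty, so $V$ is smooth. Hessian comparison under nonpositive sectional curvature gives $\nabla^2 d^2(o,\cdot)/2\ge g$, since comparison with the flat case yields $\nabla^2 (r^2/2)\ge \mathrm{Id}$. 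Hence $\nabla^2 V\ge \lambda g$, and with $\Ric\ge -K$ we obtain $\Ric_V(X)\ge(\lambda-K)\|X\|^2$.

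The second step converts the Harnack inequality \eqref{eq:dimensionfreeHarnack} into a R\'enyi divergence bound. Write $p_t$ for the transition density of $P_t$ with respect to a reference measure, for instance $e^{-V}d\mathrm{vol}$, where $L$ is symmetric. Take $x=f(D)$, $y=f(D')$, and test with $f=(p_t(x,\cdot)/p_t(y,\cdot))^{\alpha-1}$, truncated if needed for integrability. Then
\[
P_t f(x)=\int \Big(\tfrac{p_t(x,\cdot)}{p_t(y,\cdot)}\Big)^{\alpha-1}p_t(x,\cdot)=e^{(\alpha-1)D_\alpha}.
\]
The standard trick in Wang's framework is to apply the Harnack inequality with exponent $\alpha$ in the form $(P_tf)^\alpha(x)\le P_tf^\alpha(y)\,e^{\Phi}$, with $\Phi=\frac{K'\alpha d^2}{2(\alpha-1)(1-e^{-2K't})}$, and choose $f=(p_t(x,\cdot)/p_t(y,\cdot))^{1/\alpha}\cdot(\text{normalizer})$, or equivalently use the known consequence $\int (p_t(x,\cdot)/p_t(y,\cdot))^{1/(\alpha-1)}p_t(x,\cdot)\le e^{\Phi/(\alpha-1)}$ after relabeling exponents. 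I expect this bookkeeping of the exponents to be the main obstacle. The target is $D_\alpha(X_t(x)\|X_t(y))\le \frac{K'\alpha d^2(x,y)}{2(1-e^{-2K't})}$, which is exactly the formula of Theorem~\ref{Thm:RDP1} with $K$ replaced by $K'$. So I would simply invoke the proof of Theorem~\ref{Thm:RDP1} verbatim, noting that it uses only \eqref{eq:dimensionfreeHarnack}, which holds for the drifted semigroup as well. Bounding $d(x,y)\le\Delta$ and using that $K'\mapsto \frac{K'}{1-e^{-2K't}}$ is positive gives
\[
\varepsilon=\frac{(\lambda-K)\alpha\Delta^2}{2(e^{2(\lambda-K)t}-1)}.
\]

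The final step is algebraic inversion. Solving $e^{2(\lambda-K)t}=1+(\lambda-K)\alpha\Delta^2/(2\varepsilon)$ gives the stated $t$. Since $\varepsilon$ is decreasing in $t$, any larger $t$ also works. A remaining technical point is justifying Harnack on the noncompact manifold. Completeness is assumed, and the drift is confining, so $X_t$ is nonexplosive; I would cite this or verify it via It\^o's formula (Proposition~\ref{prop:ito_generator_form}) applied to $d^2(o,\cdot)$, using the Laplacian comparison theorem to bound $Ld^2$ above by $C-2\lambda d^2$.
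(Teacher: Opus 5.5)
Your proposal is correct and follows essentially the same route as the paper. Hessian comparison on the Hadamard manifold gives $\nabla^2 V\ge\lambda g$ and hence $\Ric_V\ge\lambda-K>0$; the Harnack-to-R\'enyi argument behind Theorem~\ref{Thm:RDP1} then applies to the drifted semigroup with $K$ replaced by $K'=K-\lambda<0$, giving $\varepsilon=\frac{(\lambda-K)\alpha\Delta^2}{2(e^{2(\lambda-K)t}-1)}$, which inverts to the stated $t$. The exponent bookkeeping you flagged resolves cleanly: your first test function $f=(p_t(x,\cdot)/p_t(y,\cdot))^{\alpha-1}$ works if you apply the Harnack inequality with the conjugate exponent $\alpha/(\alpha-1)$ instead of $\alpha$, since then $P_tf(x)=P_tf^{\alpha/(\alpha-1)}(y)=e^{(\alpha-1)D_\alpha}$ and the Harnack factor becomes exactly $\frac{K'\alpha d^2(x,y)}{2(1-e^{-2K't})}$.
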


One may note that $\lambda$ can be chosen arbitrarily as long as $\lambda > K$. A natural question is how the values of $\lambda$ and $t$ influence our Langevin mechanism. As discussed in Section~\ref{subsec:2.3}, $\lambda$ directly affects the convergence rate of $X_t$ to its stationary distribution. Dynamically, a larger $\lambda$ leads to faster ``forgetting'' of the starting point $a$ and stronger contraction toward the anchor $o$. On the other hand, $t$ controls how close the released point remains to the input. In practice, one can calibrate these two parameters jointly with respect to utility.

\subsection{Analysis on Utility}
In this section, we analyze the utility of the heat diffusion mechanism and the Langevin mechanism. That is, we study the expected distance between the original data and the privatized output. Our strategy is to consider the associated radial process $r(X_t)$. This reduces the problem to a one-dimensional stochastic process on $\mathbb{R}$, where a variety of tools from stochastic analysis can be applied.

First, consider the case where the Riemannian manifold has a positive lower bound on the Ricci curvature. In this case, our RDP mechanism is the standard Brownian motion generated by $\Delta$. By Proposition~\ref{Thm:BonnetMyers}, $\m$ is compact, and the presence of a cut locus implies that the distance function is not smooth everywhere. Nevertheless, we have the following representation; see \cite[Theorem~3.5.1]{hsustochastic}:
\begin{equation}\label{eq:radialrepresentation}
    r(B_t)=\sqrt{2}\beta_t+\int_0^t\Delta r(X_s)\,ds -L_t,
\end{equation}
where $\beta_t$ is a one-dimensional Euclidean Brownian motion and $L_t$ is a non-decreasing stochastic process that increases only when $B_t$ lies in the cut locus of $o$. We then apply It\^{o}'s formula together with the Laplacian comparison theorem to obtain the desired result.

\begin{theorem}\label{Thm:boundofd}
    Assume $\m$ is a complete $m$-dimensional Riemannian manifold such that $\Ric(X)\ge K \Vert X\Vert^2$ for some $K>0$, and let $B_t$ be the Brownian motion starting at point $a$ with diffusion time $t$. Then we have 
    \begin{align*}
        \mathbb{E}[d(a,B_t)]\le \sqrt{2mt}.
    \end{align*}
\end{theorem}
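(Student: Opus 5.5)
We will bound $\mathbb{E}[r_t^2]$, where $r_t=d(a,B_t)$, and then apply Jensen: $\mathbb{E}[r_t]\le(\mathbb{E}[r_t^2])^{1/2}$. The target is therefore
\[
\mathbb{E}[r_t^2]\le 2mt .
\]
This is exactly the Euclidean identity for $\mathcal N(a,2tI_m)$, so positive curvature should only help.

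\textbf{Itô formula for $r_t^2$.} We work with $r^2$ rather than $r$ because $\Delta(r^2)=2+2r\Delta r$ has no singularity at $r=0$. Apply Itô to the radial representation \eqref{eq:radialrepresentation}, squared:
\[
r_t^2 = 2\sqrt{2}\int_0^t r_s\,d\beta_s + \int_0^t \bigl(2 + 2 r_s\Delta r(B_s)\bigr)\,ds - 2\int_0^t r_s\,dL_s .
\]
The constant $2$ comes from the quadratic variation $d[\sqrt2\beta]_s=2\,ds$.
\begin{itemize}
\item Since $L$ is non-decreasing and $r_s\ge 0$, the local-time term is nonpositive and can be dropped for an upper bound.
\item By Proposition~\ref{Thm:BonnetMyers}, $\m$ is compact, so $r_s\le \mathrm{diam}(\m)$ is bounded. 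Hence $\int r_s\,d\beta_s$ is a true martingale by Proposition~\ref{prop:local_to_martingale}, and it has mean zero.
\end{itemize}

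\textbf{Laplacian comparison.} Off the cut locus, $\Ric\ge K>0$ gives $\Delta r\le (m-1)\sqrt{K/(m-1)}\cot(\sqrt{K/(m-1)}\,r)$ by the Bishop comparison with the model sphere. This is the positive-curvature analogue of the stated Laplacian comparison theorem. In any case it is at most the flat bound, $\Delta r\le (m-1)/r$, because $x\cot x\le 1$ for $0<x<\pi$. So $r\Delta r\le m-1$, and the drift satisfies
\[
2+2r\Delta r\le 2+2(m-1)=2m .
\]
Taking expectations gives $\mathbb{E}[r_t^2]\le 2mt$, and Jensen finishes the proof.

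\textbf{Main obstacle.} The delicate step is justifying the Itô calculus through the cut locus and at $r=0$. The representation \eqref{eq:radialrepresentation} holds in Hsu's sense, with $L_t$ supported on $\Cut_a$. The Lebesgue set of times $\{s: B_s\in\Cut_a\}$ has measure zero, since $\Cut_a$ has volume zero and $B_s$ has a density. So the comparison bound only needs to hold off the cut locus, where it is valid. The singularity of $\Delta r$ at $a$ is harmless, because $r\Delta r\to m-1$ as $r\to 0$, so the drift stays bounded.

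To be fully safe, one could avoid the representation altogether. Apply Hsu's approximation or the distributional inequality $\Delta r^2\le 2m$, which holds in the barrier/distribution sense on all of $\m$: the cut-locus contribution is a nonpositive singular measure. Combined with the semigroup identity $\frac{d}{dt}P_t(r^2)(a)=P_t(\Delta r^2)(a)$, this gives the same bound after a mollification argument.
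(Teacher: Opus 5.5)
Your proposal is correct and follows essentially the route the paper indicates: Itô's formula applied to $r_t^2$ through the radial decomposition \eqref{eq:radialrepresentation}, discarding the nonpositive local-time term, and using the Laplacian comparison $r\,\Delta r\le m-1$ (valid because $\Ric\ge K>0$) to get $\mathbb{E}[r_t^2]\le 2mt$, followed by Jensen. Your use of Bonnet--Myers compactness to make the stochastic integral a true martingale, and of the measure-zero time set on the cut locus, are the right justifications for the technical steps.
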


Next, we consider the Hadamard manifold scenario. In this case, the underlying manifold is non-compact, which makes square integrability nontrivial. However, this property is crucial for showing that certain It\^{o} integrals are martingales. We therefore first establish the following lemma.

\begin{lemma}\label{lemma:squreintegrability}
    Let $(M,g)$ be a Hadamard manifold and consider the Langevin diffusion $(X_t)_{t\ge0}$ with generator $L=\Delta-\langle \nabla V,\nabla\rangle $, and $V(x)=\frac\lambda 2 d(o,x)^2$. Assume $X_t$ is non-explosive and that the Bakry-\'Emery curvature satisfies
$\mathrm{Ric}_V:=\mathrm{Ric}+\nabla^2 V \ge K$ for some $K>0$.
Fix $a=X_0$ and define $f(x)=d(a,x)^2$. We have for every $T<\infty$,
\[
\mathbb E\int_0^T \|\nabla f(X_s)\|^2\,ds <\infty.
\]
\end{lemma}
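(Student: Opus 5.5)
On a Hadamard manifold the function $f(x)=d(a,x)^2$ is smooth and satisfies $\|\nabla f(x)\|^2=4d(a,x)^2=4f(x)$. The claim therefore reduces to proving
\[
\sup_{0\le s\le T}\mathbb E\big[d(a,X_s)^2\big]<\infty,
\]
after which Tonelli's theorem gives $\mathbb E\int_0^T\|\nabla f(X_s)\|^2ds\le 4T\sup_{s\le T}\mathbb E f(X_s)<\infty$. By the triangle inequality, $d(a,x)^2\le 2d(a,o)^2+2d(o,x)^2$, so it suffices to bound the second moment of the distance to the anchor. I will work with the Lyapunov function $g(x)=d(o,x)^2$, which is also smooth since the cut locus is empty (Remark~\ref{rem:cutlocus_ito}).

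The first step is to compute $Lg$. We have $\nabla g=2r\nabla r$ with $r=d(o,\cdot)$, and $\nabla V=\lambda r\nabla r$, so $\langle\nabla V,\nabla g\rangle=2\lambda r^2=2\lambda g$. For the Laplacian, $\Delta g=2+2r\Delta r$. By the Laplacian comparison theorem with $\Ric\ge-(m-1)K'$, where the curvature lower bound on $\Ric$ alone follows from the Hadamard assumption together with a Ricci lower bound, we get $r\Delta r\le (m-1)\sqrt{K'}\,r\coth(\sqrt{K'}r)\le (m-1)(1+\sqrt{K'}r)$. Combining these gives
\[
Lg\le C_1+C_2\sqrt g-2\lambda g\le C_3-\lambda g
\]
for suitable constants $C_i$. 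The key point is that the confining drift produces the term $-2\lambda g$, which dominates the at most linear growth of $\Delta g$.

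The second step is localization. Let $\tau_n$ be the exit time of the ball $B(o,n)$; non-explosiveness gives $\tau_n\uparrow\infty$. On $[0,t\wedge\tau_n]$ the gradient $\nabla g$ is bounded, so the stopped local martingale $M^g$ is a true martingale by Proposition~\ref{prop:ito_generator_form}. Taking expectations yields
\[
\mathbb E g(X_{t\wedge\tau_n})\le g(a)+C_3 t .
\]
Applying Fatou's lemma as $n\to\infty$ then gives $\mathbb E g(X_t)\le d(o,a)^2+C_3t$ uniformly for $t\le T$, which completes the argument. (One could instead apply Gronwall's inequality to the bound $Lg\le C_3-\lambda g$ to obtain a uniform-in-time estimate, but the linear bound is already sufficient.)

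The main obstacle is the estimate on $\Delta g$, namely controlling the Laplacian of the distance function on a possibly unbounded-curvature Hadamard manifold. The Laplacian comparison theorem requires a Ricci lower bound on $\Ric$ itself, whereas the stated hypothesis concerns $\Ric_V$. I would first try to use the Ricci lower bound $\Ric\ge -K$ available in the setting of Theorem~\ref{Thm:RDP2}. If that is unavailable, the alternative is to extract the bound from $\Ric_V\ge K$ via the weighted Laplacian comparison, in which $Lr$ is compared directly and the drift contribution $-\lambda r$ already appears inside the comparison.
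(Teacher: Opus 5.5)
\paragraph{Verdict: structure sound, but the key estimate is not justified.}
Most of your argument is correct: $\|\nabla f\|^2=4f$, the reduction to $\sup_{s\le T}\mathbb E\,d(o,X_s)^2$, the Lyapunov function $g=d(o,\cdot)^2$, localization at exit times, Fatou, and Tonelli. The gap is the one estimate everything rests on, $r\Delta r\le (m-1)(1+\sqrt{K'}\,r)$. You justify it by a lower bound $\Ric\ge -(m-1)K'$, which is not among the lemma's hypotheses. Your remark that such a bound follows from the Hadamard assumption is wrong: $\Sec\le 0$ only gives $\Ric\le 0$. If the radial curvature of a Hadamard manifold tends to $-\infty$, then $\Delta r$ grows superlinearly and $r\Delta r$ can dominate the drift term $-2\lambda r^2$, so $Lg\le C-\lambda g$ fails. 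Borrowing $\Ric\ge -K$ from Theorem~\ref{Thm:RDP2} adds a hypothesis the lemma does not make. The weighted comparison you mention as a fallback is only named, not carried out. As written, the actual hypothesis $\Ric_V\ge K$ is never used.

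\paragraph{The missing step.}
A one-line computation shows that $\Ric_V\ge K$ already supplies exactly the radial bound you need. Let $r=d(o,\cdot)$ and $\partial_r=\nabla r$. Since $|\nabla r|=1$, we have $\nabla^2 r(\partial_r,\cdot)=0$, so
\[
\nabla^2 V(\partial_r,\partial_r)=\lambda\bigl(|\nabla r|^2+r\,\nabla^2 r(\partial_r,\partial_r)\bigr)=\lambda .
\]
Hence $\Ric(\partial_r,\partial_r)\ge K-\lambda$; incidentally $\lambda\ge K$, because $\Ric\le 0$. Laplacian comparison only uses Ricci curvature in the radial direction, through the Riccati inequality $\partial_r(\Delta r)\le -\frac{(\Delta r)^2}{m-1}-\Ric(\partial_r,\partial_r)$ obtained from Bochner's formula. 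Since $o$ has empty cut locus, the comparison holds on all of $\m\setminus\{o\}$:
\[
\Delta r\le (m-1)\sqrt H\coth(\sqrt H\, r), \qquad (m-1)H=\lambda-K .
\]
Therefore
\[
Lg\le 2m+2\sqrt{(m-1)(\lambda-K)}\,r-2\lambda r^2\le 2m+\frac{(m-1)(\lambda-K)}{\lambda}-\lambda g ,
\]
and the rest of your argument goes through unchanged.

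\paragraph{Remark.}
This route never uses $K>0$. If you want to exploit positivity of $K$, you could instead use the $W_2$-contraction \eqref{eq:expdecayinWass} toward the invariant measure $\propto e^{-V}d\mathrm{vol}$, which has a finite second moment under $\Ric_V\ge K>0$. That gives a bound on $\mathbb E\,d(a,X_t)^2$ uniform in $t$, at the cost of heavier machinery.
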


Combining Laplacian comparison and the cosine law in Hadamard manifolds with Lemma~\ref{lemma:squreintegrability}, we have certain estimation of utility.
\begin{theorem}\label{Thm:utility}
     Let $\m$ be a Hadamard manifold of dimension $m$ such that  $  \Ric(X)\ge -K \Vert X\Vert^2$ for some $K \ge 0$. Let $X_t$ be the Langevin diffusion process generated by $L=\Delta-\langle \nabla V, \nabla \rangle$ with starting point $a$ and $V(x)=\frac \lambda2 d^2(o,x)$. The point $o$ is chosen arbitrarily from $\m$. We have 
     \begin{align*}
         \mathbb{E}[d(a,X_t)] \le \Bigg( \frac{2m}{\lambda}+\bigg(d(o,a)+\frac{\sqrt{(m-1)K}}{\lambda}\bigg)^2\Bigg)^{1/2}\big(1-e^{-\lambda t}\big)^{1/2}. 
     \end{align*}
\end{theorem}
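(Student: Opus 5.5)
We bound $\mathbb{E}[d(a,X_t)]^2\le \mathbb{E}[f(X_t)]$ by Jensen, where $f(x)=d(a,x)^2$, and derive a linear differential inequality for $u(t):=\mathbb{E}[f(X_t)]$. On a Hadamard manifold the cut locus is empty, so $f$ is smooth (Remark~\ref{rem:cutlocus_ito}). By Lemma~\ref{lemma:squreintegrability} the martingale part in Proposition~\ref{prop:ito_generator_form} is a true martingale, and therefore
\[
u(t)=\int_0^t \mathbb{E}[(Lf)(X_s)]\,ds,\qquad Lf=\Delta f-\langle \nabla V,\nabla f\rangle .
\]
(Lemma~\ref{lemma:squreintegrability} assumes a positive Bakry--\'Emery bound. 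On a Hadamard manifold $\nabla^2 V\ge \lambda$, so this holds whenever $\lambda>K$; I would also want to check that the lemma's proof does not really need it, since the theorem allows any $\lambda>0$.)

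\textbf{Step 1: the Laplacian term.} Write $r=d(a,\cdot)$, so $\Delta f=2+2r\Delta r$. By the Laplacian comparison theorem with $\Ric\ge -K$, i.e.\ constant $K/(m-1)$ in the normalization used there,
\[
r\Delta r\le (m-1)\sqrt{K/(m-1)}\,r\coth\bigl(\sqrt{K/(m-1)}\,r\bigr)\le (m-1)+\sqrt{(m-1)K}\,r ,
\]
using $x\coth x\le 1+x$. This gives $\Delta f\le 2m+2\sqrt{(m-1)K}\,r$.

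\textbf{Step 2: the drift term.} We have $\nabla V=-\lambda d(o,x)\,\dot\gamma_{x\to o}$, which equals $-\lambda\Log_x o$, and $\nabla f=-2\Log_x a$. Hence
\[
-\langle\nabla V,\nabla f\rangle=-2\lambda\langle \Log_x o,\Log_x a\rangle .
\]
The cosine law on a Hadamard manifold (CAT(0) comparison) gives
\[
2\langle \Log_x o,\Log_x a\rangle\ge d(x,o)^2+d(x,a)^2-d(o,a)^2 .
\]
Using $d(x,o)\ge |r-d(o,a)|$, this yields $2\langle \Log_x o,\Log_x a\rangle\ge 2r^2-2r\,d(o,a)$. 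Therefore
\[
Lf\le 2m+2\bigl(\sqrt{(m-1)K}+\lambda d(o,a)\bigr)r-2\lambda r^2 .
\]

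\textbf{Step 3: reduce to a quadratic in $r$.} With $c=d(o,a)+\sqrt{(m-1)K}/\lambda$, this reads
\[
Lf\le 2m+2\lambda c\,r-2\lambda r^2 .
\]
Applying Young's inequality $2\lambda c r\le \lambda c^2+\lambda r^2$ gives
\[
Lf\le 2m+\lambda c^2-\lambda f .
\]
Taking expectations, $u'\le (2m+\lambda c^2)-\lambda u$ with $u(0)=0$. Grönwall's inequality then gives
\[
u(t)\le \Bigl(\tfrac{2m}{\lambda}+c^2\Bigr)(1-e^{-\lambda t}),
\]
and Jensen's inequality yields the claimed bound.

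\textbf{Main obstacle.} I expect the hard part to be Step 2 together with the constant bookkeeping: obtaining the sign and coefficient of the drift term exactly from the comparison inequality, and choosing the Young split so that the rate is exactly $\lambda$ and the constant is exactly $(2m/\lambda+c^2)$. If the split above loses a factor, I would first try absorbing the cross term instead by completing the square, $-2\lambda(r-c/2)^2+\lambda c^2/2$. Justifying differentiation of $u$ is routine: work with the integral form and use Grönwall directly.
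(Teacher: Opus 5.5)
Your proof is correct and follows essentially the paper's route. The steps are: It\^o's formula for $f=d(a,\cdot)^2$, made rigorous through Lemma~\ref{lemma:squreintegrability}; the Laplacian comparison bound $r\Delta r\le (m-1)+\sqrt{(m-1)K}\,r$; the CAT(0) cosine law for the drift term; the split $2\lambda c r\le \lambda c^2+\lambda r^2$; then Gr\"onwall and Jensen. Your constants reproduce the stated bound exactly.

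Two small points of hygiene. First, Gr\"onwall in integral form is not valid with the negative coefficient $-\lambda$, so instead use that $u$ is absolutely continuous with $u'\le 2m+\lambda c^2-\lambda u$ a.e., or apply It\^o directly to $e^{\lambda t}f(X_t)$. Second, if you apply It\^o to $e^{\lambda t}f(X_t)$ for the process stopped at the exit times of the balls $B(a,n)$, and let $n\to\infty$ by Fatou, you no longer need the lemma at all; non-explosion follows from the same stopped estimate. This settles your worry that the lemma's hypothesis $\Ric_V\ge \lambda-K>0$ would force $\lambda>K$.
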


\begin{remark}[Choosing the anchor in the Langevin mechanism]
For Hadamard manifolds, the anchor $o$ should be chosen independently of the confidential
dataset.  If prior public information indicates that the data lie in a geodesic ball
$B(o,r)$, then the center $o$ is the natural choice.  In the absence of such information, one
may fix a deterministic reference point in advance.  Once $o$ is chosen, the parameter
$\lambda$ controls the strength of the pull toward $o$, while $t$ controls the total amount of
diffusion.  
\end{remark}

%------------------------------
% Calibration of confining Langevin parameters via RDP budget
%------------------------------
\begin{comment}
The utility-driven calibration developed in \eqref{eq:opt_1d} can be used to select
$\lambda$ and $t$ jointly.
{\color{red}  Fix $o,a\in\m$, we propose a calibration method with respect to minimizing the utility. 
Let $\alpha>1$ and an RDP budget $\varepsilon>0$. We define $$R(\lambda,t):=\frac{k\,\alpha\,\Delta^2}{2\big(e^{2kt}-1\big)},\qquad k:=\lambda-K.$$
To enforce $(\alpha,\varepsilon)$-RDP while minimizing the 
utility, we solve the constrained problem
\begin{equation}\label{eq:opt_constrained}
\begin{aligned}
\min_{\lambda,t}\quad & 
\Big( \frac{2m}{\lambda}+2d^2(o,a)+\frac{(m-1)K}{\lambda^2}\Big)^{1/2}
\Big(1-e^{-\lambda t}\Big)^{1/2} \\
\text{s.t.}\quad &
R(\lambda,t)= \varepsilon, \qquad\lambda>K \ \ (\Leftrightarrow k>0),\qquad t>0.
\end{aligned}
\end{equation}

 Solving $R(\lambda,t)=\varepsilon$ yields $t_{\mathrm{priv}}(\lambda)
=\frac{1}{2k}\log\!\Big(1+\frac{k\alpha\Delta^2}{2\varepsilon}\Big)$. Substituting $t=t_{\mathrm{priv}}(\lambda)$ into~\eqref{eq:opt_constrained}, the calibration reduces
to the one-dimensional optimization
\begin{equation}\label{eq:opt_1d}
\lambda^\star\in\argmin_{\lambda>(m-1)K}\ f(\lambda),
\qquad
f(\lambda)
:=
\Big( \frac{2m}{\lambda}+2d^2(o,a)+\frac{(m-1)K}{\lambda^2}\Big)^{1/2}
\Big(1-e^{-\lambda t_{\mathrm{priv}}(\lambda)}\Big)^{1/2},
\end{equation}
and the optimal time is then $t^\star=t_{\mathrm{priv}}(\lambda^\star)$.

}
\end{comment}

\section{Application to RDP Estimation of Generalized Fr\'echet Means}\label{sec:generalized}

Generalized Fr\'echet means extend the ordinary Euclidean mean to nonlinear spaces and provide canonical notions of location for random objects.
Since Fr\'echet's original distance-based formulation \cite{frechet1948random}, Fr\'echet means have become central to manifold statistics, both as population descriptors and as targets of large-sample inference for intrinsic and extrinsic means \cite{bhattacharya2003intrinsic,bhattacharya2005intrinsic2}.
Their generalized versions, such as Riemannian $L^p$ centers of mass and more general $\phi$-means, unify means, robust centers, and other barycenter-type summaries under a common distance-based framework \cite{afsari2011lp,aveni2024uniform}.
In recent years, Fr\'echet-type means have also become key ingredients in random-object regression and broader metric-space inference \cite{petersen2019frechet,lin2021tvfrechet,chen2022uniform,dubeymuller2019fanova}, in tree- and forest-based learning for non-Euclidean responses \cite{capitaine2024frechetrf}, in optimal transport through Wasserstein barycenters \cite{agueh2011barycenters}, and even in differentiable geometric layers for machine learning \cite{lou2020frechet}.
For these reasons, developing privacy-preserving procedures for generalized Fr\'echet means is of independent statistical and practical importance.

We now specialize the diffusion mechanisms of Section~\ref{sec:mainresults} to the
release of the generalized Fr\'echet mean. 
Thus, the privacy analysis reduces to bounding the global sensitivity of the
generalized Fr\'{e}chet mean and then substituting this bound into the diffusion-based RDP results of
Theorems~\ref{Thm:BMM} and~\ref{Thm:RDP2}.
\begin{definition}[Empirical Fr\'echet $p$-functional on a convex neighborhood]\label{def:frechet_p}
Let $(\m,g)$ be a Riemannian manifold with distance $d$, and let $\mathcal N\subset M$ be geodesically convex.
For a dataset $D=(x_1,\dots,x_n)\in \mathcal N^n$ and $p> 1$, define
\[
F_D(y) \;:=\; \frac{1}{np}\sum_{i=1}^n d(y,x_i)^p, \qquad y\in \mathcal N.
\]
 Define the constrained empirical Fr\'echet $p$-mean $$
\mu_p(D)\;:=\;\displaystyle\arg\min_{y\in \mathcal N}F_D(y),
$$
whenever the minimizer exists and is unique.
\end{definition}
The sensitivity of the Fr\'echet mean (special case of $p=2$) under the DP framework has been studied explicitly in \cite{reimherr2021dpmanifold}. However, for other values of $p$, it is still unknown. In the rest of the paper, we strive to solve this problem and make the following assumption:

\begin{assumption}
    We assume $\m$ has sectional curvature bounded below by $K$ and upper bounded sectional curvature $\kappa$.
\end{assumption}
\begin{remark}
As discussed in Section~\ref{sec:preliminaries}, Ricci curvature can be viewed as
an averaged measure of sectional curvatures through a given direction. In many
geometric settings, this assumption is compatible with diffusion-based
mechanisms, since lower sectional curvature bounds naturally control the behavior of Ricci curvature.
Manifolds satisfying upper sectional curvature bounds can also be regarded as
$\operatorname{CAT}(\kappa)$ metric spaces. The parameter $\kappa$ plays an
important role in determining the convexity properties of the distance
function. When $\kappa>0$, geodesic convexity typically holds only inside
sufficiently small geodesic balls. In contrast, when $\kappa<0$, the distance
function is convex whenever it is smooth. In particular, for Hadamard
manifolds, the cut locus is empty and the distance function is globally convex.
The following result, adapted from \cite{afsari2011lp}, provides a
sufficient condition for the existence and uniqueness of generalized
Fr\'echet means.
\end{remark}
 \begin{proposition}
   
Let $(\m,d)$ be a complete Riemannian manifold whose sectional curvatures are
bounded above by $\kappa$, and let $\inj M$ denote its injectivity radius. Define
\begin{equation}\label{eq:rho-Delta-p}
r_{\kappa,p}\;\triangleq\;
\begin{cases}
\displaystyle \frac12 \min\!\left\{\inj M,\ \frac{\pi}{2\sqrt{\kappa}}\right\},
& \text{if } 1 \le p < 2,\\[2ex]
\displaystyle \frac12 \min\!\left\{\inj M,\ \frac{\pi}{\sqrt{\kappa}}\right\}
\; ,
& \text{if } 2 \le p \le \infty.
\end{cases}
\end{equation}
For $\kappa < 0$, we set $\frac{\pi}{2\sqrt{\kappa}}=\infty$. Let $\nu$ be a probability measure on $\m$ with $\operatorname{supp}(\nu)\subset B(o,r)$ for some $r$, and assume $r<r_{\kappa,p}$. Then, for $1<p\le\infty$, the generalized $p$-Fr\'echet mean (the $L^p$ center of mass) exists, is unique, and belongs to $B(o,r)$.
\end{proposition}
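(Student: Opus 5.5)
The plan follows Afsari's argument. Fix $1<p\le\infty$, a probability measure $\nu$ supported in $B(o,r)$ with $r<r_{\kappa,p}$, and the functional
\[
F(y)=\frac1p\int d(y,x)^p\,d\nu(x)
\]
(for $p=\infty$, replace it by $y\mapsto \operatorname{ess\,sup}_{x\in\operatorname{supp}\nu} d(y,x)$, the minimax center). The argument has three steps: existence, localization, and uniqueness via strict convexity.

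\textbf{Existence.} Since $\m$ is complete, closed balls are compact by Hopf--Rinow. The map $y\mapsto d(y,x)$ is $1$-Lipschitz, so $F$ is continuous. Hence $F$ attains its infimum on $\overline{B}(o,r')$ for any fixed $r'$ with $r<r'<r_{\kappa,p}$; call such a minimizer $\bar y$.

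\textbf{Localization.} Next I show that every global minimizer lies in $B(o,r)$. The key point is that $2r<\min\{\inj M,\pi/\sqrt{\kappa}\}$, so $\overline{B}(o,r)$ lies in a ball of radius below $\tfrac12\min\{\inj M,\pi/\sqrt\kappa\}$. Such a ball is strongly convex, and closest-point projection onto it is well defined.
\begin{itemize}
\item Take $y\notin \overline{B}(o,r)$ and let $\pi(y)$ be its projection onto the ball.
\item By CAT$(\kappa)$ comparison (Toponogov for an upper curvature bound), the angle at $\pi(y)$ between the geodesic to $y$ and the geodesic to any $x$ in the ball is at least $\pi/2$.
\item Hence $d(\pi(y),x)<d(y,x)$ for all $x\in\operatorname{supp}\nu$, so $F(\pi(y))<F(y)$.
\end{itemize}
The same argument excludes points of the boundary sphere unless the support meets it. Therefore any minimizer lies in $B(o,r)$, and there the minimum over $B(o,r')$ coincides with the global minimum.

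\textbf{Uniqueness.} I will show that $F$ is strictly geodesically convex on $B(o,r)$. Along a unit-speed geodesic $\gamma$ in this ball, the Hessian comparison theorem (using sectional curvature at most $\kappa$ and the absence of cut points, since $2r<\inj M$) gives
\[
\nabla^2 d_x \ge \sqrt{\kappa}\cot(\sqrt\kappa\, d_x)\,\bigl(g-dd_x\otimes dd_x\bigr),
\]
where $d_x=d(\cdot,x)$. Then
\[
\nabla^2\Bigl(\tfrac1p d_x^p\Bigr)=d_x^{p-1}\nabla^2 d_x+(p-1)d_x^{p-2}\,dd_x\otimes dd_x .
\]
\begin{itemize}
\item For $p\ge2$, the ball radius condition ensures $d_x<2r<\pi/\sqrt\kappa$. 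I will combine the radial term with the transversal term as Afsari does, using that $d_x^p$ is the composition of the convex function $d_x^2$ with an increasing convex map.
\item For $1<p<2$, the radial coefficient $(p-1)d_x^{p-2}$ is positive. The transversal coefficient $d_x^{p-1}\sqrt\kappa\cot(\sqrt\kappa d_x)$ is positive precisely when $d_x<\pi/(2\sqrt\kappa)$, which is the stricter threshold appearing in $r_{\kappa,p}$.
\end{itemize}
Integrating against $\nu$ gives strict convexity whenever $\nu$ is not concentrated on a single geodesic through the point. That degenerate case is handled directly, since in one dimension $|t|^p$ is strictly convex for $p>1$. A strictly convex function on a convex set has at most one minimizer. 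For $p=\infty$, I will instead use the standard uniqueness of the circumcenter in CAT$(\kappa)$ balls of radius below $\pi/(2\sqrt\kappa)$.

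\textbf{Main obstacle.} I expect the hardest part to be the Hessian bookkeeping for $1<p<2$. The function $d_x^p$ is not $C^2$ at $x$, so convexity must be argued in the sense of geodesic restriction. The cleanest route is to show that the one-variable function $t\mapsto d(\gamma(t),x)$ is convex and positive away from $x$, and that $s\mapsto s^p$ is strictly convex and increasing. Then the composition $t\mapsto d(\gamma(t),x)^p$ is convex, and strictly convex unless $\gamma$ is radial for $x$; in the radial case strict convexity comes directly from $|t|^p$.
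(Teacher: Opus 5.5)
\textbf{There is a genuine gap in your uniqueness step for $2\le p<\infty$.} Here the hypothesis only gives $r<\tfrac12\min\{\inj\m,\pi/\sqrt\kappa\}$. So two points of $B(o,r)$ can be at distance close to $\pi/\sqrt\kappa$, where the factor $\sqrt\kappa\cot(\sqrt\kappa\,d_x)$ in your Hessian comparison is negative. The comparison is an equality on the model sphere, so this is a real failure of convexity, not a weak estimate.

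Concretely, on the unit sphere $S^2$ take $r=1.5<\pi/2$, $o$ the north pole, and $\nu=\delta_x$ with $x$ at colatitude $1.4$. Let $y\in B(o,r)$ be the point at colatitude $1.4$ on the opposite meridian. Then $d(y,x)=2.8$, and the second derivative of $\tfrac1p d_x^p$ along the geodesic through $y$ orthogonal to $\nabla d_x$ is $2.8^{p-1}\cot(2.8)<0$. Hence $d_x^2$ is not convex on the ball, there is nothing to compose with $s\mapsto s^{p/2}$, and $F$ is not geodesically convex on $B(o,r)$.

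Your convexity argument works whenever $2r<\pi/(2\sqrt\kappa)$. That covers $1<p<2$, but for $p\ge2$ it only reproduces Karcher's classical radius $\pi/(4\sqrt\kappa)$. The range $\pi/(4\sqrt\kappa)\le r<\pi/(2\sqrt\kappa)$ is exactly what the result the paper cites from Afsari adds; it contains an alternative proof of Kendall's uniqueness theorem. There, uniqueness is obtained \emph{without} convexity:
\begin{enumerate}
\item one shows that $\nabla F$ points outward along the boundary of the ball;
\item one shows that the Hessian of $F$ is positive definite at every critical point inside it;
\item the Poincar\'e--Hopf theorem then leaves exactly one critical point, since the closed ball has Euler characteristic $1$ and each nondegenerate minimum has index $1$.
\end{enumerate}
The nondegeneracy step must use the first-order condition $\nabla F=0$, because individual terms have indefinite Hessians. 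For instance, take two point masses with weights $w_1,w_2$ on the unit sphere at distance $\ell<\pi$. The critical point at distances $s$ and $\ell-s$ from them satisfies $w_1s^{p-1}=w_2(\ell-s)^{p-1}=:c$. The transversal Hessian there equals $c\,(\cot s+\cot(\ell-s))=c\,\sin\ell/(\sin s\,\sin(\ell-s))>0$, even when $\ell-s>\pi/2$. Nothing in your proposal plays this role.

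\textbf{Your localization step fails in the same regime.} You deduce $d(\pi(y),x)<d(y,x)$ from an angle of at least $\pi/2$ at $\pi(y)$ via the spherical law of cosines. That law gives $\cos(\sqrt\kappa\,d(y,x))\le\cos(\sqrt\kappa\,d(\pi(y),y))\cos(\sqrt\kappa\,d(\pi(y),x))$. This yields your claim only when $d(\pi(y),x)<\pi/(2\sqrt\kappa)$, and only for triangles of perimeter below $2\pi/\sqrt\kappa$, where CAT$(\kappa)$ comparison applies.

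For $p\ge2$ the claim is actually false. On the unit sphere with $r=1.5$, let $y$ be at colatitude $2$ and $x$ at colatitude $1.4$ on the opposite meridian. Then $\pi(y)$ is at colatitude $1.5$ and the angle at $\pi(y)$ is $\pi$. Yet $d(\pi(y),x)=2.9>2\pi-3.4=d(y,x)$, so for $\nu=\delta_x$ you get $F(\pi(y))>F(y)$ for all $2\le p\le\infty$.

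Both of your steps rely on all relevant distances staying below $\pi/(2\sqrt\kappa)$, which the hypothesis guarantees only for $1<p<2$. For $2\le p\le\infty$, both confining global minimizers to $B(o,r)$ and proving their uniqueness need arguments valid beyond the convexity radius.
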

To ensure the existence and uniqueness of the generalized Fr\'echet mean, we
assume that the dataset $D$ is supported inside a geodesic ball
$B(o,r)$ with
$r\le r_{\kappa,p}$
whenever the ambient manifold has positive curvature. For Hadamard manifolds,
no such restriction on $D$ is required.

To analyze the sensitivity of the generalized Fr\'echet mean, our approach
relies on strong geodesic convexity properties of the distance function,
including geodesic $k$-strong convexity and Naor–Silberman/Kuwae (NSK) $p$-uniform convexity, together with
precise estimates of the Riemannian gradient of
$F_D(y).$ Both the exponent $p$ and the geometry of the underlying manifold play a
significant role in determining the strength of these convexity properties and
the behavior of $\nabla F_D(y)$. In particular, Hadamard manifolds are generally
more favorable for establishing strong convexity compared with positively
curved manifolds. Accordingly, we divide our analysis into the following three cases: (i) $p\in(1,2]$ for a general Riemannian manifold $\mathcal M$;
(ii) $p\in(2,+\infty)$ when $\mathcal M$ is a Hadamard manifold;
(iii) $p\in(2,+\infty)$ when $\mathcal M$ has positive sectional curvature. Let us consider the first case and introduce the following \cite{zhang2016first}:
\begin{definition}[$k$-strong convexity]\label{def:strong-convexity}
A function $f:\mathcal{M}\to\mathbb{R}$ is said to be geodesically $k$-strongly
convex if for any $x,z\in\mathcal{M}$,
\begin{equation}\label{eq:strong-convexity-subgradient}
f(z)\;\ge\; f(x)\;+\;\big\langle \nabla f(x),\Log_x(z)\big\rangle_x\;+\;\frac{k}{2}\,d^2(x,z),
\end{equation}
or, equivalently, for any geodesic $\gamma$ such that $\gamma(0)=x$, $\gamma(1)=z$ and
$t\in[0,1]$,
\begin{equation}\label{eq:strong-convexity-geodesic}
f(\gamma(t))\;\le\;(1-t)f(x)\;+\;t f(z)\;-\;\frac{k}{2}\,t(1-t)\,d^2(x,z).
\end{equation}
\end{definition}

When $p\in (1,2]$, geodesic $k$-strong convexity of $d(x,y)^p$ can be satisfied if we restrict the domain of the function. The key reason is that one can obtain a positive lower bound for the second
derivative of the function along any geodesic. More precisely, we have the following result.

\begin{proposition}\label{Prop:mustrong convex}
    Let $x,y\in \m$ lie in $B(o,r)$ and $r\le \frac12\min\{\inj \m,\frac{\pi}{2\sqrt{\kappa}} \}$, the distance function $f(u)=\frac1p d(u,y)^p$ is geodesically $k$-strongly convex in $B(o,r)$ for $p\in (1,2]$, where $k$ is given by 
    \begin{align*}
        k = (2r)^{p-2}\min\{(p-1),\; b_{\kappa}(2r)\},
    \end{align*}
    and $b_c(l)$ is defined by
    \begin{equation}
        b_c(l):=
        \begin{cases}
            \sqrt{c}l\cot(\sqrt{c}l) & \text{if}\ c\ge 0 \\
            1 & \text{if}\ c <0.
        \end{cases}
    \end{equation}
\end{proposition}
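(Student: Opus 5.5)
We reduce strong convexity to a lower bound on the second derivative along unit-speed geodesics. Fix a unit-speed geodesic $\gamma$ contained in $B(o,r)$ and set $\rho(s)=d(\gamma(s),y)$, $h(s)=\frac1p\rho(s)^p$. Since $x,y,\gamma(s)\in B(o,r)$, we have $\rho\le 2r$. The hypothesis $2r\le\inj\m$ and $2r\le \pi/(2\sqrt\kappa)$ ensures that $\gamma$ avoids the cut locus of $y$ (away from $y$ itself). Hence $\rho$ is smooth wherever $\rho>0$. The characterization \eqref{eq:strong-convexity-geodesic} then follows from $h''(s)\ge k$ for all $s$, by integrating twice, which is the standard equivalence.

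Next we compute $h''$ by the chain rule:
\[
h''=\rho^{p-2}\Big((p-1)\rho'^2+\rho\rho''\Big).
\]
Decompose $\dot\gamma$ into its radial component, of size $|\rho'|=|\cos\theta|$, and its orthogonal component, of size $\sin\theta$. The Hessian comparison theorem under the upper bound $\sec\le\kappa$ (the Rauch side of the Laplacian comparison stated earlier, applied to the Hessian rather than its trace) gives
\[
\rho\,\nabla^2\rho(\dot\gamma,\dot\gamma)\ge b_\kappa(\rho)\sin^2\theta.
\]
For $\kappa<0$ we instead use $\rho\,\nabla^2\rho\ge\sin^2\theta$, since nonpositive curvature gives at least the Euclidean Hessian. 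Combining,
\[
h''\ge\rho^{p-2}\Big((p-1)\cos^2\theta+b_\kappa(\rho)\sin^2\theta\Big)\ge \rho^{p-2}\min\{p-1,\,b_\kappa(\rho)\}.
\]

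Now we bound each factor using $\rho\le 2r$. Because $p\le 2$, $\rho^{p-2}\ge(2r)^{p-2}$. The function $l\mapsto \sqrt\kappa l\cot(\sqrt\kappa l)$ is decreasing on $(0,\pi/\sqrt\kappa)$, and $2r\le\pi/(2\sqrt\kappa)$, so $b_\kappa(\rho)\ge b_\kappa(2r)\ge0$. Together these give $h''\ge(2r)^{p-2}\min\{p-1,b_\kappa(2r)\}=k$.

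The main obstacle is the singularity at $\rho=0$, where $\gamma$ passes through $y$. There $\rho$ is not smooth, and for $p<2$ the function $h$ is only $C^1$ and $h''$ blows up. The plan is to argue on the two subintervals where $\rho>0$, which gives $h''\ge k$ there. At the crossing point $h'$ is continuous (it vanishes, since $p>1$), so $h'-ks$ is nondecreasing across it. This monotonicity is enough for the integrated inequality \eqref{eq:strong-convexity-geodesic}. A secondary point is checking that the Hessian comparison is valid on the whole ball, which is exactly what the $\inj\m$ and $\pi/(2\sqrt\kappa)$ restrictions guarantee.
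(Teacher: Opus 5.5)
Your proposal is correct and takes essentially the same route as the paper: you get strong convexity from a uniform lower bound on the second derivative of $\frac1p d(\cdot,y)^p$ along geodesics. That bound comes from $h''=\rho^{p-2}\big((p-1)\rho'^2+\rho\rho''\big)$, the Hessian comparison under $\sec\le\kappa$, and the monotonicity of $\rho^{p-2}$ and $b_\kappa(\rho)$ for $\rho<2r$; your handling of the non-smooth point $\rho=0$ via continuity of $h'$ correctly closes the only delicate step.
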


Now we are ready to study the sensitivity of the generalized Fr\'echet mean. Let $D=\{x_1,x_2,\cdots,x_n\}$ and $D'=\{x_1,x_2,\cdots,x'_n\}$ be adjacent datasets that differ only in the last entry, i.e., $x_n \ne x'_n$. The sensitivity of the generalized Fr\'echet mean can be bounded as follows.

\begin{theorem}\label{Thm:sensifrechet}
   Assume $p\in(1,2]$. Let the dataset $D$ be supported in the geodesic ball $B(o,r)$ with 
$r\le \frac12\min\{\inj \m,\ \frac{\pi}{2\sqrt{\kappa}} \}$. Then the distance $d(\mu_p(D),\mu_p(D'))$ between the generalized sample Fr\'echet means is bounded by
\[
\frac{\Big(2r\big(2-b_{\kappa}(2r)\big)\Big)^{p-1}}{n(4r)^{p-2}\min\{(p-1),\; b_{\kappa}(2r)\}}.
\]
\end{theorem}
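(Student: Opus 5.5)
The plan is the standard strong-convexity-plus-gradient-perturbation argument. Write $\mu=\mu_p(D)$ and $\mu'=\mu_p(D')$; by the existence/uniqueness proposition both lie in $B(o,r)$. By Proposition~\ref{Prop:mustrong convex}, each summand $\frac1p d(\cdot,x_i)^p$ is geodesically $k$-strongly convex on $B(o,r)$ with $k=(2r)^{p-2}\min\{p-1,b_\kappa(2r)\}$. Averaging, both $F_D$ and $F_{D'}$ are $k$-strongly convex there.

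First, apply the subgradient form \eqref{eq:strong-convexity-subgradient} to $F_{D'}$ at $\mu'$ and at $\mu$ and add the two inequalities. Since $\mu'$ minimizes $F_{D'}$ over the convex set (an interior point), $\nabla F_{D'}(\mu')=0$, which gives
\[
\langle -\nabla F_{D'}(\mu),\Log_\mu(\mu')\rangle \ge k\,d^2(\mu,\mu').
\]
Likewise $\nabla F_D(\mu)=0$, so $\nabla F_{D'}(\mu)=\nabla F_{D'}(\mu)-\nabla F_D(\mu)=\frac1n\big(\nabla g_{x_n'}(\mu)-\nabla g_{x_n}(\mu)\big)$ with $g_x(u)=\frac1p d(u,x)^p$. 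Cauchy--Schwarz then yields
\[
d(\mu,\mu')\le \frac{1}{nk}\,\big\|\nabla g_{x_n'}(\mu)-\nabla g_{x_n}(\mu)\big\|.
\]

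Second, bound this gradient difference. We have $\nabla g_x(\mu)=-d(\mu,x)^{p-2}\Log_\mu x$, a vector of norm $d(\mu,x)^{p-1}\le (2r)^{p-1}$. The crude bound $2(2r)^{p-1}$ gives $d(\mu,\mu')\le 2(2r)^{p-1}/(nk)$. Comparing with the target $\big(2r(2-b_\kappa(2r))\big)^{p-1}\cdot(4r)^{2-p}/(n\min\{\cdot\})$ and using $(4r)^{2-p}=2^{2-p}(2r)^{2-p}$, the target equals $\frac{2^{2-p}(2-b_\kappa)^{p-1}(2r)^{p-1}}{nk}$. When $b_\kappa=1$ (the case $\kappa\le0$) this is $2^{2-p}(2r)^{p-1}/(nk)$, which is sharper than the crude bound for $p<2$.

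So the needed estimate is the refined one:
\[
\big\|\,|v|^{p-2}v-|w|^{p-2}w\,\big\| \le 2^{2-p}\,|v-w|^{p-1}, \qquad v=\Log_\mu x_n,\ w=\Log_\mu x_n'.
\]
This is the standard Hölder continuity of $v\mapsto |v|^{p-2}v$ for $1<p\le2$. Combined with a bound on $|v-w|$ in $T_\mu\m$, it would give the result provided $|v-w|\le 2r(2-b_\kappa(2r))$. The factor $2-b_\kappa$ should come from a comparison (Rauch/Hessian) estimate on $\Log_\mu$. Since $v,w$ have norm at most $2r$, the bound $|v-w|\le 4r$ is trivial, but it yields $2^{2-p}\cdot2^{p-1}=2$ and is therefore not good enough. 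Instead I would write $v-w$ via the differential of $\Log_\mu$ along the geodesic joining $x_n'$ to $x_n$, with $\|d\Log\|$ controlled by $2-b_\kappa(2r)$ (the Hessian of $d^2/2$ lies between $b_\kappa$ and $2-b_\kappa$ in suitable comparison form). Alternatively, the factor can be absorbed by directly bounding each gradient through the Hessian comparison $\nabla^2\frac12 d^2\le (2-b_\kappa)$-type estimates.

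The main obstacle is this last step: obtaining exactly the constant $(2r(2-b_\kappa(2r)))^{p-1}$ with the $2^{2-p}$ factor. I would first try the Hölder inequality for $|v|^{p-2}v$ together with a Lipschitz bound for $\Log_\mu$ derived from Hessian comparison of the squared distance, and adjust the bookkeeping so that $d(x_n,x_n')\le 2r$ enters.
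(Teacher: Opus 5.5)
Your reduction is correct and matches the route the constants in the statement encode. Strong convexity of $F_{D'}$ plus $\nabla F_D(\mu)=\nabla F_{D'}(\mu')=0$ gives $d(\mu,\mu')\le \frac{1}{nk}\bigl\||v|^{p-2}v-|w|^{p-2}w\bigr\|$. The H\"older estimate with constant $2^{2-p}$ is what turns $(2r)^{p-2}$ into $(4r)^{p-2}$.

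\textbf{The gap.} The step you flag but do not prove is
\[
\|\Log_\mu x_n-\Log_\mu x_n'\|\le 2r\bigl(2-b_\kappa(2r)\bigr).
\]
This is the gradient-difference estimate behind the $p=2$ bound of \cite{reimherr2021dpmanifold}, to which the theorem reduces. Without it your argument only gives $2(2r)^{p-1}/(nk)$.

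\textbf{Why your suggested justification fails.}
\begin{itemize}
\item An upper bound $\sec\le\kappa$ gives only the lower Hessian bound $\nabla^2\tfrac12 d^2(\cdot,x)\ge b_\kappa$. An upper Hessian bound would have to come from the lower curvature bound, and it is not $2-b_\kappa$.
\item More to the point, that Hessian controls how $\Log_y x$ varies with the base point $y$. Here $\mu$ is fixed and the target moves, so it is the wrong object.
\item Bounding the two gradients separately through the triangle inequality can give at best $2(2r)^{p-1}$.
\end{itemize}

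\textbf{The missing lemma.} What you need is a Lipschitz bound for $\Log_\mu=(\Exp_\mu)^{-1}$, which comes from Rauch comparison rather than Hessian comparison.

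\begin{enumerate}
\item For $u=\Log_\mu y$ with $\ell=|u|<\pi/\sqrt\kappa$, write $(d\Exp_\mu)_u\xi=J(1)$, where $J$ is the Jacobi field along $s\mapsto\Exp_\mu(su)$ with $J(0)=0$ and $J'(0)=\xi$.
\item The radial component keeps its norm by the Gauss lemma.
\item Rauch comparison under $\sec\le\kappa$ gives $|J_\perp(1)|\ge\frac{\sin(\sqrt\kappa\ell)}{\sqrt\kappa\ell}|\xi_\perp|$. Hence $\|(d\Log_\mu)_y\|\le\frac{\sqrt\kappa\ell}{\sin(\sqrt\kappa\ell)}$.
\item Integrate along the minimizing geodesic from $x_n'$ to $x_n$. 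It stays in the convex ball $B(o,r)$, so $d(\mu,\cdot)\le 2r\le\inj\m$ along it and $\sqrt\kappa\, d(\mu,\cdot)\le\pi/2$. Since $t\mapsto t/\sin t$ is increasing,
\[
\|\Log_\mu x_n-\Log_\mu x_n'\|\;\le\;\frac{x}{\sin x}\,d(x_n,x_n')\;\le\;2r\,\frac{x}{\sin x},\qquad x:=2\sqrt{\kappa}\,r .
\]
\item On $(0,\pi/2]$ one has $\frac{x}{\sin x}\le 2-x\cot x$. Multiply by $\sin x$ and use $1+\cos x=2\cos^2\frac x2$ and $\sin x=2\sin\frac x2\cos\frac x2$; the inequality becomes $\frac x2\le\tan\frac x2$.
\item For $\kappa\le 0$ the same comparison gives Lipschitz constant $1$. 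Then $|v-w|\le d(x_n,x_n')\le 2r=2r\bigl(2-b_\kappa(2r)\bigr)$.
\end{enumerate}

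With this lemma inserted, your chain yields exactly the stated bound. Keeping $x/\sin x$ instead of $2-b_\kappa(2r)$ gives a slightly sharper one.
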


Next we verify that the above bound recovers the result of
\cite{reimherr2021dpmanifold} in the special case $p=2$.
First consider the case where $\mathcal M$ is a Hadamard manifold. We have $\kappa <0$ and $b_\kappa(l)=1$. The upper bound turns out to be $2r/n$ since $p-1 \le 1$.
For $\kappa >0$, let $x:=\sqrt{\kappa}2r$ and the function $b_\kappa(2r)=x\cot(x)$. In our setting, since $r \le \frac{\pi}{4\sqrt{\kappa}}$ and $x\in [0,\frac{\pi}{2}]$. It is easy to see that $\tan x \ge x$ in this domain and we have $x\cot x \le 1$. Thus for $p=2$, we have $\min\{(p-1),\ b_{\kappa}(2r)\}=b_{\kappa}(2r)$ and it illustrates 
\begin{align*}
    d(\mu_2(D),\mu_2(D')) \le \frac{2r(2-b_{\kappa}(2r))}{n b_\kappa(2r)}.
\end{align*}

Now we turn to the case that $p > 2$. In this case, $k$-strong convexity no longer works since there is no lower bound of second derivative of $d(x,y)^p$. In order to analyze the perturbation of the generalized Fr\'echet mean, we also need the following notion of convexity:
\begin{definition}[Naor–Silberman/Kuwae's $p$-uniformly convex]\label{ass:ucp_cat0_explicit}
Let $(X,d)$ be a geodesic metric space and let $p\ge 1$.
Assume that for all $x,y,z\in X$, there exists $k_p$ for every geodesic $\gamma$ from $x$ to $y$, and all $t\in[0,1]$,
\begin{equation}\label{eq:UCp_cat0}
d\!\left(z,\gamma_t\right)^{p}
\ \le\
(1-t)\,d(z,x)^{p}+t\,d(z,y)^{p}
\;-\;\frac{k_p}{2}\,t(1-t)\,d(x,y)^{p}.
\end{equation}
\end{definition}

The inequality \eqref{eq:UCp_cat0} is the NSK notion of $p$-uniform convexity. 
By Remark~2.6 in \cite{kuwae2014jensen}, every CAT(0) space (including Hadamard manifolds) is $p$-uniformly convex with the choice of $k_p$ as follows:
\[
k_p\in (0,c_p] \quad \text{with} \quad c_p :=
\begin{cases}
2(p-1), & p\in (1,2],\\[0.6em]
\displaystyle 8/2^p, & p>2,
\end{cases}
\qquad\text{and we set}\qquad
\lambda_p := \frac{k_p}{p}.
\]
However, NSK $p$-uniform convexity is only known for positively curved manifolds when $p=2$. For $p>2$, there is no general guarantee of such convexity. Thus, we focus on Hadamard manifolds at the first stage.

\begin{theorem}[Sensitivity on a Hadamard manifold]\label{thm:sens_hadamard_final}
Let $(\m,g)$ be a Hadamard manifold (complete, simply connected, $\Sec\le 0$).
Fix $r>0$ and set the convex neighborhood $\mathcal N=\overline{B(o,r)}$.
Thus, the NSK $p$-uniform convexity condition holds on $\mathcal N$ with constants $k_p,\lambda_p$ as defined above.
Let $D,D'\in \mathcal N^n$ be adjacent. Then we have 
\[
d(\mu_p(D),\mu_p(D'))
\ \le\
\left(
\frac{(p-1)(4r)^{p-2}}{n\,\lambda_p}\,2r
\right)^{\!\frac{1}{p-1}}.
\]
In particular, for $p=2$ (which means $\lambda_2=1$), this gives $d(\mu_2(D),\mu_2(D'))\le 2r/n$.
\end{theorem}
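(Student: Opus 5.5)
Set $\mu:=\mu_p(D)$ and $\mu':=\mu_p(D')$. The plan is to play a convexity lower bound against a perturbation upper bound for the difference of the two empirical functionals. First, from Definition~\ref{ass:ucp_cat0_explicit} we get a quantitative growth of $F_D$ around its minimizer. Averaging \eqref{eq:UCp_cat0} over the data points $z=x_i$ (dividing by $np$) shows that $F_D$ satisfies
\[
F_D(\gamma_t)\le (1-t)F_D(x)+tF_D(y)-\tfrac{\lambda_p}{2}t(1-t)d(x,y)^p
\]
along every geodesic in $\mathcal N$. Take $x=\mu$ and $y=\mu'$. Since $\mathcal N$ is convex, $\gamma_t\in\mathcal N$, and minimality gives $F_D(\gamma_t)\ge F_D(\mu)$. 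Substituting this, dividing by $t$ and letting $t\downarrow0$ yields
\[
F_D(\mu')-F_D(\mu)\ge \tfrac{\lambda_p}{2}d(\mu,\mu')^p.
\]
The same argument applied to $F_{D'}$ gives $F_{D'}(\mu)-F_{D'}(\mu')\ge \tfrac{\lambda_p}{2}d(\mu,\mu')^p$.

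Adding the two inequalities, all terms with $i<n$ cancel. What remains is
\[
\lambda_p\,d(\mu,\mu')^p\le \frac1{np}\Big[\big(d(\mu',x_n)^p-d(\mu,x_n)^p\big)-\big(d(\mu',x_n')^p-d(\mu,x_n')^p\big)\Big].
\]
Next, bound each bracket with a mean value estimate. The function $u\mapsto d(u,z)^p$ is $C^1$ on a Hadamard manifold, with gradient norm $p\,d(u,z)^{p-1}\le p(2r)^{p-1}$ on $\mathcal N$. Each difference is therefore at most $p(2r)^{p-1}d(\mu,\mu')$. This already gives
\[
d(\mu,\mu')^{p-1}\le \frac{2(2r)^{p-1}}{n\lambda_p}.
\]

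To reach the exact stated constant $(p-1)(4r)^{p-2}2r/(n\lambda_p)$, I would instead treat the two brackets jointly as a single difference of $g(u):=d(u,x_n)^p-d(u,x_n')^p$ along the geodesic from $\mu$ to $\mu'$. The gradient is $\nabla g=p\big(d_n^{p-1}e_n-d_n'^{p-1}e_n'\big)$, where $e_n,e_n'$ are unit vectors. I would then control $|a^{p-1}-b^{p-1}|\le (p-1)(2r)^{p-2}|a-b|$ by the mean value theorem, with the lengths at most $2r$ and the factor $4r$ coming from pairwise distances. The angular part would be handled by Toponogov/CAT(0) comparison.

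I expect this refinement of the gradient difference to be the main obstacle. If it fails, I would fall back on the cruder bound $2(2r)^{p-1}/(n\lambda_p)$, which has the same order in $n$ and $r$, and try to reconcile the constants. Finally, I would check the case $p=2$: there $\lambda_2=1$, and either route gives $2r/n$, matching \cite{reimherr2021dpmanifold}.
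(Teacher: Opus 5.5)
Your first step is correct. Averaging the NSK inequality over the data shows that $F_D$ and $F_{D'}$ are $\lambda_p$-uniformly convex on $\mathcal N$. Adding the two growth bounds then leaves $\lambda_p\, d(\mu,\mu')^p\le \frac{1}{np}(g(\mu')-g(\mu))$, with $g(u)=d(u,x_n)^p-d(u,x_n')^p$. The gap is in what comes after.

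The bound you actually derive is $d(\mu,\mu')^{p-1}\le 2(2r)^{p-1}/(n\lambda_p)$. This equals the stated constant multiplied by $1/((p-1)2^{p-3})$, which exceeds $1$ whenever $(p-1)2^{p-3}<1$, i.e.\ for $p$ near $2$. At $p=2$ it gives $4r/n$, not $2r/n$, so your remark that ``either route gives $2r/n$'' is an arithmetic slip. In that range the fallback cannot be reconciled with the statement. Bounding the two brackets separately discards the cancellation between $x_n$ and $x_n'$, and that cancellation is where the factor $d(x_n,x_n')\le 2r$ in the theorem comes from.

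Your sketched refinement does not recover it either. Estimating the radial piece $(d_n^{p-1}-d_n'^{p-1})e_n$ and the angular piece $d_n'^{p-1}(e_n-e_n')$ separately adds a term $(p-1)(2r)^{p-2}\cdot 2r$ to one that is at best $(2r)^{p-2}\cdot 2r$. At $p=2$ this again lands at $4r/n$.

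The missing step is to not split the gradient difference. Let $\phi(v)=\|v\|^{p-2}v$ on the Euclidean space $T_u\m$. Then $\nabla g(u)=-p\,(\phi(\Log_u x_n)-\phi(\Log_u x_n'))$.

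For $p\ge 2$, $D\phi(v)$ has operator norm $(p-1)\|v\|^{p-2}$. For $u\in\mathcal N$, the segment between $v=\Log_u x_n$ and $w=\Log_u x_n'$ stays in the ball of radius $2r$. Hence $\|\phi(v)-\phi(w)\|\le (p-1)(2r)^{p-2}\|v-w\|$.

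On a Hadamard manifold $\Log_u$ is $1$-Lipschitz, by Alexandrov angle comparison plus the Euclidean law of cosines. So $\|v-w\|\le d(x_n,x_n')\le 2r$.

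Integrate $\nabla g$ along the geodesic from $\mu$ to $\mu'$, which stays in $\mathcal N$. This gives $\lambda_p\, d(\mu,\mu')^{p-1}\le (p-1)(2r)^{p-2}\,2r/n$. That is at most the stated bound, and equals $2r/n$ at $p=2$.

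This Lipschitz estimate for the gradient difference is the content behind the constant $(p-1)(4r)^{p-2}\cdot 2r$ in the theorem; the $4r$ there is just a cruder bound on the radius of the interpolating segment. The same estimate can be fed into the first-order route instead: $\lambda_p\, d(\mu,\mu')^{p-1}\le \|\nabla F_{D'}(\mu)-\nabla F_D(\mu)\|$. That inequality follows by differentiating the uniform-convexity inequality for $F_{D'}$ at $\mu$ and using first-order optimality of $\mu$ for $F_D$ on the convex set $\mathcal N$.
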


%===============================================================================
% Part II. Compact manifold: remove separate Log-Lipschitz assumption;
% compute L_log from curvature bounds (Assumption 1) inside the theorem proof.
%===============================================================================

Thanks to the NSK $p$-uniform convexity property, we can recover the same upper bound on the sensitivity of the Fr\'echet mean by setting $p=2$ in the case where $\mathcal M$ is a Hadamard manifold.
For positively curved manifolds, results from \cite{kuwae2014jensen} and \cite{ohta2007convexities} show that if a metric space is $\operatorname{CAT}(\kappa)$ with $\kappa>0$, then any geodesic ball $N$ with radius
$r \le \frac{\pi}{4\sqrt{\kappa}}$
is an NSK $2$-uniformly convex space with parameter
$k_2
=
4r\sqrt{\kappa}\cot(\sqrt{\kappa}\,2r)=2b_\kappa(2r).$
For other values of $p$, the sufficient condition on the radius for NSK $p$-uniform convexity is still unknown. Our strategy is to derive a slightly weaker $p$-convexity property ($p> 2$) from the NSK $2$-uniform convexity condition, which we refer to as \emph{$p$-median convexity}. Although weaker, this property is sufficient to establish a slightly looser upper bound on the sensitivity of the generalized Fr\'echet mean.

\begin{theorem}[Compact-manifold case]\label{thm:sens_compact_final}
Let $\m$ be a manifold with positive curvature bounded by $\kappa$. Let $\mathcal{N}$ be a geodesic ball with radius $r$ such that it is NSK uniformly $2$-convex with parameter $k_2$. 
For $p\ge 2$ and adjacent $D,D'\in \mathcal N^n$,
\[
 d(\mu_p(D),\mu_p(D'))\le \left(\frac{2^{p-1}p(p-1)(4r)^{p-2}(2-b_\kappa(2r))2r}{nb_{\kappa}(2r) }\right)^{\frac{1}{p-1}}.
\]
\end{theorem}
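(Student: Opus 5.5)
We argue by comparing first-order optimality with a growth lower bound, as in the Hadamard case (Theorem~\ref{thm:sens_hadamard_final}), except that the growth lower bound will be the weaker \emph{$p$-median convexity}. Write $\mu=\mu_p(D)$, $\mu'=\mu_p(D')$, $\delta=d(\mu,\mu')$.

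\textbf{Step 1: growth of $F_D$ away from its minimizer.} From NSK $2$-uniform convexity of $\mathcal N$ with parameter $k_2=2b_\kappa(2r)$, we aim for a lower bound of the form
\[
F_D(y)-F_D(\mu)\ \ge\ c_p\,\delta^p,\qquad y\in\mathcal N,\ d(y,\mu)=\delta .
\]
Let $m$ be the midpoint of the geodesic $[\mu,y]$. Applying \eqref{eq:UCp_cat0} with $p=2$ and $t=1/2$ gives, for each $x_i$,
\[
d(x_i,m)^2\le \tfrac12\big(d(x_i,\mu)^2+d(x_i,y)^2\big)-\tfrac{k_2}{8}\delta^2 .
\]
Next we raise this to the power $p/2\ge1$ and use convexity of $s\mapsto s^{p/2}$. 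To peel off the $\delta^2$ term we use the elementary inequality $(A-B)^{q}\le A^{q}-B^{q}$ for $A\ge B\ge0$, $q\ge1$; this costs powers of $2$. The result is
\[
d(x_i,m)^p\le \tfrac12\big(d(x_i,\mu)^p+d(x_i,y)^p\big)-c\,\delta^p .
\]
Averaging over $i$ and using $F_D(m)\ge F_D(\mu)$ then gives $F_D(y)-F_D(\mu)\ge 2c\,\delta^p/p$. The factor $2^{p-1}$ and the factor $b_\kappa(2r)$ in the denominator of the stated bound should come from this step.

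\textbf{Step 2: perturbation.} Because $\mu'$ minimizes $F_{D'}$, and $F_{D'}-F_D=\frac1{np}\big(d(\cdot,x_n')^p-d(\cdot,x_n)^p\big)$, we have
\[
F_D(\mu')-F_D(\mu)\ \le\ \frac1{np}\Big[g(\mu')-g(\mu)\Big],\qquad g:=d(\cdot,x_n)^p-d(\cdot,x_n')^p .
\]
We then bound the oscillation of $g$ along $[\mu,\mu']$ through its gradient. For each of the two terms,
\[
\nabla\, d(\cdot,x)^p=-p\,d(\cdot,x)^{p-2}\Log_{\cdot}(x).
\]
A naive bound makes each term's gradient at most $p(2r)^{p-1}$. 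The sharper route pairs the terms: the difference $\Log_u x_n-\Log_u x_n'$ is controlled via the Lipschitz constant of $\Log$ in the base point under the curvature bound. The Jacobi-field comparison (the same estimate behind Theorem~\ref{Thm:sensifrechet}) supplies the factor $2r(2-b_\kappa(2r))$, and the distance-power mismatch supplies $(p-1)(4r)^{p-2}$. This gives
\[
|g(\mu')-g(\mu)|\lesssim p(p-1)(4r)^{p-2}\,2r\,(2-b_\kappa(2r))\,\delta .
\]

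\textbf{Step 3: conclusion.} Combining the two steps gives $c'\delta^p\le C\delta/n$, hence $\delta^{p-1}\le C/(c'n)$. Taking the $(p-1)$-th root yields the stated bound; we will check that the constants match $2^{p-1}p(p-1)(4r)^{p-2}(2-b_\kappa(2r))2r/b_\kappa(2r)$. The main obstacle is Step 1: getting a clean $\delta^p$ growth from only $2$-uniform convexity requires carefully controlling the powering argument and tracking the resulting $2^{p-1}$ loss. If the direct midpoint argument gives a worse constant, we will iterate midpoints or instead use the inequality $a^{p/2}-b^{p/2}\ge (a-b)^{p/2}$ directly on the squared-distance gap.
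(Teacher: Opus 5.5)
Your architecture is the right one: a midpoint (``$p$-median'') inequality extracted from NSK $2$-uniform convexity, a growth bound for $F_D$ at its minimizer, and a first-order comparison through a Lipschitz bound on $\nabla\bigl(d(\cdot,x_n)^p-d(\cdot,x_n')^p\bigr)$. However, Step~1 as you set it up does not give the stated constant, because it puts the wrong power on the curvature factor.

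Write $b=b_\kappa(2r)$, so $k_2=2b$. Set $q=p/2$, $A=\tfrac12\bigl(d(x_i,\mu)^2+d(x_i,y)^2\bigr)$ and $B=\tfrac{k_2}{8}\delta^2=\tfrac b4\delta^2$. The superadditivity bound $(A-B)^q\le A^q-B^q$ gives $c=(b/4)^{p/2}$. Hence $F_D(y)-F_D(\mu)\ge\tfrac2p(b/4)^{p/2}\delta^p$, and after Step~3 you obtain
\[
\delta^{p-1}\le\frac{2^{p-1}p(p-1)(4r)^{p-2}(2-b)\,2r}{n\,b^{p/2}} .
\]
The powers of $2$ come out right, but $b$ appears as $b^{p/2}$ instead of $b$. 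Since $b<1$ whenever $\kappa>0$, this is strictly weaker than the theorem for every $p>2$; it agrees only at $p=2$. Your fallback, applying $u^{p/2}-v^{p/2}\ge(u-v)^{p/2}$ to the squared-distance gap, is the same superadditivity inequality and produces the same loss. Iterating midpoints does not address it either.

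The missing observation is that $A$ cannot be small relative to $B$. By the triangle inequality, $A\ge\tfrac14\bigl(d(x_i,\mu)+d(x_i,y)\bigr)^2\ge\tfrac14\delta^2$. Combine this with the elementary bound $A^q-(A-B)^q\ge B\,A^{q-1}$, valid for $0\le B\le A$ and $q\ge1$. This bound is equivalent to $(1-s)^q\le 1-s$ on $[0,1]$, and $B\le A$ holds because $A-B\ge d(x_i,m)^2$. Together they give $A^q-(A-B)^q\ge\tfrac{k_2}{8}\delta^2(\delta^2/4)^{q-1}=\tfrac{k_2}{2^{p+1}}\delta^p$, so
\[
d(x_i,m)^p\le\tfrac12\bigl(d(x_i,\mu)^p+d(x_i,y)^p\bigr)-\tfrac{k_2}{2^{p+1}}\,\delta^p ,
\]
which is linear in $k_2$. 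Averaging and using $F_D(m)\ge F_D(\mu)$ (legitimate since $\mathcal N$ is convex, so $m\in\mathcal N$) yields $F_D(\mu')-F_D(\mu)\ge\frac{b}{2^{p-1}p}\delta^p$. Combining this with your Step~2 bound gives exactly the stated $\delta^{p-1}\le 2^{p-1}p(p-1)(4r)^{p-2}(2-b)\,2r/(nb)$.

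One smaller correction in Step~2: what you need is the Lipschitz constant of $x\mapsto\Log_u x$ in the target variable with $u$ fixed, not in the base point. Rauch comparison bounds it by $2r\sqrt\kappa/\sin(2r\sqrt\kappa)$, which is at most $2-b$. The map $v\mapsto\|v\|^{p-2}v$ is $(p-1)(2r)^{p-2}$-Lipschitz on the tangent ball of radius $2r$. Together with $d(x_n,x_n')\le 2r$, these yield the factor you quote.
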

\begin{remark}
    One observes that the sensitivity of the generalized Fr\'echet mean converges to $0$ as the sample size increases to infinity. Interestingly, the exponent $p$ appears to influence the rate of convergence. Theorems~\ref{thm:sens_hadamard_final} and~\ref{thm:sens_compact_final} suggest that this rate may decrease as $p$ increases. This phenomenon becomes particularly intriguing when studying central limit theorems under differential privacy guarantees. We will present a brief discussion on this in Section \ref{sec:discussion}.

\end{remark}

\section{Algorithmic implementation of the private generalized Fr\'echet mean}
\label{sec:gfm_algorithm}

% Add the following lines to the preamble if they are not already present:
% \usepackage{algorithm}
% \usepackage{algpseudocode}
% \algrenewcommand\algorithmicrequire{\textbf{Input:}}
% \algrenewcommand\algorithmicensure{\textbf{Output:}}

% The exact RDP guarantees are stated for the exact diffusion laws
% B_t(a) and X_t^o(a). The two samplers below are intrinsic time-discretizations
% used for implementation and numerical experiments.

Define the global sensitivity of the generalized Fr\'echet mean by
$$\displaystyle \Delta_{\mu_p}:=\sup_{D\sim D'} d\!\bigl(\mu_p(D),\mu_p(D')\bigr),$$
where $D\sim D'$ means that $D$ and $D'$ differ in at most one entry.  In practice, one
substitutes any valid upper bound for $\Delta_{\mu_p}$.  The results established above give
three convenient choices.
In what follows we write $\Delta:=\Delta_{\mu_p}$ for brevity.

A practical advantage of both mechanisms is that they are
\emph{normalization-free}.  One does not need to evaluate the heat kernel $p(x,z,t)$, the
stationary density of the Langevin diffusion, or any partition function.  Instead, one samples
by simulating an intrinsic SDE on $\m$.  This is especially appealing on manifolds where
closed-form densities are unavailable but geodesics, exponential maps, or logarithm maps are
computationally accessible.  On Hadamard manifolds, the emptiness of cut loci further simplifies
implementation because $\Exp_x$ and $\Log_x$ are globally defined.

For concreteness we describe simple intrinsic time-discretizations in Algorithm~\ref{alg:bm_sampler} and \ref{alg:langevin_sampler} for Brownian motion and Langevin dynamics, respectively. They are sufficient for
implementation and numerical experiments, and they preserve the manifold constraint at every
step because each update is performed in the tangent space and mapped back by $\Exp$.
The exact RDP guarantees in Theorems~\ref{Thm:BMM} and~\ref{Thm:RDP2} are stated for the
exact diffusion laws; in practice, one uses a sufficiently fine discretization, or exact
sampling whenever a closed-form diffusion sampler is available.\footnote{The exact sampling for Brownian motion is only available on the sphere and only works well for time horizons $t > 0.05$. Refer to \cite{mijatovic2020note} for details.} Finally, we summarize the $(\alpha,\varepsilon)$-RDP release procedure of the generalized Fr\'echet mean in Algorithm \ref{alg:private_gfm} with theoretical guarantee in Proposition \ref{prop:rdp_gfm_algorithm}.

\begin{algorithm}[h]
\caption{Intrinsic sampler for the BM mechanism $B_t(a)$}
\label{alg:bm_sampler}
{ \normalsize
\begin{algorithmic}[1]
\Require Starting point $a\in \m$, diffusion time $t>0$, number of steps $N_{\mathrm{step}}\ge 1$
\Ensure An intrinsic approximation of $B_t(a)$
\State Set $h\gets t/N_{\mathrm{step}}$ and $Y_0\gets a$
\For{$k=0,1,\ldots,N_{\mathrm{step}}-1$}
    \State Choose an orthonormal frame $u_k:\mathbb{R}^m\to T_{Y_k}\m$
    \State Sample $Z_k\sim N(0,I_m)$ in $\mathbb{R}^m$
    \State Set $\xi_k\gets u_k Z_k\in T_{Y_k}\m$
    \State Update $Y_{k+1}\gets \Exp_{Y_k}\bigl(\sqrt{2h}\,\xi_k\bigr)$
\EndFor
\State \Return $Y_{N_{\mathrm{step}}}$
\Statex
\Statex \emph{Implementation note.} Each step only requires sampling a standard Gaussian in $\mathbb{R}^m$ and evaluating one exponential map. No heat kernel evaluation or normalization constant is needed.
\end{algorithmic}}
\end{algorithm}

\begin{algorithm}[h]
\caption{Intrinsic sampler for the Langevin mechanism $X_t^o(a)$ on a Hadamard manifold}
\label{alg:langevin_sampler}
{ \normalsize
\begin{algorithmic}[1]
\Require Starting point $a\in \m$, anchor $o\in \m$, drift parameter $\lambda>0$, diffusion time $t>0$, number of steps $N_{\mathrm{step}}\ge 1$
\Ensure An intrinsic approximation of $X_t^o(a)$ generated by $L=\Delta-\langle \nabla V,\nabla\rangle$ with $V(x)=\frac{\lambda}{2}d^2(o,x)$
\State Set $h\gets t/N_{\mathrm{step}}$ and $Y_0\gets a$
\For{$k=0,1,\ldots,N_{\mathrm{step}}-1$}
    \State Choose an orthonormal frame $u_k:\mathbb{R}^m\to T_{Y_k}\m$
    \State Sample $Z_k\sim N(0,I_m)$ in $\mathbb{R}^m$
    \State Set $\xi_k\gets u_k Z_k\in T_{Y_k}\m$
    \State Compute the drift vector $b_k\gets \lambda\,\Log_{Y_k}(o)\in T_{Y_k}\m$
    \State Update $Y_{k+1}\gets \Exp_{Y_k}\bigl(h\,b_k+\sqrt{2h}\,\xi_k\bigr)$
\EndFor
\State \Return $Y_{N_{\mathrm{step}}}$
\Statex
\Statex \emph{Implementation note.} On a Hadamard manifold, $\Log_{Y_k}(o)$ is globally well-defined and each step only requires Gaussian sampling, one logarithm map to the public anchor $o$, and one exponential map. No stationary density or normalization constant is evaluated.
\end{algorithmic}}
\end{algorithm}

\begin{proposition}[RDP guarantee for the private generalized Fr\'echet mean]
\label{prop:rdp_gfm_algorithm}
Assume that $\widehat\mu_p(D)=\mu_p(D)$ exists uniquely and that $\Delta$ is any valid upper
bound on its global sensitivity.
\begin{enumerate}[label=(\roman*),leftmargin=*]
    \item In the BM regime of Theorem~\ref{Thm:BMM}, the release
    $\widetilde\mu_p(D)=B_t(\widehat\mu_p(D))$ with $t$ chosen as in Algorithm~\ref{alg:private_gfm} is
    $(\alpha,\varepsilon)$-RDP.
    \item In the Hadamard/Langevin regime of Theorem~\ref{Thm:RDP2}, the release
    $\widetilde\mu_p(D)=X_t^o(\widehat\mu_p(D))$ with $t$ chosen as in Algorithm~\ref{alg:private_gfm} is
    $(\alpha,\varepsilon)$-RDP.
\end{enumerate}
\end{proposition}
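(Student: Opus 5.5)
The plan is to treat the private release as a composition: a deterministic summary $f(D)=\widehat\mu_p(D)$ followed by a Markov kernel. Then we invoke the mechanism-level RDP theorems with $\Delta$ as the sensitivity parameter. Since the RDP guarantees in Theorems~\ref{Thm:BMM} and~\ref{Thm:RDP2} are stated for an arbitrary $\m$-valued summary with global sensitivity $\Delta$, the proof reduces to three checks.

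First, $f=\mu_p$ is a well-defined $\m$-valued map on the data domain. This is the existence and uniqueness hypothesis, supplied in practice by the $L^p$ center-of-mass proposition under the radius restriction $r<r_{\kappa,p}$, or on Hadamard manifolds without restriction.

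Second, $\Delta$ dominates $d(\mu_p(D),\mu_p(D'))$ for all adjacent $D\sim D'$. This holds by definition of $\Delta$ being a valid upper bound, for example one of the bounds from Theorems~\ref{Thm:sensifrechet}, \ref{thm:sens_hadamard_final} or~\ref{thm:sens_compact_final}. A small technical point is that the paper defines sensitivity with a strict inequality $d(f(D),f(D'))<\Delta$. I would note that the Harnack-based bound of Theorem~\ref{Thm:RDP1} only uses $d\le\Delta$, since the exponent is monotone in $d$, so a non-strict bound suffices.

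Third, the diffusion time $t$ prescribed in Algorithm~\ref{alg:private_gfm} coincides with the calibration formula. For case (i) this is $t=\frac{\log(2\varepsilon)-\log(2\varepsilon-K\alpha\Delta^2)}{2K}$, or $t=\alpha\Delta^2/(4\varepsilon)$ when $K=0$. For case (ii) it is $t=\log(1+(\lambda-K)\alpha\Delta^2/(2\varepsilon))/(2(\lambda-K))$ with $\lambda>K$.

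For (i), fix adjacent $D,D'$ and write $x=\widehat\mu_p(D)$, $y=\widehat\mu_p(D')$. The laws of $\widetilde\mu_p(D)$ and $\widetilde\mu_p(D')$ are $P_t(x,\cdot)$ and $P_t(y,\cdot)$. The dimension-free Harnack inequality with $f$ the likelihood ratio gives the standard bound
\[
D_\alpha\bigl(P_t(x,\cdot)\,\|\,P_t(y,\cdot)\bigr)\le \frac{K\alpha d^2(x,y)}{2(1-e^{-2Kt})}\le \frac{K\alpha\Delta^2}{2(1-e^{-2Kt})},
\]
which is exactly Theorem~\ref{Thm:RDP1}. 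Solving for equality with $\varepsilon$ yields the chosen $t$, so the release is $(\alpha,\varepsilon)$-RDP. One should check that $2\varepsilon-K\alpha\Delta^2>0$, which is automatic for $K\le 0$; this is the regime in which Theorem~\ref{Thm:BMM} is stated.

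For (ii), the argument is identical with $P_t$ the Langevin semigroup. The anchor $o$ is fixed independently of the data, so the kernel does not depend on $D$. On a Hadamard manifold, $\nabla^2 V\ge\lambda$ for $V=\frac\lambda2 d^2(o,\cdot)$, hence $\Ric_V\ge \lambda-K>0$. The Harnack inequality with constant $-(\lambda-K)$ then gives $\varepsilon=\frac{(\lambda-K)\alpha\Delta^2}{2(e^{2(\lambda-K)t}-1)}$, which inverts to the stated $t$. This is precisely Theorem~\ref{Thm:RDP2}.

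The only place requiring care, and the main obstacle, is confirming that the released object is the exact diffusion law rather than the discretized sampler output. The RDP claim should be stated for the exact laws $B_t$ and $X_t^o$, as the paper already remarks. Independence of the anchor and all tuning parameters $(t,\lambda)$ from the data is also essential: if they depended on $D$, the kernel would change with $D$ and the bound above would not apply.
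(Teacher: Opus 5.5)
Your proposal is correct and follows the paper's proof, which likewise just specializes the summary $f(D)$ to $\mu_p(D)$ and substitutes the sensitivity bound $\Delta$ into Theorems~\ref{Thm:BMM} and~\ref{Thm:RDP2}. Your additional checks are sound refinements rather than a different route: the non-strict sensitivity inequality, positivity of $2\varepsilon-K\alpha\Delta^2$ when $K<0$, $\Ric_V\ge\lambda-K$ on a Hadamard manifold, data-independence of $o,\lambda,t$, and the exact-law versus discretized-sampler caveat.
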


\begin{proof}
Both claims are immediate from the diffusion-based RDP theorems once the deterministic
summary $f(D)$ is specialized to $\widehat\mu_p(D)=\mu_p(D)$ and its sensitivity bound $\Delta$
is substituted into the formulas of Theorems~\ref{Thm:BMM} and~\ref{Thm:RDP2}.
\end{proof}

\begin{algorithm}[h]
\caption{$(\alpha,\varepsilon)$-RDP release of the generalized Fr\'echet mean}
\label{alg:private_gfm}
{ \normalsize
\begin{algorithmic}[1]
\Require Dataset $D=(x_1,\ldots,x_n)$, order $\alpha>1$, privacy budget $\varepsilon>0$, exponent $p>1$, and a choice of diffusion mechanism
\Ensure A private release $\widetilde\mu_p(D)$ of $\mu_p(D)$ satisfying $(\alpha,\varepsilon)$-RDP
\State Compute the non-private generalized Fr\'echet mean $\widehat\mu_p(D)\gets \mu_p(D)$
\State Compute a valid upper bound $\Delta$ on the global sensitivity of $\mu_p$,
\Statex \hspace{1.5em}applying Theorem~\ref{Thm:sensifrechet} when $1<p\le 2$;
\Statex \hspace{1.5em}applying Theorem~\ref{thm:sens_hadamard_final} when $p>2$ and $\m$ is Hadamard;
\Statex \hspace{1.5em}applying Theorem~\ref{thm:sens_compact_final} in the compact positive-curvature regime.
\If{the BM mechanism is used}
    \State Calibrate the diffusion time by Theorem~\ref{Thm:BMM}: $t\gets \frac{\log(2\varepsilon)-\log\bigl(2\varepsilon-K\alpha\Delta^2\bigr)}{2K}$
    \Statex \hspace{1.5em}and in the Euclidean case $K=0$, use $t\gets \alpha\Delta^2/(4\varepsilon)$
    \State Sample $\widetilde\mu_p(D)$ from $B_t\bigl(\widehat\mu_p(D)\bigr)$ using Algorithm~\ref{alg:bm_sampler}
\Else
    \State Choose an anchor $o\in \m$ independently of the confidential data and select $\lambda>K$
    \State Calibrate the diffusion time by Theorem~\ref{Thm:RDP2}: $t\gets \frac{1}{2(\lambda-K)}\log\!\left(1+\frac{(\lambda-K)\alpha\Delta^2}{2\varepsilon}\right)$
    \State Sample $\widetilde\mu_p(D)$ from $X_t^o\bigl(\widehat\mu_p(D)\bigr)$ using Algorithm~\ref{alg:langevin_sampler}
\EndIf
\State \Return $\widetilde\mu_p(D)$
\Statex
\Statex \emph{Privacy guarantee.} By Theorems~\ref{Thm:BMM} and~\ref{Thm:RDP2}, the returned value satisfies $(\alpha,\varepsilon)$-RDP once the chosen sensitivity bound $\Delta$ is valid.
\end{algorithmic}}
\end{algorithm}
\clearpage
\section{Numerical Experiments}
\label{sec:experiment}

To illustrate the utility of our proposed mechanisms, we conduct numerical simulations to compare them with existing geometry-aware privacy mechanisms, namely the Riemannian Laplace (RL) mechanism \cite{reimherr2021dpmanifold} and the Exponential-Wrapped Gaussian (EWG) mechanism for RDP \cite{jiang2026expwrap}, on both a compact manifold—the sphere (Section~\ref{sec:sphere})—and a Hadamard manifold—the hyperbolic space (Section~\ref{sec:hyperbolic}). On the sphere, we evaluate our BM mechanism (Theorem~\ref{Thm:BMM}) against the RL mechanism; on the hyperbolic space, we evaluate our Langevin mechanism (Theorem~\ref{Thm:RDP2}) against both the RL mechanism and the EWG mechanism. 

In both settings, we generate a dataset $D=(x_1,\dots,x_n)$ of size $n$ from a geodesic ball of radius $r$, and the goal is to release an $(\alpha,\varepsilon)$-RDP-compliant sample Fr\'echet mean based on $D$. To assess utility, we compute the distance $d(\hat{\mu},\tilde{\mu})$ between the confidential sample Fr\'echet mean $\hat{\mu}$ and the privatized output $\tilde{\mu}$. Naturally, a smaller distance indicates better utility. We compare these distances across a range of privacy budgets $(\alpha,\varepsilon)$ with $\alpha=2$ and $\varepsilon\in\{0.1,0.2,0.3,0.4,0.5,0.7,1,1.5,2,2.5,3\}$. For each privacy level, we repeat the simulation 1000 times and report the average distances, where red circles represent our mechanisms and blue triangles represent the competing mechanisms. The shaded regions indicate standard errors around the average distances; see Figures~\ref{fig_sphere_RL}-\ref{fig_hyperbolic_EWG}.

We briefly recall the RL and EWG mechanisms. Given a private summary $f(D)$, the RL mechanism achieves $\varepsilon$-DP by outputting a sample from a Riemannian Laplace distribution with footpoint $f(D)$ and rate $\sigma=\Delta/\varepsilon$, which has density $p(x)\propto \exp\!\big(-d(f(D),x)/(\Delta/\varepsilon)\big)$.\footnote{The rate $\sigma=\Delta/\varepsilon$ is achieved only on homogeneous manifolds. For non-homogeneous manifolds, we have $\sigma=2\Delta/\varepsilon$.} Note that any $\varepsilon$-DP mechanism also achieves $(\alpha,\varepsilon)$-RDP, and furthermore $\varepsilon$-DP is equivalent to $(\alpha,\varepsilon)$-RDP as $\alpha\to\infty$. Here we use a sharper conversion (see the Supplementary Materials): an $\varepsilon^*$-DP mechanism achieves $(\alpha,\varepsilon(\alpha))$-RDP with
\begin{equation}\label{eq:dp2rdp}
    \varepsilon(\alpha) = \frac{1}{\alpha-1}
    \log\!\left(
    \frac{1}{\exp(\varepsilon^*)+1}\exp(\alpha\varepsilon^*)
    +
    \frac{\exp(\varepsilon^*)}{\exp(\varepsilon^*)+1}\exp(-\alpha\varepsilon^*)
    \right).
\end{equation}
Similarly, the EWG mechanism achieves $(\alpha,\varepsilon)$-RDP by sampling from an EWG distribution with footpoint $p_0$, center $f(D)$, and rate $\sigma=\Delta/\sqrt{2\varepsilon/\alpha}$ (see \cite{jiang2026expwrap} for the definition of the EWG distribution). Note that the choice of footpoint $p_0$ plays a role analogous to the anchor point $o$ in our Langevin mechanism, since both must be chosen independently of the confidential dataset. In our implementation, we use the same point as the footpoint for the EWG mechanism and as the anchor for the Langevin mechanism.

\subsection{Sphere}
\label{sec:sphere}

For simulations on the sphere $S^m$, we focus on $m=2$ and sample a dataset $D$ uniformly from the geodesic ball $B(p,r)$ centered at $p=(0,0,1)$ with radius $r=\pi/5$. We implement two mechanisms that output an $(\alpha,\varepsilon)$-RDP-compliant sample Fr\'echet mean: our BM mechanism and the RL mechanism. For each privacy budget $\varepsilon$, we compute the corresponding $\varepsilon^*$ for $\varepsilon^*$-DP using \eqref{eq:dp2rdp} and then implement the RL mechanism with $\sigma=\Delta/\varepsilon^*$. The comparison between the BM and RL mechanisms is displayed in Figure~\ref{fig_sphere_RL}.

\begin{figure}[h!]
    \centering
    \stackunder[5pt]{
    \includegraphics[width=0.30\textwidth]{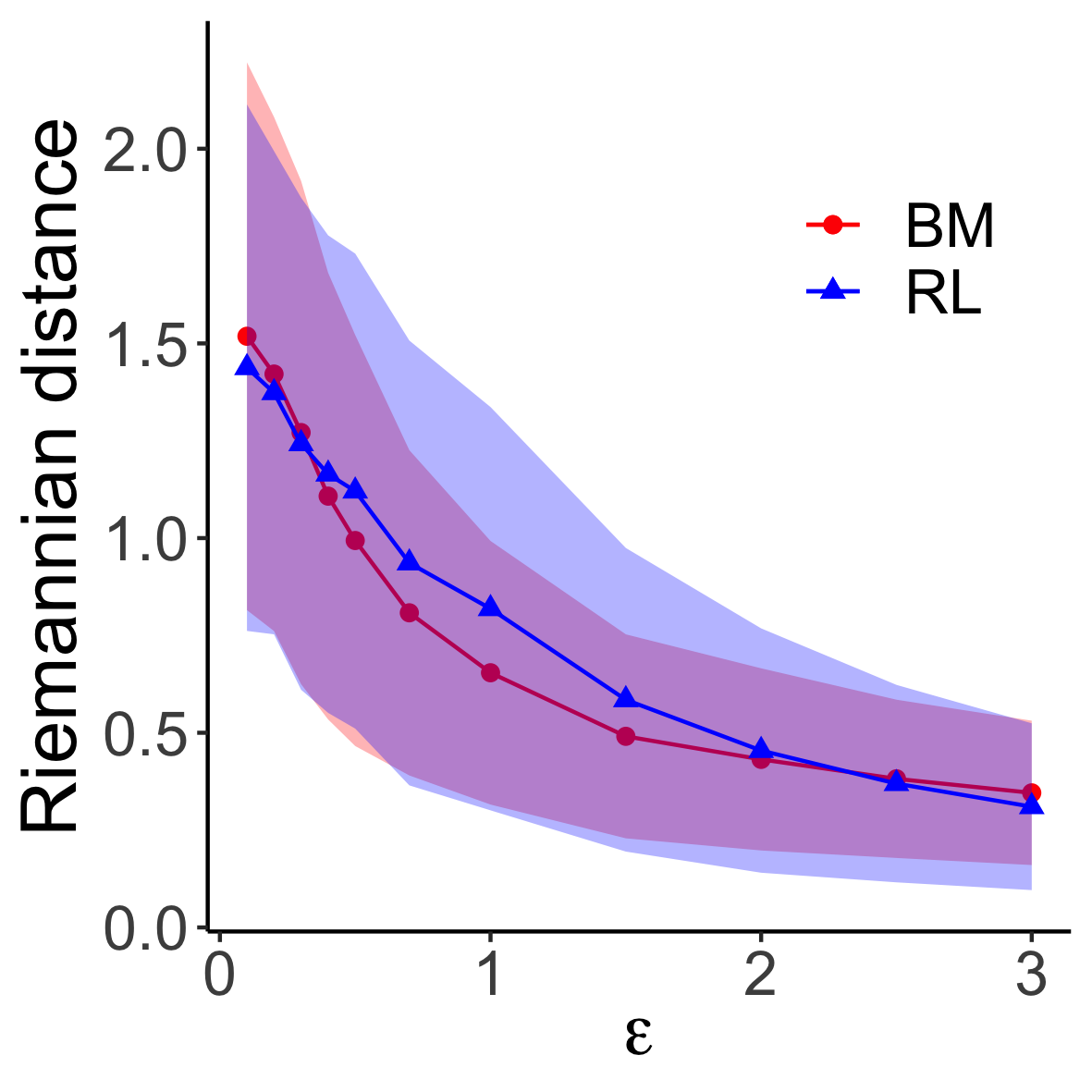}}{$n =10$}
    \hfill
    \stackunder[5pt]{
    \includegraphics[width=0.30\textwidth]{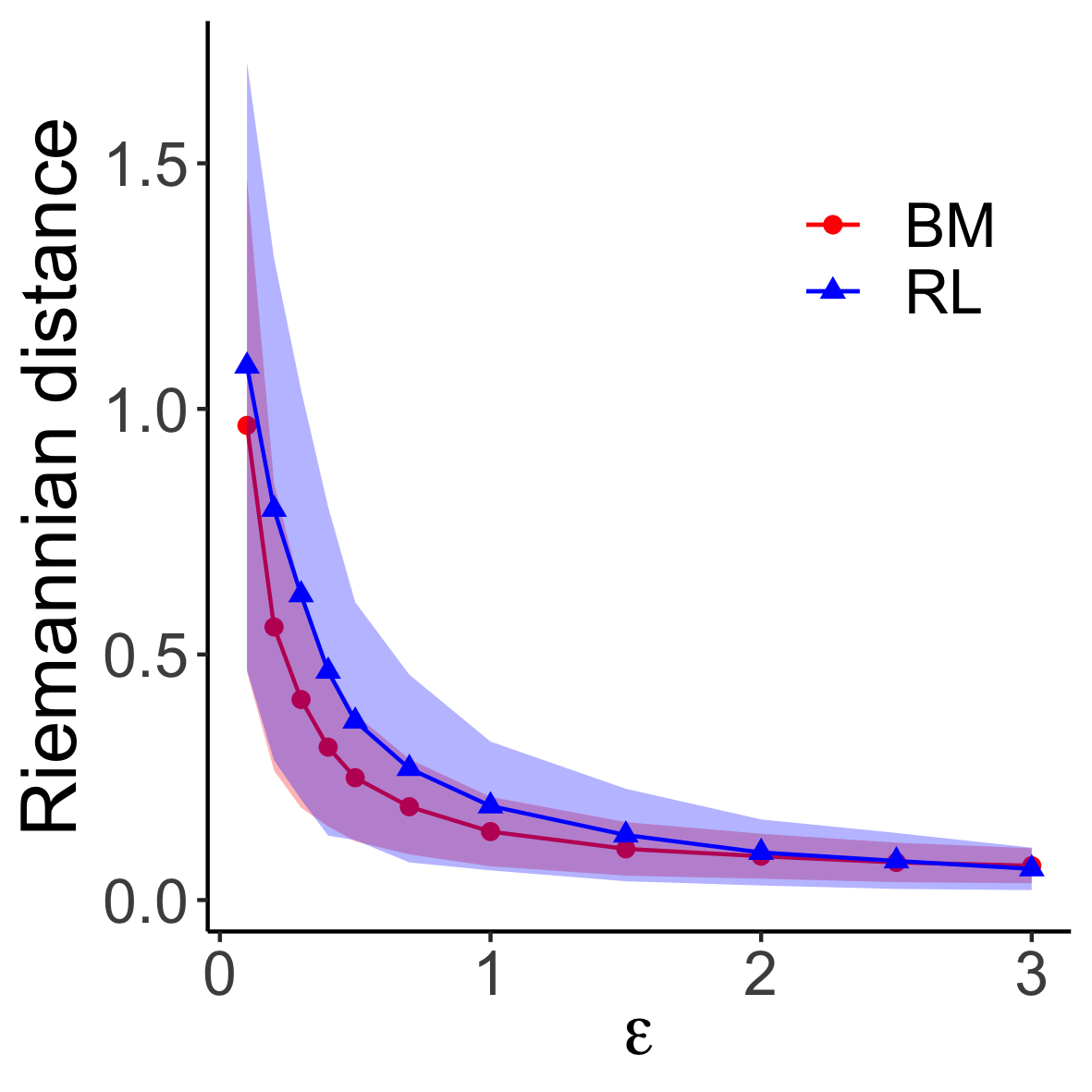}}{$n=50$}
    \hfill
    \stackunder[5pt]{
    \includegraphics[width=0.30\textwidth]{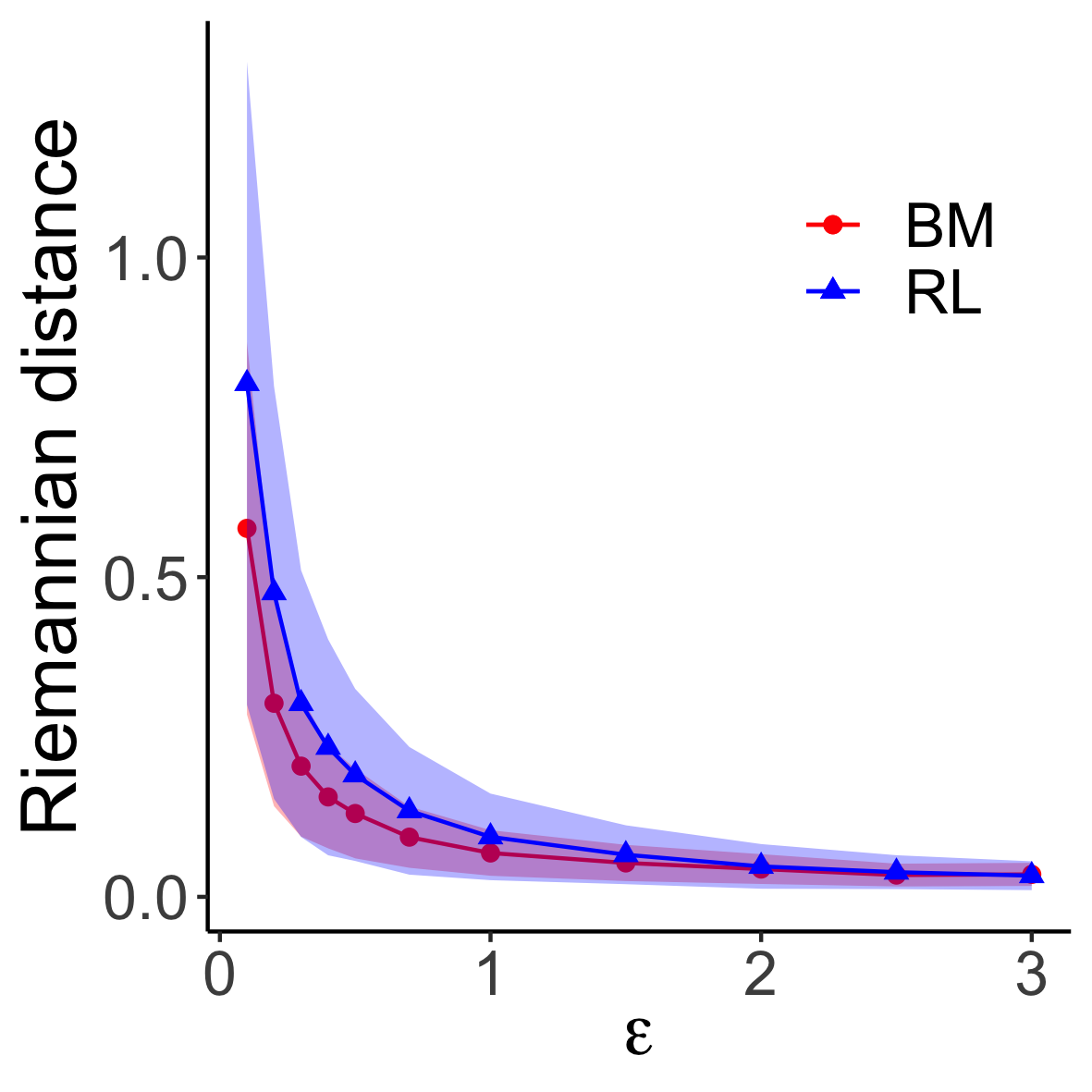}}{$n=100$}
    \caption{Utility comparison of BM and RL mechanisms across different sample size $n \in \{10, 50, 100\}$. Blue lines with triangular symbols show the distances of the RL mechanism, and red lines with circular symbols represent the Riemannian distances of our BM mechanism.}
    \label{fig_sphere_RL}
\end{figure}

Figure~\ref{fig_sphere_RL} illustrates the expected privacy-utility trade-off on the sphere. For both mechanisms, the distance to the true sample Fr\'echet mean decreases as $\varepsilon$ increases, since a larger privacy budget corresponds to less injected noise. The distance also decreases with the sample size $n$, reflecting the decay of the global sensitivity of the sample Fr\'echet mean as $n$ grows. Overall, the BM mechanism remains highly competitive and outperforms the RL mechanism across a wide range of privacy budgets. This behavior is expected: for our BM mechanism, the heat kernel aligns naturally with the definition of RDP, whereas the RL mechanism requires a conversion between different privacy notions.

\subsection{Hyperbolic Space}
\label{sec:hyperbolic}

We use the hyperboloid model $\mathbb{H}_m$ to represent the hyperbolic space. For background material such as the Riemannian distance and expressions for the exponential map under the hyperboloid model, we refer to \cite{nickel2017poincare,nagano2019wrapped}. We fix $m=2$ and sample a dataset $D$ uniformly from the geodesic ball $B(p,r)$ centered at $p=(1,0,0)$ with radius $r=3$. 

We implement three mechanisms that output an $(\alpha,\varepsilon)$-RDP-compliant sample Fr\'echet mean: our Langevin mechanism, the RL mechanism, and the EWG mechanism. To implement the Langevin mechanism, we must specify two parameters: the anchor $o$ and $\lambda$. We fix $\lambda=1.1$, where the lower bound for $\lambda$ is $1$ on $\mathbb{H}_2$, and we consider two scenarios for selecting the anchor. In Scenario~1, representing the case where there is publicly available information about the confidential data, we set the anchor to be the center $p$ of the geodesic ball $B(p,r)$. In Scenario~2, representing the lack of public information, we randomly select a point within $B(p,r)$ as the anchor. 

For the implementation of the EWG mechanism, we only need to choose the footpoint, which, as mentioned above, is set to be the same as the anchor in the Langevin mechanism. Finally, for the RL mechanism, we compute the corresponding DP budget $\varepsilon^*$ using \eqref{eq:dp2rdp} and then apply the standard implementation with rate $\sigma=\Delta/\varepsilon^*$. The comparison between the Langevin and RL mechanisms is shown in Figure~\ref{fig_hyperbolic_RL}, and the comparison between the Langevin and EWG mechanisms is shown in Figure~\ref{fig_hyperbolic_EWG}.

\begin{figure}[h!]
    \centering
    \stackunder[5pt]{
    \includegraphics[width=0.30\textwidth]{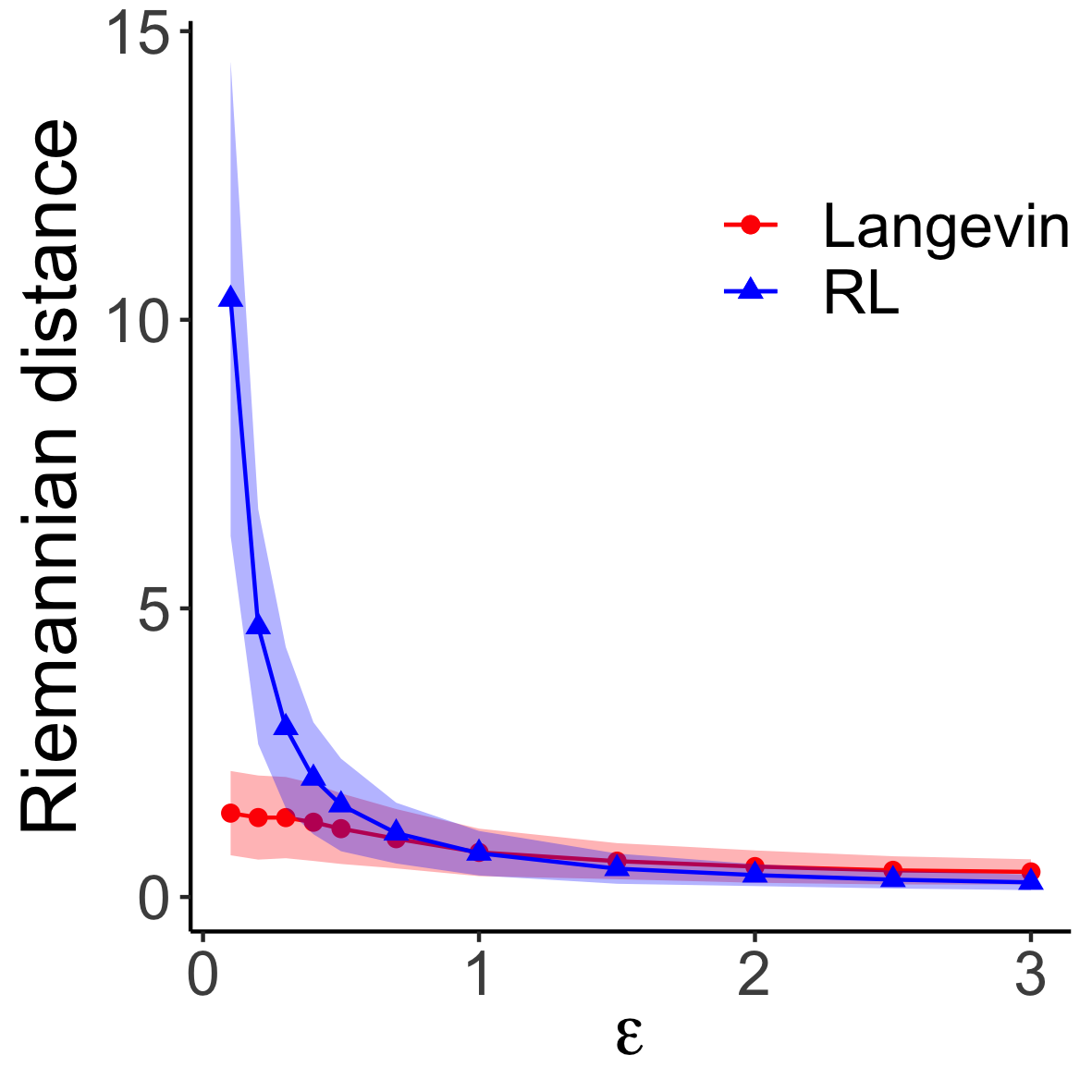}}{}
    \hfill
    \stackunder[5pt]{
    \includegraphics[width=0.30\textwidth]{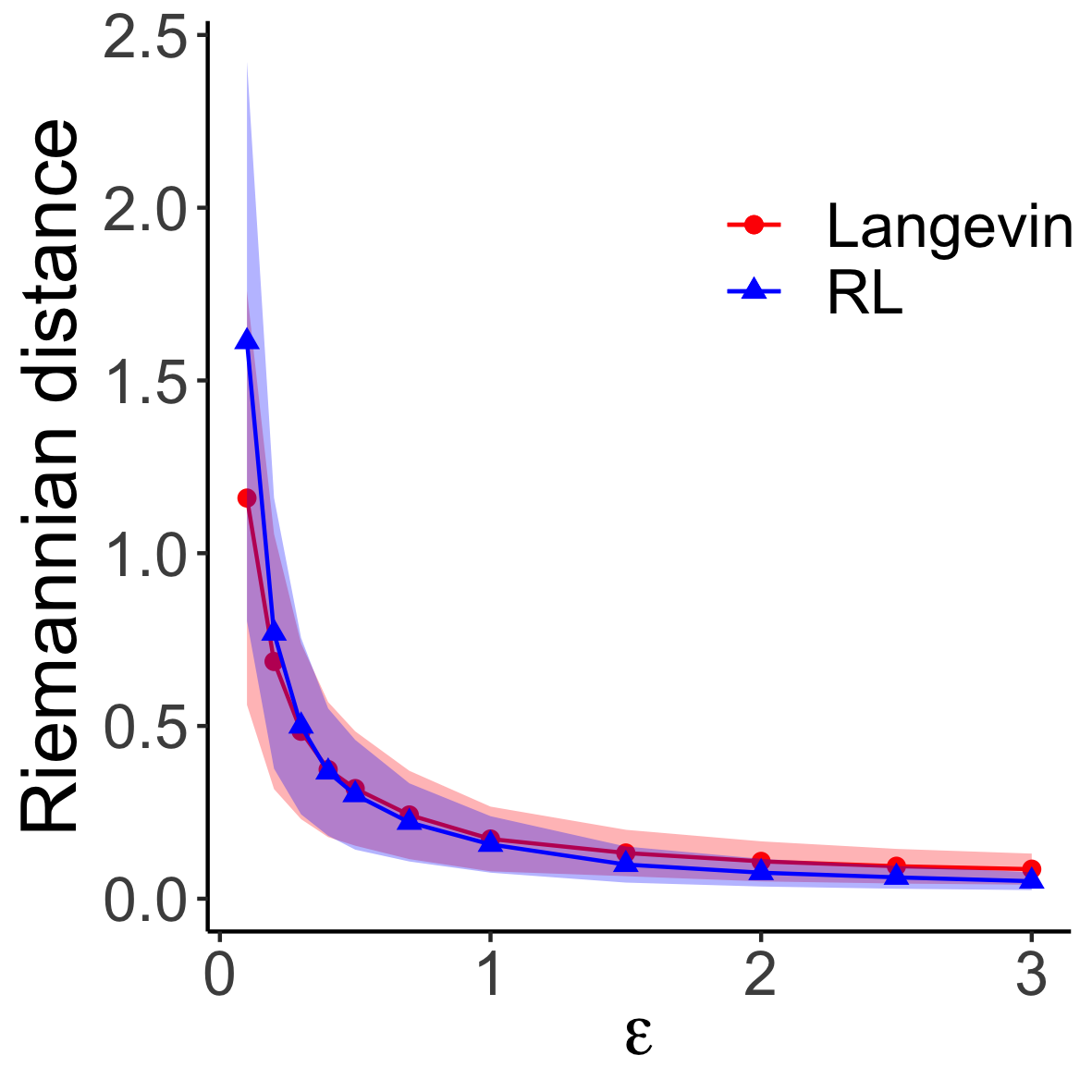}}{}
    \hfill
    \stackunder[5pt]{
    \includegraphics[width=0.30\textwidth]{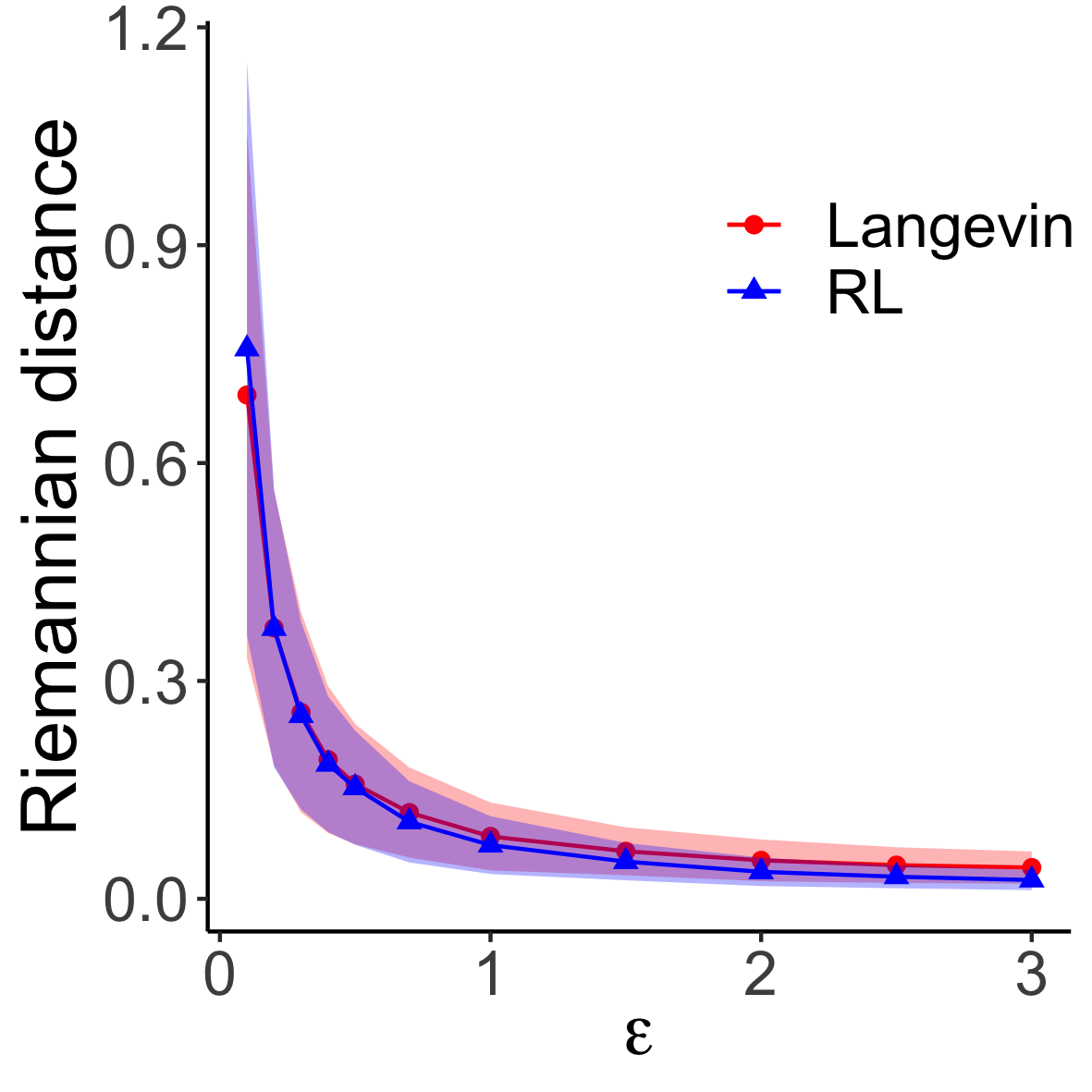}}{}
    \stackunder[5pt]{
    \includegraphics[width=0.30\textwidth]{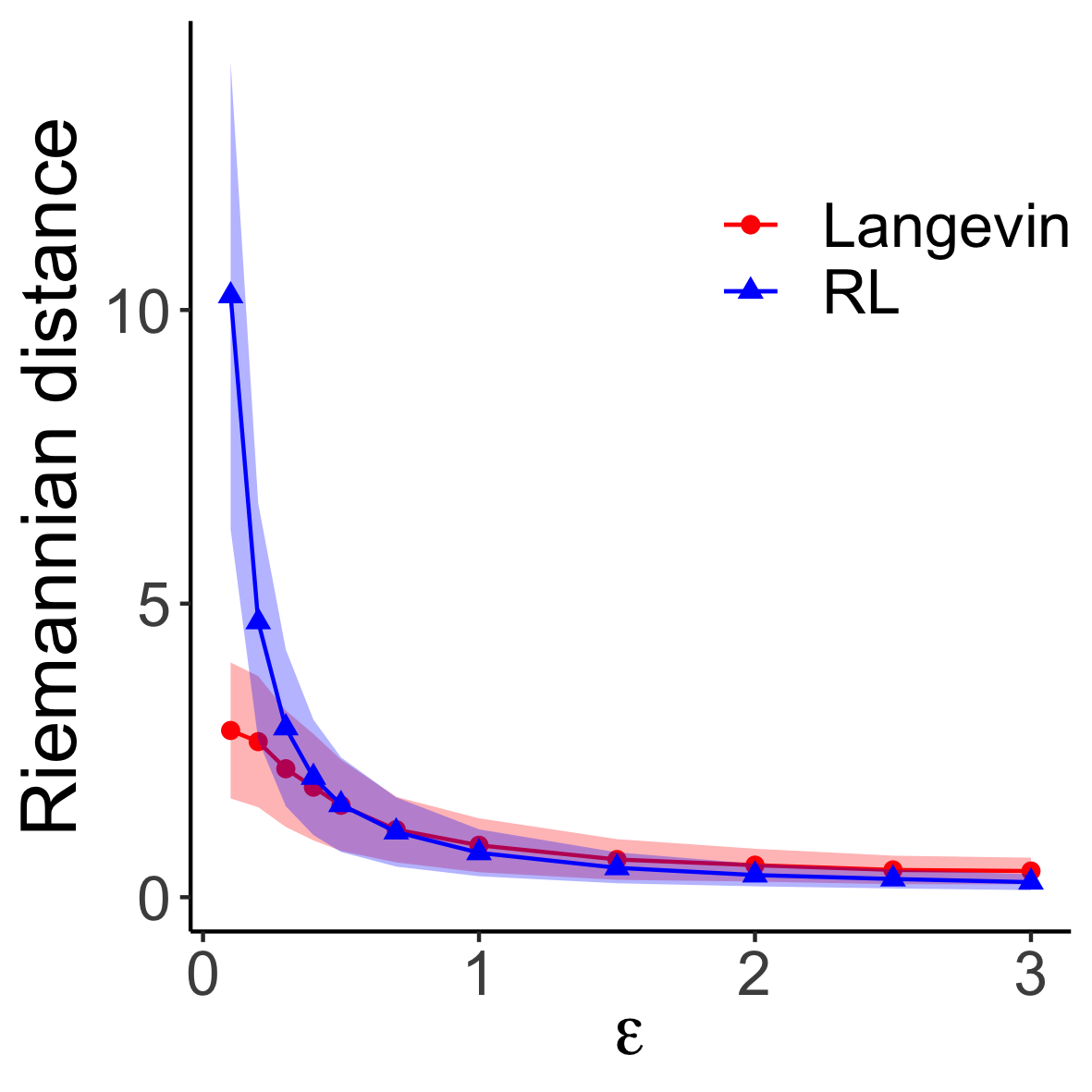}}{$n =10$}
    \hfill
    \stackunder[5pt]{
    \includegraphics[width=0.30\textwidth]{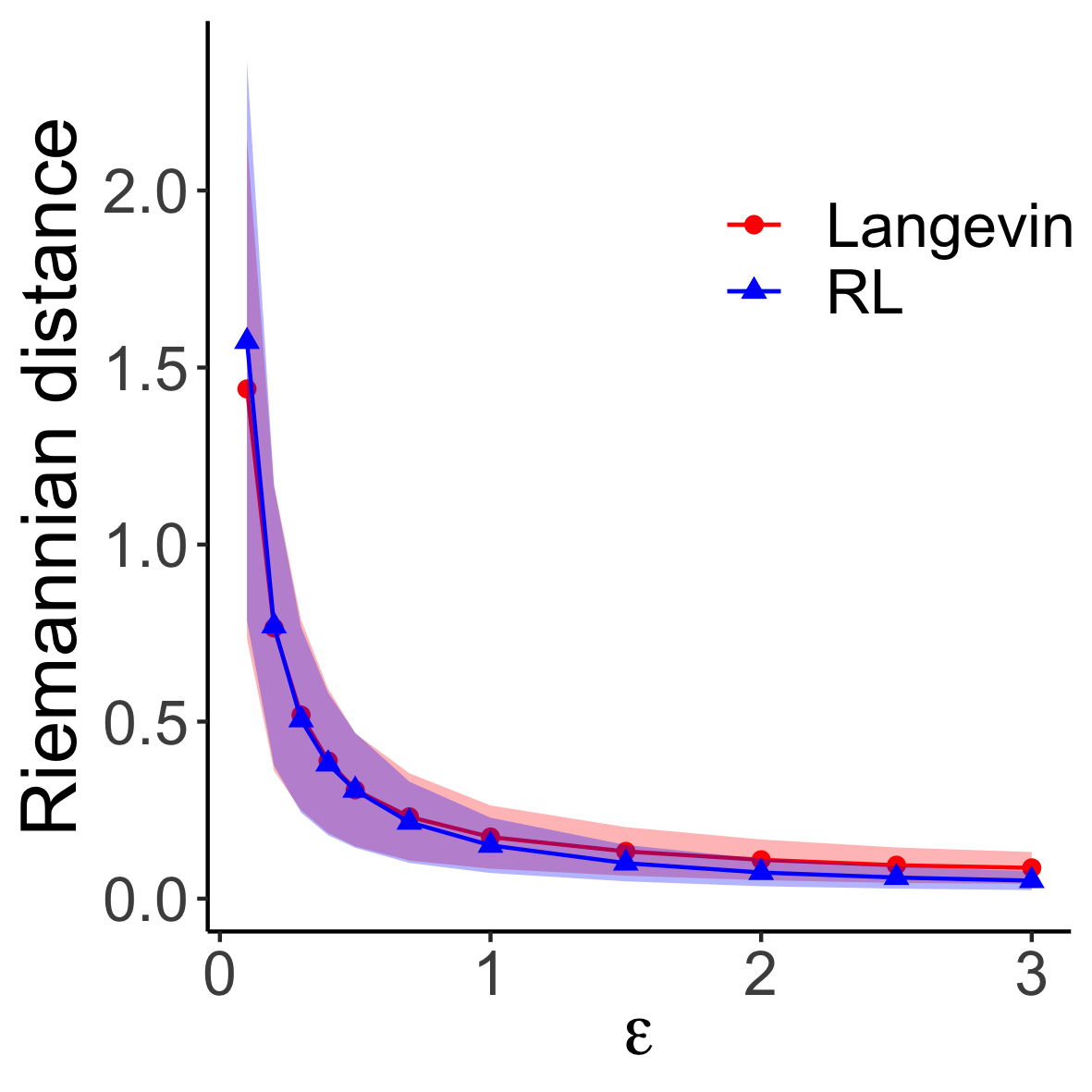}}{$n=50$}
    \hfill
    \stackunder[5pt]{
    \includegraphics[width=0.30\textwidth]{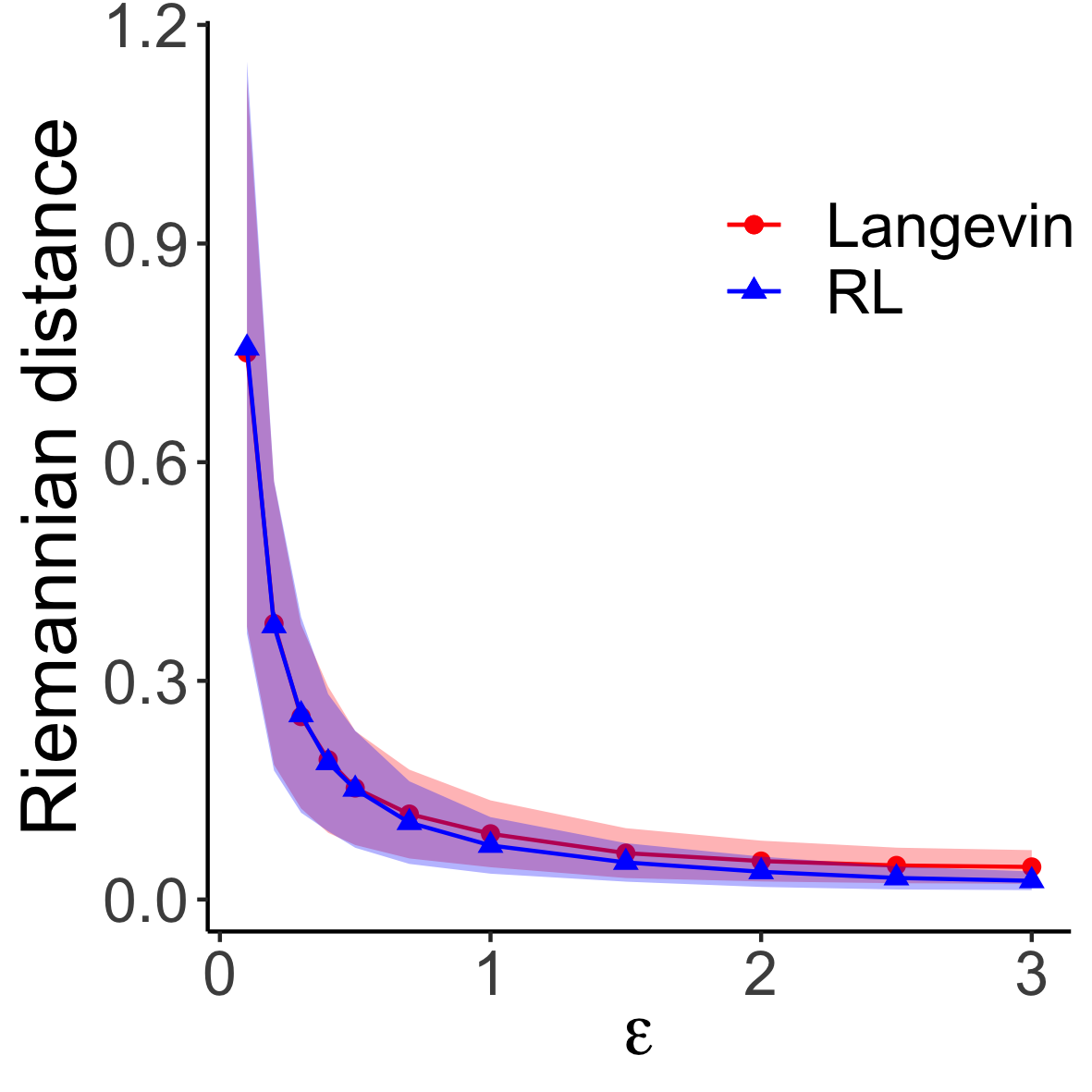}}{$n=100$}
    \caption{Utility comparison of Langevin and RL mechanisms across different sample size $n \in \{10, 50, 100\}$. Blue lines with triangular symbols show the distances of the RL mechanism, and red lines with circular symbols represent the Riemannian distances of our Langevin mechanism. The results are shown for Scenario 1 (top) and Scenario 2 (bottom).}
    \label{fig_hyperbolic_RL}
\end{figure}

\begin{figure}[h!]
    \centering
    \stackunder[5pt]{    \includegraphics[width=0.30\textwidth]{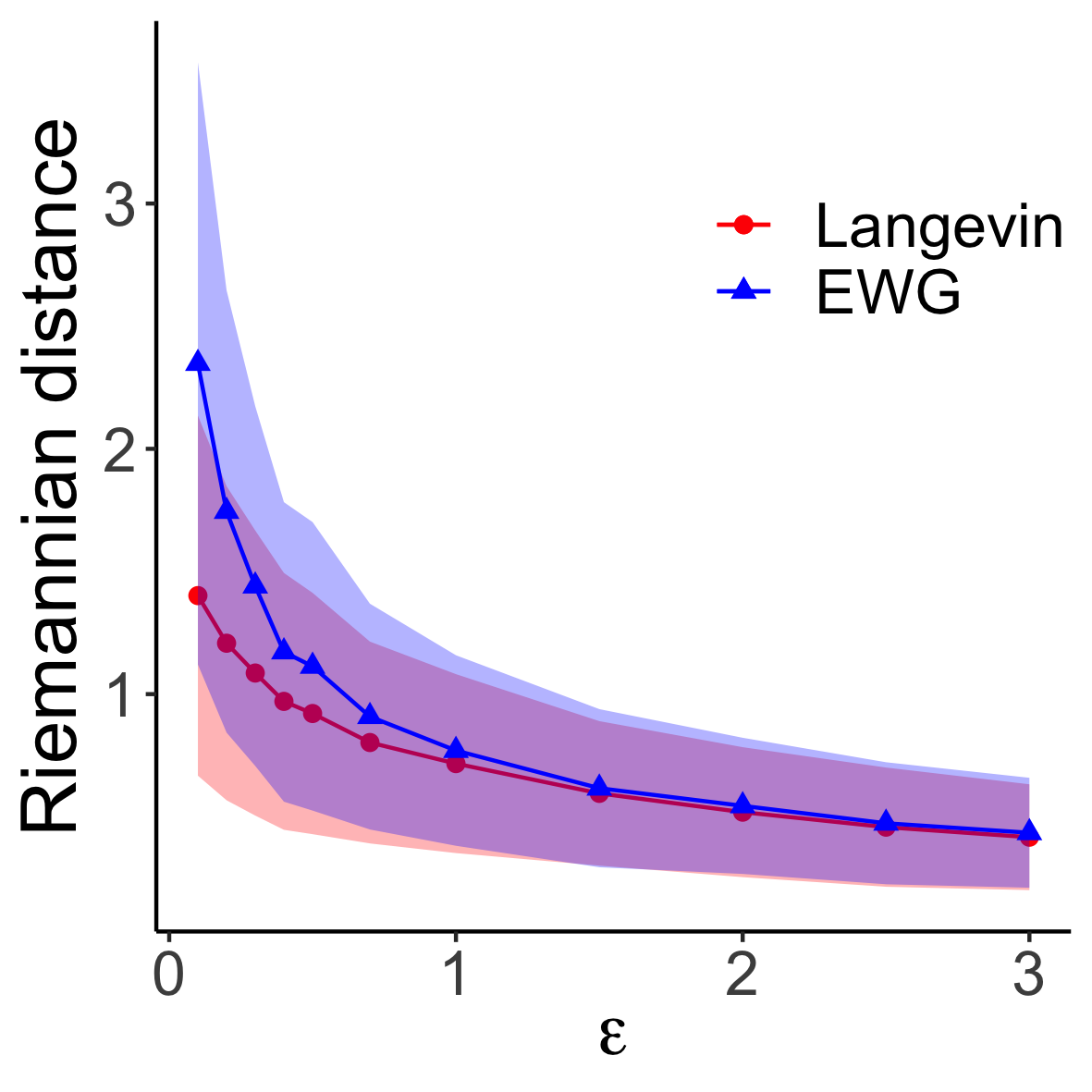}}{}
    \hfill
    \stackunder[5pt]{
    \includegraphics[width=0.30\textwidth]{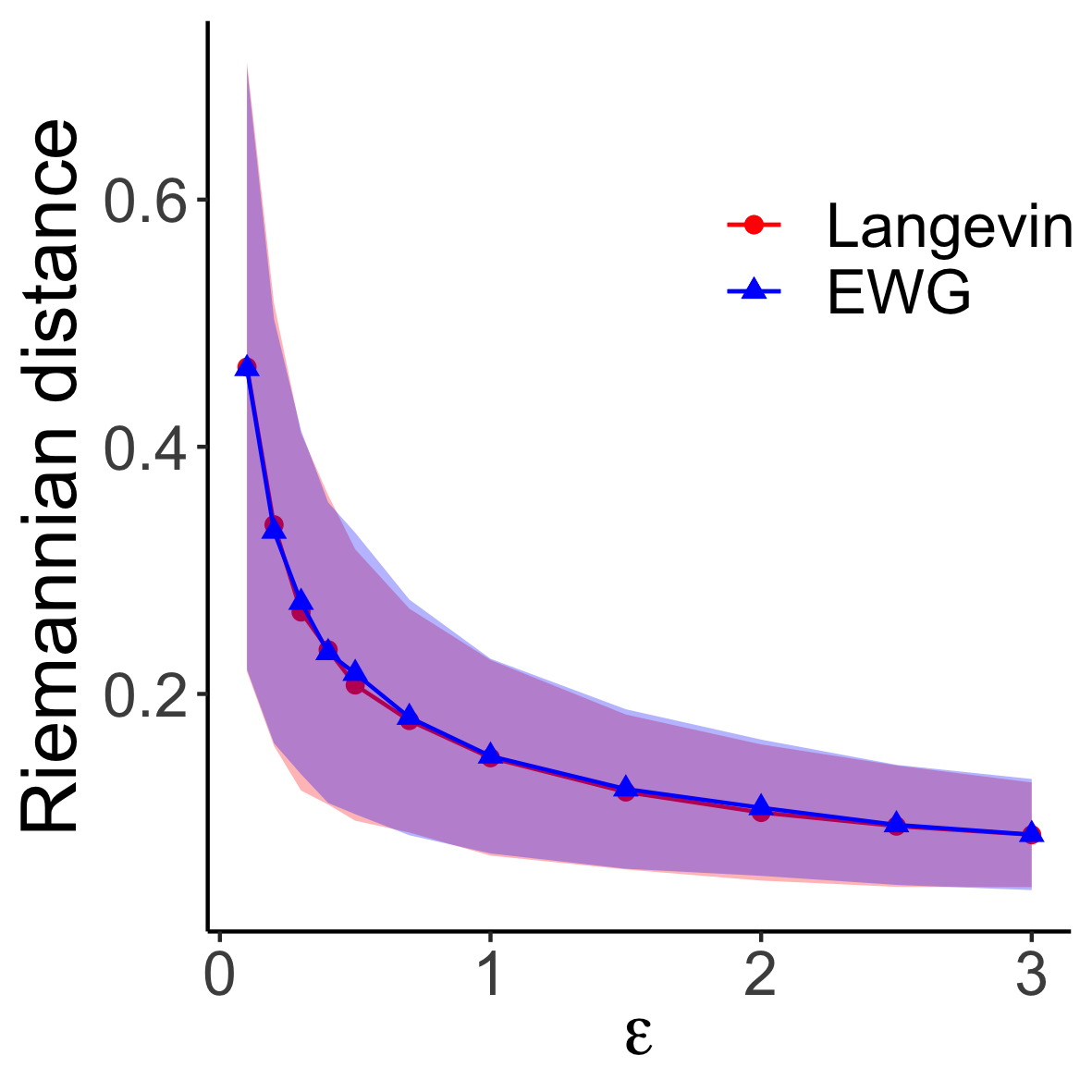}}{}
    \hfill
    \stackunder[5pt]{
    \includegraphics[width=0.30\textwidth]{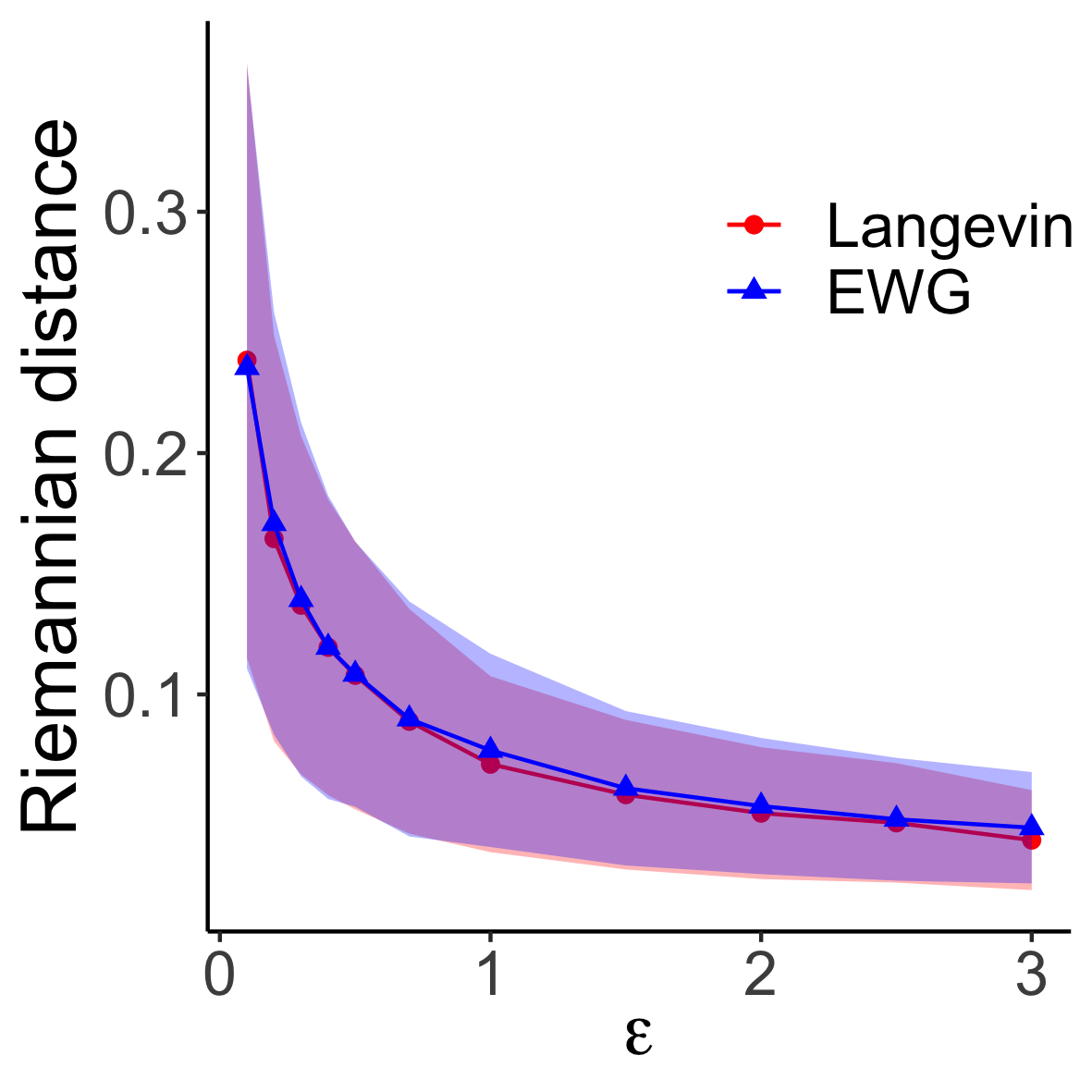}}{}
    \stackunder[5pt]{
    \includegraphics[width=0.30\textwidth]{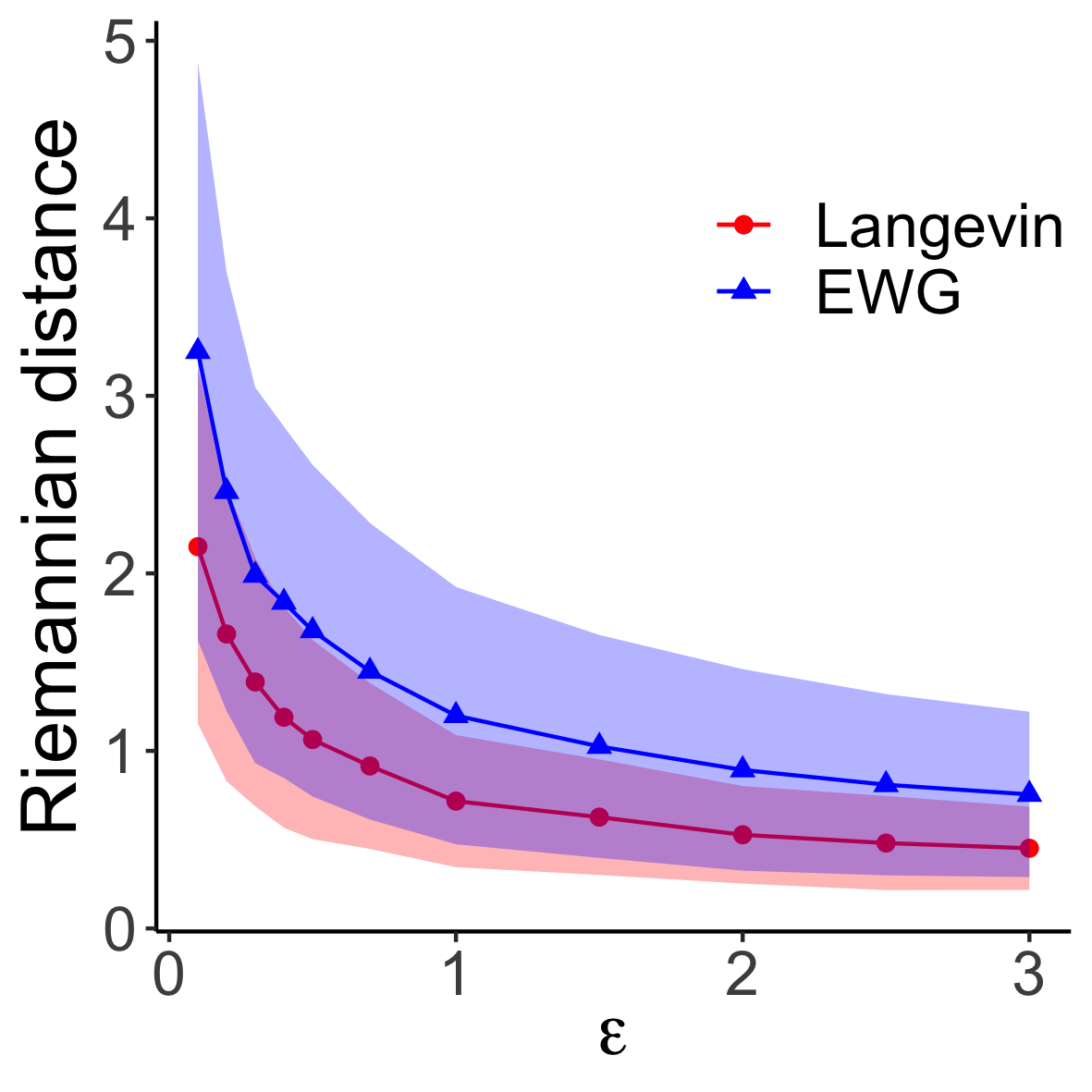}
    }{$n=10$}\hfill
    \stackunder[5pt]{
    \includegraphics[width=0.30\textwidth]{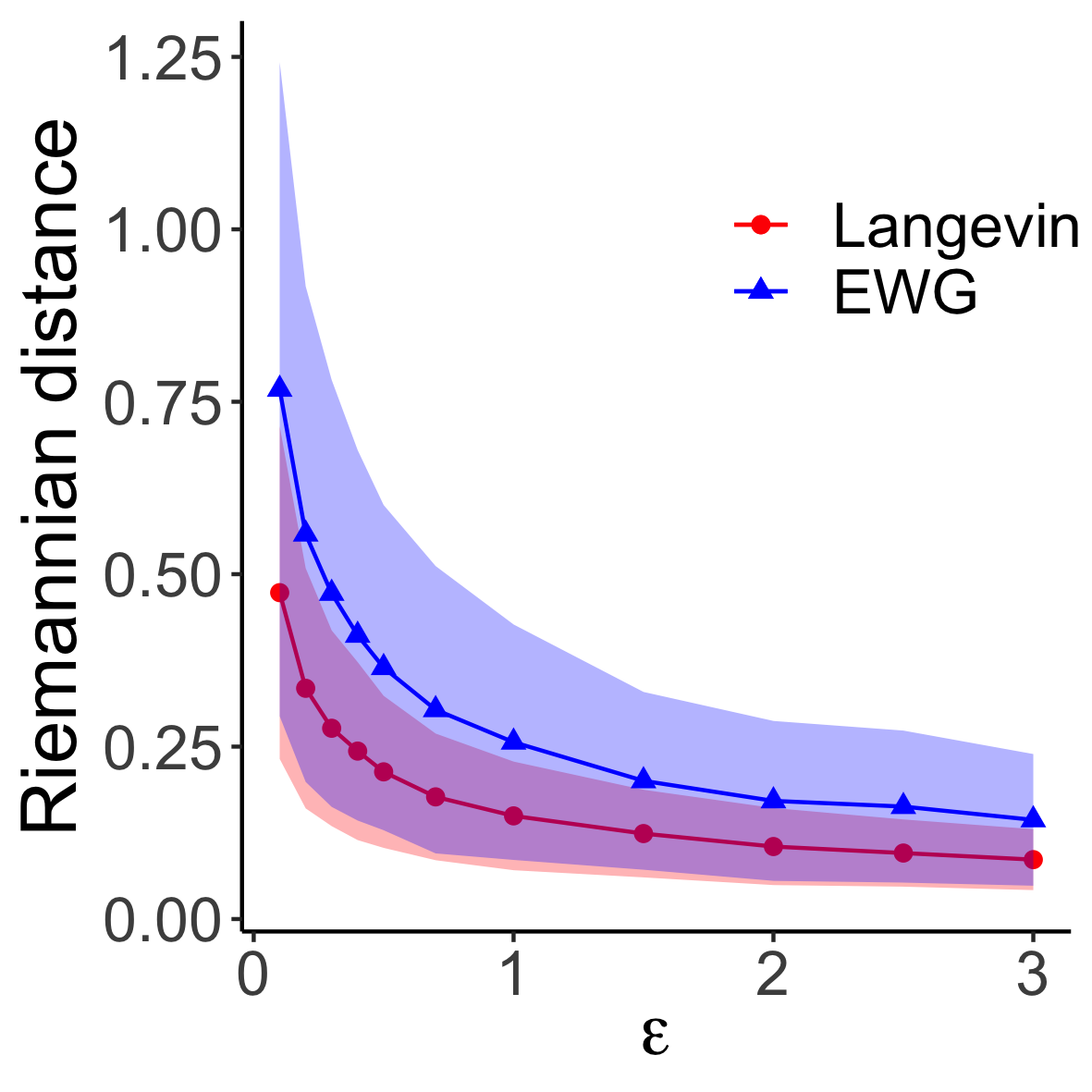}
    }{$n=50$}\hfill
    \stackunder[5pt]{
    \includegraphics[width=0.30\textwidth]{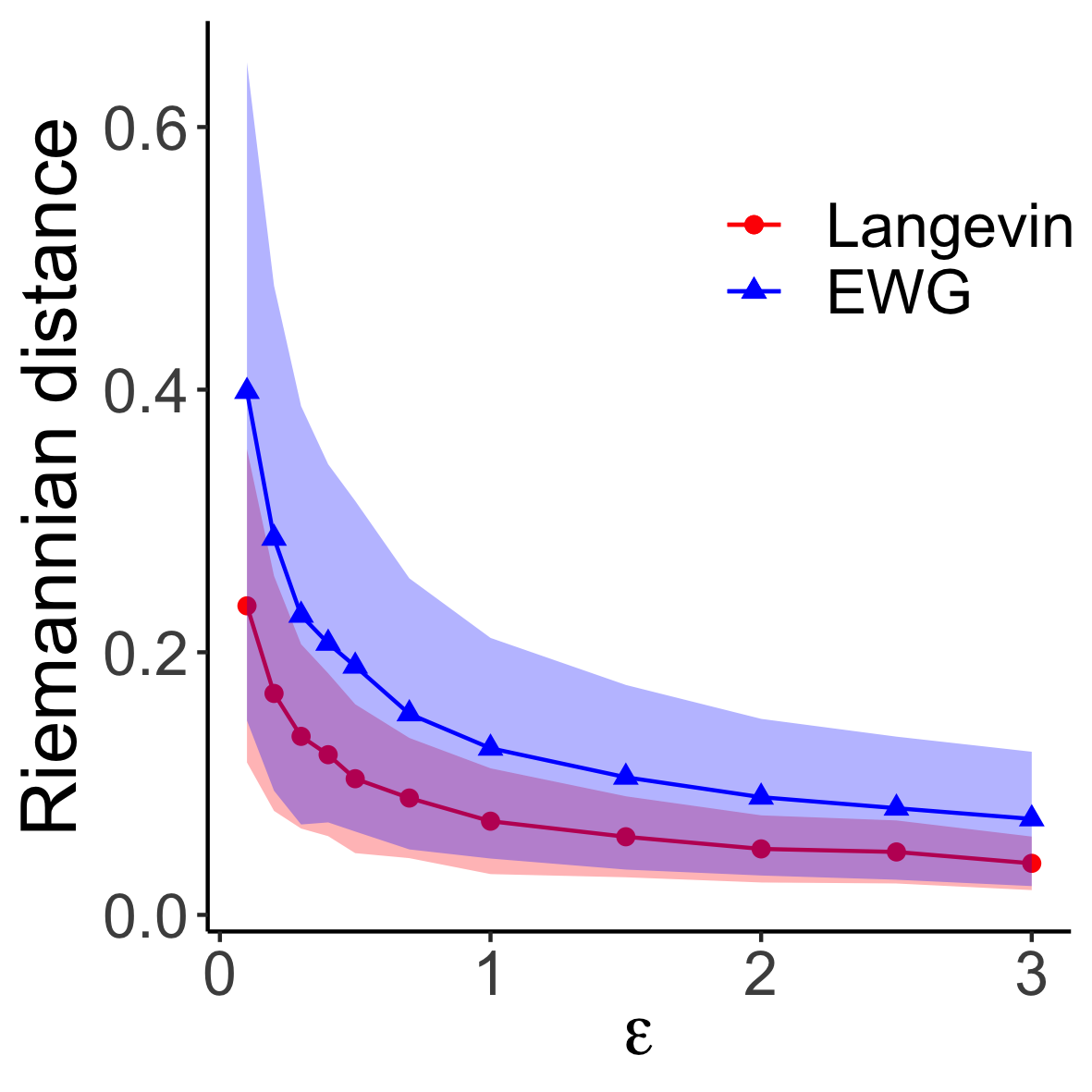}
    }{$n=100$}
    \caption{Utility comparison of Langevin and EWG mechanisms across different sample size $n \in \{10, 50, 100\}$. Blue lines with triangular symbols show the distances of the EWG mechanism, and red lines with circular symbols represent the Riemannian distances of our Langevin mechanism. The results are shown for Scenario 1 (top) and Scenario 2 (bottom).}
    \label{fig_hyperbolic_EWG}
\end{figure}

Figures~\ref{fig_hyperbolic_RL} and~\ref{fig_hyperbolic_EWG} reveal the same qualitative privacy--utility trade-off as in the spherical setting. Across all mechanisms, the Riemannian error decreases as the privacy budget $\varepsilon$ increases and as the sample size $n$ grows. Under Scenario~1, the three mechanisms perform comparably overall, with the RL mechanism showing a slight advantage over the Langevin mechanism, and the latter outperforming the EWG mechanism when both $n$ and $\varepsilon$ are small. Under Scenario~2, by contrast, the Langevin mechanism uniformly outperforms the EWG mechanism across all privacy budgets; see the bottom row of Figure~\ref{fig_hyperbolic_EWG}. This agrees with our theory: while both the Langevin and EWG mechanisms depend on the choice of an anchor or footpoint, only the Langevin mechanism benefits from the additional privacy protection induced by the selection step itself. The comparison between the RL and Langevin mechanisms is closer; as shown in the bottom row of Figure~\ref{fig_hyperbolic_RL}, the Langevin mechanism performs better for small $n$ and small $\varepsilon$, and remains competitive, though slightly less accurate, as $n$ and $\varepsilon$ become larger.

\section{Discussions and Future Works}\label{sec:discussion}

The diffusion-based mechanisms developed in this work open several directions for further research at the interface of differential privacy, stochastic analysis, and Riemannian statistics. While our results focus on one-shot releases and finite-sample sensitivity analysis, many natural questions arise when considering sequential mechanisms, long-time diffusion behavior, and asymptotic statistical properties of privatized estimators. In particular, it is important to understand how privacy accounting interacts with the semigroup structure of manifold-valued diffusions and how the injected diffusion noise affects the limiting distribution of geometric estimators. We highlight two directions that appear especially promising: gradual-release mechanisms based on diffusion trajectories, and asymptotic distribution theory for privatized generalized Fr\'echet means.

\subsection{Gradual Release, Stopping Rules, and Random Walks}
A natural next step is to move beyond the one-shot release studied here and develop a gradual-release version of our diffusion mechanisms. The semigroup and Markov structures of Brownian motion and Langevin dynamics suggest that one should be able to couple releases across multiple times, starting from a highly privatized statistic and then revealing progressively less noisy versions along the same diffusion trajectory. In such a framework, the main question is whether the total privacy cost can be controlled primarily by the final released time, or more generally by a stopping time chosen from the observed path, rather than by naively composing all intermediate releases. Establishing such a result would require a stopping-time or ex-post version of RDP accounting for manifold-valued diffusions, and it would be especially interesting to understand whether Harnack-type semigroup arguments can be made time-uniform in this sequential setting.

A closely related issue is the design of intrinsic stopping thresholds. In Euclidean gradual-release mechanisms, one may stop when a target utility level is reached \cite{whitehouse2022brownian}; on manifolds, the stopping rule should respect the geometry and may depend on quantities such as the empirical Fr\'echet functional, a geodesic prediction loss, successive inter-iterate distances, or curvature-adjusted confidence radii. When the threshold is evaluated on private data, one would further need an intrinsic analogue of private thresholding procedures. Since Algorithms~\ref{alg:bm_sampler} and \ref{alg:langevin_sampler} are implemented through intrinsic discretizations, it is also natural to ask whether the gradual-release idea can be analyzed directly at the level of geodesic random walks. Such a discrete-time theory would be practically important, because it could quantify how discretization error, path coupling, and privacy accounting interact in finite-step implementations.

\subsection{Central Limit Theorems for the Private Generalized Fr\'echet Mean}
A second direction is to study the asymptotic distribution of the private generalized Fr\'echet mean under our diffusion mechanisms. Our sensitivity bounds already suggest that the interaction between statistical fluctuation and privacy noise is highly nontrivial and may depend sharply on the exponent $p$. Under a fixed RDP budget, the diffusion time is calibrated through the global sensitivity of $\mu_p$, so the size of the injected noise is of the same order as the sensitivity bound. This suggests that when $\Delta_{\mu_p}=o(n^{-1/2})$, the privacy perturbation may be asymptotically negligible and the private estimator may inherit the same tangent space central limit theorem as the non-private generalized Fr\'echet mean. By contrast, in the regime $p>2$, our sensitivity bounds are of order $n^{-1/(p-1)}$, which points to a possible phase transition around $p=3$: for $2<p<3$ the classical $\sqrt{n}$-scale may remain dominant, for $p=3$ the privacy noise and sampling fluctuation may coexist on the same scale, and for $p>3$ the privacy term may dominate and lead to a different normalization.

Making this rigorous would require combining the geometry of generalized Fr\'echet means with a local analysis of the diffusion mechanisms in normal coordinates. In particular, it would be valuable to understand how curvature, cut locus effects, and possible smeariness phenomena influence the limiting law of the privatized estimator in tangent coordinates around the population Fr\'echet mean. One would also like to determine whether the limit is the same as in the non-private problem, a convolution of the classical limit with an intrinsic diffusion law, or a genuinely new non-Euclidean asymptotic regime. Results of this kind would clarify when diffusion-based privacy is asymptotically negligible, when it changes the effective rate of estimation, and how the geometry of the manifold interacts with privacy in large samples.

\begin{appendices}

\section{Supplemental material}
\par The supplemental materials contains detailed proofs of main theorems in this manuscript.

\end{appendices}
\bibliographystyle{abbrvnat}
\bibliography{aos}
\end{document}